\documentclass{article}

\usepackage{microtype}
\usepackage{graphicx}
\usepackage{subcaption}
\usepackage{booktabs}

\usepackage{hyperref}
\usepackage{xurl}

\usepackage[accepted]{icml2026}
\usepackage{tabularx}
\usepackage[table]{xcolor}
\usepackage{amsmath}
\usepackage{amssymb}
\usepackage{mathtools}
\usepackage{adjustbox}
\usepackage{amsthm}
\usepackage{multirow}
\usepackage[table]{xcolor}
\usepackage[dvipsnames]{xcolor}

\usepackage[capitalize,noabbrev]{cleveref}
\usepackage{placeins}
\usepackage{etoc}

\theoremstyle{plain}
\newtheorem{theorem}{Theorem}[section]
\newtheorem{proposition}[theorem]{Proposition}
\newtheorem{lemma}[theorem]{Lemma}
\newtheorem{corollary}[theorem]{Corollary}
\theoremstyle{definition}

\theoremstyle{remark}
\newtheorem{remark}[theorem]{Remark}

\icmltitlerunning{Mitigating Hallucinations in Large Vision-Language Models via Causal Route Gating}

\begin{document}

\twocolumn[
  \icmltitle{Mitigating Hallucinations in Large Vision-Language Models\\
via Causal Route Gating
}

  \icmlsetsymbol{equal}{*}

  \begin{icmlauthorlist}
    \icmlauthor{Zhe Cheng}{equal,yyy}
    \icmlauthor{Wenyu Chen}{equal,yyy}
    \icmlauthor{Fode Zhang}{yyy}
    \icmlauthor{Dehuan Shen}{comp}

  \end{icmlauthorlist}

  \icmlaffiliation{yyy}{Center of Statistical Research, School of Statistics and Data Science, Southwestern University of Finance and Economics, Chengdu, China.}
  \icmlaffiliation{comp}{Department of Biomedical Engineering, College of Design and Engineering, National University of Singapore, Singapore}

  \icmlcorrespondingauthor{Fode Zhang}{fredzh@swufe.edu.cn}

  \icmlkeywords{Machine Learning, ICML}

  \vskip 0.3in
]

\printAffiliationsAndNotice{\icmlEqualContribution}

\begin{abstract}
Large vision-language models (LVLMs) often hallucinate content that is fluent yet unsupported by the image, limiting their reliability in real-world deployment. 
We show that a key failure mode arises from route competition: even when visual tokens receive attention, the final token decision can be dominated by the textual pathway, causing the decoder to follow linguistic priors over visual evidence. To mitigate this, we propose a training-free, decision-aligned intervention that decomposes each attention head into a visual route and a text route, and estimates their token-level effects using an efficient one-forward/one-gradient approximation. These estimates reveal \emph{route conflict} within heads and identify prior-dominant ones, enabling selective suppression of only the text route while keeping the visual route intact. Across five benchmarks spanning discriminative and generative settings, our method consistently reduces hallucination-related errors across models with limited impact on overall multimodal performance, while incurring a modest inference-time overhead. 

\end{abstract}

\section{Introduction}
LVLMs have become a core interface for visual understanding, enabling systems that can answer questions about images and generate grounded descriptions \cite{yin2024survey,caffagni2024revolution}. 
However, a central obstacle to reliable deployment is hallucination: the model produces fluent, plausible content that is not supported by the image \cite{liu2024survey}. 
Importantly, this is not merely “poor writing” or generic inaccuracy; rather, it often reflects a failure of grounding, where the model fills in missing visual details using language priors \cite{m3id_favero2024multi,wu2025lanp,zhou2025mitigating}. Such prior-driven fabrications directly undermine trustworthiness and can be harmful in safety-critical or decision-support settings \cite{wang2023haelM}.
\begin{figure}[t]
    \centering
    \includegraphics[width=1\linewidth]{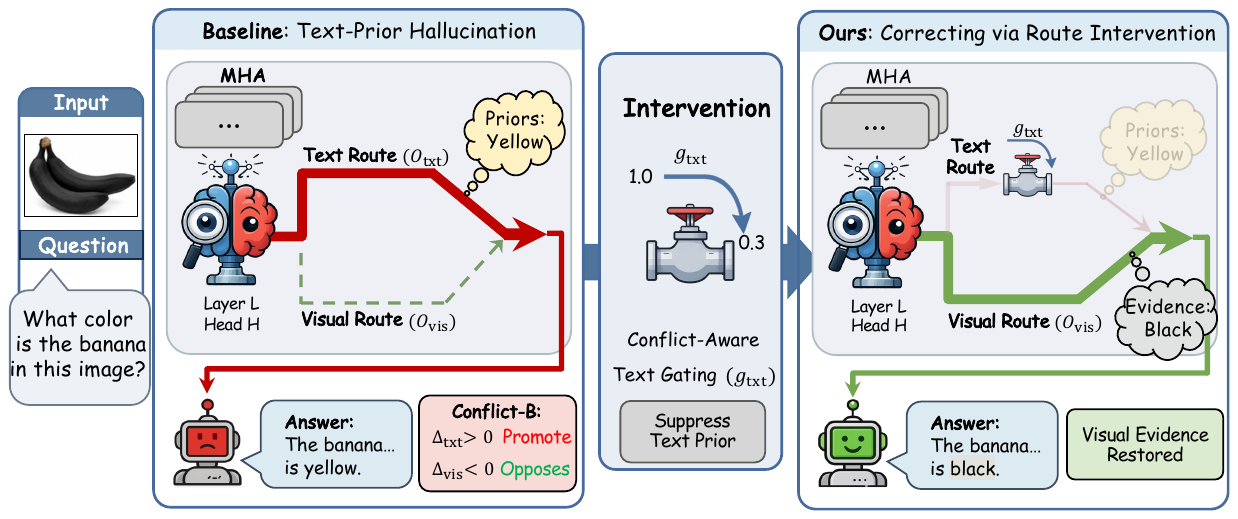}
    \caption{When language priors override visual inputs, the baseline model tends to produce hallucinated predictions (left). To address this, a conflict-aware intervention applies text-route gating to suppress language priors (middle), enabling the model to correctly rely on visual evidence after intervention (right).}
    \label{fig:Conceptual_Overview}
\end{figure}
At a high level, hallucination often reflects a mismatch between visual evidence and language priors: the model may attend to the image, yet the final token choice is still dominated by the textual pathway, favoring what is plausible over what is visible.

Training-free mitigation has become attractive for deployment. Among approaches that act at inference time without additional training, two representative paradigms emerge: (i) output-level decoding control, and (ii) proxy-guided internal intervention. 
The first paradigm modifies generation behavior through decoding-time heuristics or constraints (e.g., Visual Contrastive Decoding
(VCD) \cite{leng2024vcd}, OPERA \cite{huang2024opera}, MaskCD \cite{deng2025maskcd}). 
While effective, these methods primarily act as external policies and provide limited insight into which internal components drive prior-dominant errors.
The second paradigm performs internal intervention by selecting modules using attention-based proxies \cite{chefer2021generic,yuksekgonulattention}, most notably VAR-style measures \cite{jiang2025devils,zheng2025modality,jung2025visual} that quantify the attention share on visual tokens. 
Yet such proxies are inherently correlational: higher visual attention does not necessarily translate to larger causal contribution to the output \cite{jain2019attention,serrano2019attention}, and softmax normalization can spuriously change the visual share by weakening textual competition \cite{vaswani2017attention}. 
Moreover, proxy-based selection is typically paired with coarse head rescaling \cite{jiang2025devils,yu2025causally}, which inevitably suppresses both routes and may harm genuinely grounded reasoning.

To move beyond attention-based proxies, we expose a head-internal \emph{visual route} and \emph{text route}, so that we can intervene on either route at inference time (Fig.~\ref{fig:Conceptual_Overview}).
For the current token \(y^*\), we measure \emph{decision-aligned} route effects by do interventions:
\(\Delta^{\mathrm{vis}}_{l,h}=\ell_{l,h}(1,1)-\ell_{l,h}(0,1)\) and
\(\Delta^{\mathrm{txt}}_{l,h}=\ell_{l,h}(1,1)-\ell_{l,h}(1,0)\).
These effects directly reveal \emph{route conflict}: a head is conflicting when \(\mathrm{sign}(\Delta^{\mathrm{vis}}_{l,h})\neq \mathrm{sign}(\Delta^{\mathrm{txt}}_{l,h})\), with Conflict-A \((+,-)\) and Conflict-B \((-,+)\).
Within conflicting heads, we identify prior-dominant ones by
\(VRI_{l,h}=\frac{|\Delta^{\mathrm{vis}}_{l,h}|}{|\Delta^{\mathrm{vis}}_{l,h}|+|\Delta^{\mathrm{txt}}_{l,h}|+\epsilon}\) (smaller \(VRI_{l,h}\) indicates more text-driven behavior).
At decoding time, we estimate \(\Delta^{\mathrm{vis}}_{l,h}\) and \(\Delta^{\mathrm{txt}}_{l,h}\) with a one-forward/one-gradient approximation, select Top-\(k\) smallest \(VRI_{l,h}\) \emph{within the conflict sets}, and suppress only their text routes using a smooth rank-based schedule, with stronger suppression for Conflict-B than Conflict-A (visual routes remain intact).
We evaluate our method on five benchmarks spanning discriminative and generative settings.
Results indicate that the proposed method can reduce hallucination-related errors across models, with limited impact on overall multimodal performance.

\noindent\textbf{Contributions.}
(1) We introduce route-level, decision-aligned causal effects for attention heads and a corresponding Visual Reliance Index.
(2) We derive an efficient first-order estimator that enables per-step head selection during decoding.
(3) We propose conflict-aware text-route gating that targets prior-dominant, decision-conflicting heads without suppressing visual routes.
(4) We theoretically analyze VAR-style proxies, showing attention mass can be misaligned with decision-relevant visual grounding.

\section{Related Work}
\noindent\textbf{Hallucination in LVLMs.}
Hallucination in LVLMs has received growing attention because it undermines visual grounding~\citep{li2023pope,wang2023amber,fu2023mme,ye2023mplug}.
Unlike textual LLMs, LVLMs exhibit prominent object hallucination~\citep{rohrbach2018object}, where the model mentions objects or attributes not supported by the image.
Mitigation methods span training-based alignment/fine-tuning~\citep{liu2023mitigating,gunjal2024detecting,hu2023ciem} and training-free inference-time control~\citep{yin2024woodpecker,tong2024eyes,halc,zhang2025selfcorrecting,tang2025seeing}.
On the training-free side, methods broadly fall into output-level decoding control and intervention-style approaches that act on internal computation at inference time~\cite{sarkar-etal-2025-mitigating}.
For example, PAI~\citep{liu2024pai} encourages stronger reliance on visual evidence, and latent-space steering methods such as VTI~\citep{liu2025vti} modify hidden representations to suppress hallucination tendencies.
Our work is most closely related to these training-free intervention paradigms, but differs in where and how we act: we compute token-level \emph{within-head} route effects and apply route-specific gating at decoding time to reduce language-prior influence.

\noindent\textbf{Causal Interventions for Mechanistic Attribution.}
Mechanistic interpretability~\cite{elhage2021mathematical,sharkey2025open} increasingly emphasizes \emph{interventional} attribution, where one perturbs internal states and measures the induced behavioral change~\cite{geiger2021causalabstractions,geiger2025causal}.
Prior work on pruning and ablations suggests substantial redundancy in multi-head attention, but indiscriminate removals are often a blunt instrument and do not directly localize causal pathways~\citep{michel2019sixteen,voita2019analyzing}.
To pinpoint causal structure, Causal Mediation Analysis (CMA) ~\cite{vig2020causal,yang2025mitigating} and activation patching (a.k.a.\ causal tracing) \cite{zhangfred} perform \emph{targeted} interventions under controlled inputs to estimate component effects~\citep{vig2020cma,geiger2021causalabstractions,meng2022gpt,yang2025mitigating}, and path-level methods \cite{qian2025interveneallpaths} further attribute behaviors to specific information-flow paths~\citep{goldowskydill2023pathpatching}.
Since head contributions can be highly interactive and non-modular, Causal Head Gating (CHG) learns differentiable head gates and searches for sufficient head sets directly from data, reducing reliance on hand-crafted prompt pairs~\citep{nam2025causal}.
Building on this interventional line, we extend head-level interventions to \emph{within-head} routes: unlike CHG’s learned head gates, we estimate route-level effects to drive decoding-time route-specific gating that reduces language-prior dominance and maintains visual grounding.

\section{Causal Route Effects}
\label{sec:cre}
\begin{figure*}
    \centering
    \includegraphics[width=0.95\linewidth]{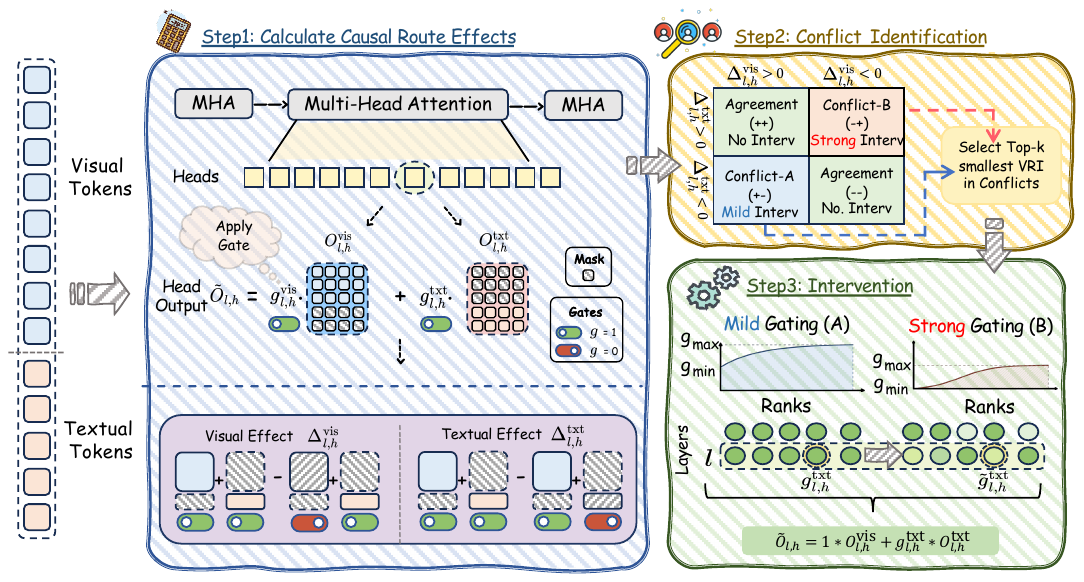}
    \caption{Overview of our CRG framework. Step 1 computes modal causal effects for each attention head by separating visual and textual components and measuring their contributions. Step 2 identifies conflicts by comparing the signs of visual and textual effects, distinguishing Agreement vs. Conflict (A/B), and selects the top-$k$ conflicting heads for intervention. Step 3 applies head gating, where Conflict-A heads receive mild gating and Conflict-B heads receive strong gating, reducing language-prior dominance while keeping image-grounded evidence.}
    \label{fig:main_fig}
\end{figure*}
Figure~\ref{fig:main_fig} provides an overview; we now detail Step~1 (Calculate CRE) and later introduce conflict-aware gating.
\subsection{Exact, Head-Internal Decomposition}
We consider a pretrained LVLM parameterized by \(f_{\theta}\).
At a given decoding step, the model receives a prefix of length \(L=M+T\), consisting of \(M\) visual tokens indexed by
\(I_{\mathrm{vis}}=\{1,\dots,M\}\) and \(T\) textual tokens indexed by
\(I_{\mathrm{txt}}=\{M+1,\dots,L\}\).

Many modern LVLMs employ Transformer-based language decoders with multi-head self-attention (MHA).
At layer \(l\), for attention head \(h\in\{1,\dots,H\}\), we use learned projections \(W^{Q}_{l,h}, W^{K}_{l,h}, W^{V}_{l,h} \in \mathbb{R}^{d_{\mathrm{model}}\times d_h}\), forming \(Q_{l,h},K_{l,h},V_{l,h}\) in the standard way.
The causal attention weights are then computed by a masked softmax:
\[
\alpha_{l,h}
=\mathrm{softmax}\!\left(\frac{Q_{l,h}K_{l,h}^{\top}}{\sqrt{d_h}} + M_{\mathrm{causal}}\right),
\]
where \(M_{\mathrm{causal}}\in\mathbb{R}^{L\times L}\) assigns \(-\infty\) to future positions to enforce autoregressive attention.
Finally, the head output is
\[
O_{l,h}=\alpha_{l,h}V_{l,h}.
\]
To isolate modality-specific contributions, we leverage the visual/textual index sets \(I_{\mathrm{vis}}\) and \(I_{\mathrm{txt}}\) introduced above
and define diagonal selection matrices \(S_{\mathrm{vis}}, S_{\mathrm{txt}}\in\{0,1\}^{L\times L}\) as
\[
\begin{aligned}
S_{\mathrm{vis}}=\mathrm{diag}\big(\mathbf{1}[1\in I_{\mathrm{vis}}],\dots,\mathbf{1}[L\in I_{\mathrm{vis}}]\big),\\
S_{\mathrm{txt}}=\mathrm{diag}\big(\mathbf{1}[1\in I_{\mathrm{txt}}],\dots,\mathbf{1}[L\in I_{\mathrm{txt}}]\big).
\end{aligned}
\]
and they satisfy
\[
S_{\mathrm{vis}} + S_{\mathrm{txt}} = I_{L},
\qquad
S_{\mathrm{vis}} S_{\mathrm{txt}} = 0,
\]
so the two masks are complementary and non-overlapping.

Applying these masks to the value matrix yields two route-specific value blocks,
and consequently two route-specific head outputs:
\[
O^{\mathrm{vis}}_{l,h}=\alpha_{l,h}\bigl(S_{\mathrm{vis}} V_{l,h}\bigr),\qquad
O^{\mathrm{txt}}_{l,h}=\alpha_{l,h}\bigl(S_{\mathrm{txt}} V_{l,h}\bigr).
\]
Since \(V_{l,h}=(S_{\mathrm{vis}}+S_{\mathrm{txt}})V_{l,h}\), the head output splits exactly into visual and textual routes,
\[
O_{l,h}
= \alpha_{l,h}V_{l,h}
= O^{\mathrm{vis}}_{l,h} + O^{\mathrm{txt}}_{l,h}.
\]

\noindent\textbf{Propagation through the MHA output.}
After computing all head outputs, the multi-head attention module concatenates them and applies the output projection:
\[
\begin{aligned}
Y^{\mathrm{MHA}}_{l}
&= \mathrm{Concat}_{h=1}^{H}(O_{l,h})\, W^{O}_{l},\\
W^{O}_{l} &\in \mathbb{R}^{H d_h \times d_{\mathrm{model}}}.
\end{aligned}
\]
Because concatenation and the linear map \(W^O_l\) are both linear operations, the same decomposition carries through the full MHA block:
\[
\begin{aligned}
Y^{\mathrm{MHA}}_{l}
& = \mathrm{Concat}_{h}\!\bigl(O^{\mathrm{vis}}_{l,h}\bigr)\, W^{O}_{l}
\;+\;
\mathrm{Concat}_{h}\!\bigl(O^{\mathrm{txt}}_{l,h}\bigr)\, W^{O}_{l}
\;\\
&\triangleq\;
Y^{\mathrm{vis}}_{l} + Y^{\mathrm{txt}}_{l}.
\end{aligned}
\]
This gives an exact, layer-wise split of the MHA output into a visual-route component and a text-route component, which we will later exploit for route-specific interventions.

\subsection{Interventional Gates and Causal Estimands}

For each layer $l$ and head $h$, after computing the joint-softmax attention and forming the exact split
$O^{\mathrm{vis}}_{l,h}$ and $O^{\mathrm{txt}}_{l,h}$, we attach head-internal, route-specific scalar gates
$g^{\mathrm{vis}}_{l,h}, g^{\mathrm{txt}}_{l,h} \in \mathbb{R}_{\ge 0}$ \emph{before} the output projection $W^{O}_{l}$~\cite{qiu2025gated}. The gated head output is
\[
\tilde{O}_{l,h}(g^{\mathrm{vis}}_{l,h},g^{\mathrm{txt}}_{l,h})
  = g^{\mathrm{vis}}_{l,h}\, O^{\mathrm{vis}}_{l,h} + g^{\mathrm{txt}}_{l,h}\, O^{\mathrm{txt}}_{l,h}.
\]
To quantify the effect of each route, we perform single-head interventions: we only modify the two route gates of one head \((l,h)\) and keep the rest of the network unchanged.
As a reference, the baseline configuration sets all gates to one,
i.e., \(g^{\mathrm{vis}}_{l',h'}=g^{\mathrm{txt}}_{l',h'}=1\) for every head \((l',h')\).

For the selected head \((l,h)\), we then consider binary gate values \(g_{\mathrm{vis}},g_{\mathrm{txt}}\in\{0,1\}\):
setting \(g_{\mathrm{vis}}=0\) turns off the visual route of this head, and setting \(g_{\mathrm{txt}}=0\) turns off its text route.
All other heads remain at the baseline \((1,1)\).

Let \(f^{(l,h)}_{g_{\mathrm{vis}}, g_{\mathrm{txt}}}\) denote the intervened network in which head \((l,h)\) uses gates \((g_{\mathrm{vis}}, g_{\mathrm{txt}})\).
Given a task-specific scalar decision score \(\ell(\cdot)\), we define
\[
\ell_{l,h}(g_{\mathrm{vis}}, g_{\mathrm{txt}}) := \ell\!\left(f^{(l,h)}_{g_{\mathrm{vis}}, g_{\mathrm{txt}}}\right).
\]
Here \(\ell_{l,h}(g_{\mathrm{vis}}, g_{\mathrm{txt}})\) is the decision score obtained under the single-head intervention on \((l,h)\) with gates \((g_{\mathrm{vis}}, g_{\mathrm{txt}})\).
We define the \emph{visual} and \emph{text} Causal Route Effects (CRE)  for head \((l,h)\) by the following do-effects:
\begin{equation}
\begin{aligned}
\Delta^{\mathrm{vis}}_{l,h} &:= \ell_{l,h}(1,1) - \ell_{l,h}(0,1),\\
\Delta^{\mathrm{txt}}_{l,h} &:= \ell_{l,h}(1,1) - \ell_{l,h}(1,0).
\end{aligned}
\label{eq:cre}
\end{equation}
Intuitively, \(\Delta^{\mathrm{vis}}_{l,h}\) measures how much the decision score changes when we remove only the visual route of this head (while keeping its text route on), and \(\Delta^{\mathrm{txt}}_{l,h}\) is defined analogously.
A positive \(\Delta\) indicates that the corresponding route supports the decision (increases \(\ell\)), whereas a negative \(\Delta\) indicates an inhibiting effect.

The choice of \(\ell\) is task-dependent: it is a scalar score that summarizes the model's current decision, so that \(\Delta^{\mathrm{vis}}_{l,h}\) and \(\Delta^{\mathrm{txt}}_{l,h}\) quantify how the visual/text routes of head \((l,h)\) support or oppose that decision.
For generative benchmarks, we set \(\ell=\log p(y^*)\), the log-probability of the current token \(y^*\), to measure how each route contributes to producing the desired output.
For binary QA, we use the Yes/No margin \(\ell=\log p(\mathrm{Yes})-\log p(\mathrm{No})\), so that the sign of \(\Delta\) directly indicates whether a route pushes the prediction toward \(\mathrm{Yes}\) or toward \(\mathrm{No}\).
All interventions are \emph{local}: we only toggle the selected gates while keeping the rest of the computation fixed. Details of \(\ell\) are provided in Appendix~\ref{app:ell}.

To summarize whether a head's influence on the decision score \(\ell\) is dominated by visual evidence or by textual context, we further define a normalized \emph{Visual Reliance Index} (VRI):
\begin{equation}
\mathrm{VRI}_{l,h}
= \frac{\big|\Delta^{\mathrm{vis}}_{l,h}\big|}{\big|\Delta^{\mathrm{vis}}_{l,h}\big|+\big|\Delta^{\mathrm{txt}}_{l,h}\big|+\varepsilon}.
\label{eq:vri}
\end{equation}
where \(\varepsilon>0\) is a small stability constant for numerical stability (we use \(\varepsilon=10^{-8}\) in experiments).
We use \(\mathrm{VRI}_{l,h}\) to rank heads for inference-time gating, and analogously define layer- or model-level indices by aggregation.

\subsection{One-Forward Estimation}
We estimate the binary do-effects using a \emph{first-order} approximation that requires only \emph{one forward} pass and a single \emph{autograd gradient query} per decoded token.
Specifically, we run a standard decode forward with all gates set to \(1\), caching the route-specific head outputs \(O^{\mathrm{vis}}_{l,h}\) and \(O^{\mathrm{txt}}_{l,h}\) (with the KV-cache intact).
We then obtain the sensitivity \(G_{l,h}=\partial \ell/\partial \tilde{O}_{l,h}\) by directly querying reverse-mode autodiff (e.g., via \texttt{torch.autograd.grad}) at the pre-\(W^{O}_{l}\) tensor.
This yields directional-derivative estimators along the two gate axes:

\medskip
\begin{proposition}[Directional-derivative estimator]
At \((g_{\mathrm{vis}},g_{\mathrm{txt}})=(1,1)\),
\begin{equation}
\widehat{\Delta}^{\mathrm{vis}}_{l,h}
=\Big\langle G_{l,h},\,O^{\mathrm{vis}}_{l,h}\Big\rangle,
\qquad
\widehat{\Delta}^{\mathrm{txt}}_{l,h}
=\Big\langle G_{l,h},\,O^{\mathrm{txt}}_{l,h}\Big\rangle,
\label{eq:estimate}
\end{equation}
which coincide with the first-order terms of the exact do-differences \(\Delta^{\mathrm{vis}}_{l,h}\) and \(\Delta^{\mathrm{txt}}_{l,h}\).
(Proof in Appendix~\ref{app:proof}.)  
\label{pro:main_text}
\end{proposition}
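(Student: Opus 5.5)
The plan is to treat the decision score under a single-head intervention as a smooth function of the two gates and Taylor-expand it at the baseline $(1,1)$. Define $\phi_{l,h}(g_{\mathrm{vis}},g_{\mathrm{txt}}) := \ell\big(f^{(l,h)}_{g_{\mathrm{vis}},g_{\mathrm{txt}}}\big)$ for real-valued gates. Because the gates enter only through $\tilde{O}_{l,h}(g_{\mathrm{vis}},g_{\mathrm{txt}}) = g_{\mathrm{vis}} O^{\mathrm{vis}}_{l,h} + g_{\mathrm{txt}} O^{\mathrm{txt}}_{l,h}$, we can write $\phi_{l,h} = F_{l,h}\circ \tilde{O}_{l,h}$. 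Here $F_{l,h}$ is the map from the pre-$W^O_l$ tensor of head $(l,h)$ to the score, with every other head fixed at the baseline. Note that $O^{\mathrm{vis}}_{l,h}$ and $O^{\mathrm{txt}}_{l,h}$ themselves do not depend on the gates of $(l,h)$. They are computed from $\alpha_{l,h}$ and $V_{l,h}$ upstream of the gate, and the causal mask means the gate at the current position cannot feed back into earlier cached keys and values at the same layer.

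$F_{l,h}$ is a composition of linear maps, softmax, normalization, MLP nonlinearities and log-softmax, so it is differentiable under the standard smoothness of these components. I will assume $C^2$ near the segment joining $(1,1)$ to the two intervened points. The chain rule then gives
\[
\partial_{g_{\mathrm{vis}}}\phi_{l,h}(1,1) = \Big\langle \nabla F_{l,h}\big(\tilde O_{l,h}(1,1)\big),\, \partial_{g_{\mathrm{vis}}}\tilde O_{l,h}\Big\rangle = \big\langle G_{l,h}, O^{\mathrm{vis}}_{l,h}\big\rangle .
\]
This uses $\tilde O_{l,h}(1,1)=O_{l,h}$ and the definition $G_{l,h}=\partial\ell/\partial\tilde O_{l,h}$ evaluated at baseline; the inner product is the Frobenius one. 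The same computation gives $\partial_{g_{\mathrm{txt}}}\phi_{l,h}(1,1)=\langle G_{l,h},O^{\mathrm{txt}}_{l,h}\rangle$.

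Next I relate these derivatives to the do-differences. Since $\Delta^{\mathrm{vis}}_{l,h} = \phi_{l,h}(1,1)-\phi_{l,h}(0,1)$, expanding along the direction $e_{\mathrm{vis}}$ with step $-1$ gives
\[
\Delta^{\mathrm{vis}}_{l,h} = \partial_{g_{\mathrm{vis}}}\phi_{l,h}(1,1) - \tfrac12\,\partial^2_{g_{\mathrm{vis}}}\phi_{l,h}(\xi,1)
\]
for some $\xi\in(0,1)$, by the mean-value form of the remainder. The first-order term is therefore exactly $\widehat{\Delta}^{\mathrm{vis}}_{l,h}$, and the text route is analogous. Equivalently, the remainder is $-\tfrac12 \,\mathrm{vec}(O^{\mathrm{vis}}_{l,h})^{\top}\nabla^2F_{l,h}\,\mathrm{vec}(O^{\mathrm{vis}}_{l,h})$ at an intermediate point. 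This shows the estimator is exact whenever $F_{l,h}$ is affine along that direction.

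The main obstacle is justifying that the gate derivative passes only through the explicit factor in $\tilde O_{l,h}$. One must argue that autograd's $G_{l,h}$, queried at the pre-$W^O_l$ tensor with the KV-cache treated as constant, is precisely the derivative of the single-head intervened network $f^{(l,h)}$ at the current token. I would handle this by fixing the intervention semantics to the current decoding position: the cached route outputs are treated as constants, matching the paper's description of locality. After that, the claim is pure chain rule plus Taylor's theorem. The smoothness assumption on $F_{l,h}$ I would state explicitly, noting that it holds for standard activations such as SiLU/GELU, softmax and RMSNorm away from a zero norm.
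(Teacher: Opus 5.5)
Your proposal is correct and follows essentially the same argument as the paper. Holding $O^{\mathrm{vis}}_{l,h},O^{\mathrm{txt}}_{l,h}$ fixed, the chain rule through the affine gate map $\tilde O_{l,h}=g_{\mathrm{vis}}O^{\mathrm{vis}}_{l,h}+g_{\mathrm{txt}}O^{\mathrm{txt}}_{l,h}$ gives $\partial_{g_{\mathrm{vis}}}\ell_{l,h}(1,1)=\langle G_{l,h},O^{\mathrm{vis}}_{l,h}\rangle$, and likewise for the text route.

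The only real difference is how you compare this to the exact do-difference. The paper writes $\Delta^{\mathrm{vis}}_{l,h}=\int_0^1\partial_{g_{\mathrm{vis}}}\ell_{l,h}(\tau,1)\,d\tau$ and takes the right-endpoint value, which needs only $C^1$ plus a Lipschitz $G_{l,h}$ to get the same $\tfrac{L}{2}\|O^{\mathrm{vis}}_{l,h}\|_F$ error bound (Proposition~\ref{prop:first-order-bound}). Your Lagrange-remainder form gives that bound too but assumes $C^2$.

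A minor point: the route outputs are independent of their own gate simply because they are computed upstream of it, so your causal-mask remark is not needed.
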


\begin{remark}[Coarse estimates for downstream intervention]
Since $\widehat{\Delta}_{l,h}$ is first-order and local, we treat it as a coarse signal for the intervention in Sec.~\ref{sec:conflict_gating},
which mainly depends on sign/ranking rather than precise effect magnitudes.
This aligns with the broader insight that allocation-style objectives can often be optimized with coarse effect estimates
\cite{casacuberta2026goodallocationsbadestimates}.
\end{remark}

To justify our first-order estimator, we \textbf{bound} its deviation from the exact do-effects under a Lipschitz condition on \(G_{l,h}\) (Appendix~\ref{app:first_error_bound}).
We also validate it empirically by comparing \(\hat{\Delta}\) with exact hard-intervention effects, with an emphasis on \textbf{sign consistency} that underlies our conflict taxonomy (Appendix~\ref{app:sign_rel}, \ref{app:exact_validation}).
We then compute the plug-in VRI as
\[
\widehat{\mathrm{VRI}}_{l,h}
=\frac{\big|\widehat{\Delta}^{\mathrm{vis}}_{l,h}\big|}{\big|\widehat{\Delta}^{\mathrm{vis}}_{l,h}\big|+\big|\widehat{\Delta}^{\mathrm{txt}}_{l,h}\big|+\varepsilon}.
\]

\section{Conflict-Aware Text-Route Intervention}
\label{sec:conflict_gating}

Building on \((\hat{\Delta}^{\mathrm{vis}}_{l,h},\hat{\Delta}^{\mathrm{txt}}_{l,h})\), we derive a lightweight, conflict-aware rule that selectively suppresses unreliable text contributions while preserving visual evidence (Fig.~\ref{fig:main_fig}, Steps~2--3).

We categorize heads by the sign pattern of \((\hat{\Delta}^{\mathrm{vis}}_{l,h},\hat{\Delta}^{\mathrm{txt}}_{l,h})\) into four regimes
(Table~\ref{tab:sign_taxonomy}).
When the signs agree (\(+\!+\) or \(-\!-\)), both routes move \(\ell\) in the same direction, so we leave these heads unchanged to avoid unnecessary perturbations.
We intervene only under modality conflicts (\(+\!-\) or \(-\!+\)), and only through the text route \(g^{\mathrm{txt}}_{l,h}\) (keeping \(g^{\mathrm{vis}}_{l,h}=1\)).

\begin{table}[h]
\centering
\small
\setlength{\tabcolsep}{4pt}
\renewcommand{\arraystretch}{1.15}
\caption{Sign taxonomy of route gains, where \(\hat{\Delta}>0\) increases the decision score \(\ell\).
We intervene only under conflicts (A/B) and leave agreement cases unchanged.}
\begin{tabular}{c|cc}
\toprule
 & \(\hat{\Delta}^{\mathrm{vis}} > 0\) & \(\hat{\Delta}^{\mathrm{vis}} < 0\) \\
\midrule
\(\hat{\Delta}^{\mathrm{txt}} > 0\)
& \textbf{Agreement} \((+, \!+)\) 
& \textbf{Conflict-B} \((-, \!+)\) \\
\(\hat{\Delta}^{\mathrm{txt}} < 0\)
& \textbf{Conflict-A} \((+, \!-)\)
& \textbf{Agreement} \((-, \!-)\) \\
\bottomrule
\end{tabular}
\label{tab:sign_taxonomy}
\end{table}

\noindent\textbf{Conflict-A (\(+, \!-\)): visual increases \(\ell\) while text decreases \(\ell\).}
Here the visual route promotes the current decision score, but the text route acts as an obstruction.
This pattern is consistent with text-side noise or local linguistic mismatch rather than a reliable correction to visual evidence.
Our goal is \emph{stabilization}: we mildly suppress the text route on a small subset of heads so that visual evidence can dominate without distorting generation.

\noindent\textbf{Conflict-B (\(-, \!+\)): visual decreases \(\ell\) while text increases \(\ell\) (hallucination hallmark).}
Here visual evidence discourages the current decision, yet the text route actively promotes it.
This corresponds to the regime where language priors override vision.
Our goal is \emph{prior cutting}: we strongly suppress the text route (possibly near-zero) on the most prior-dominant heads, so that visual counter-evidence can be reflected in the logits/decision score.

\subsection{VRI-Based Intervention}
\label{sec:rank_range}

\begin{algorithm}[!tbp]
\caption{Conflict-Aware Text-Route Intervention}
\label{alg:conflict_rank_range}
\begin{algorithmic}[1]
\REQUIRE Per-head estimates \(\hat{\Delta}^{\mathrm{vis}}_{l,h},\hat{\Delta}^{\mathrm{txt}}_{l,h}\); VRI scores \(v_{l,h}\); top-\(k\) values \(k\); gate ranges \((g_{\min}^A,g_{\max}^A)\), \((g_{\min}^B,g_{\max}^B)\); \(\gamma,\epsilon\).
\ENSURE Text gates \(\{g^{\mathrm{txt}}_{l,h}\}\) (default \(1\)).
\STATE Build conflict sets \(\mathcal{H}_A,\mathcal{H}_B\) in Eq.~\eqref{eq:hahb}.
\FOR{\(\mathcal{H}\in\{\mathcal{H}_A,\mathcal{H}_B\}\)}
\STATE Select \(\mathcal{S}=\mathrm{TopKSmallest}(\{v_{l,h}\},k)\) within \(\mathcal{H}\).
\STATE Sort \(\mathcal{S}\) by \(v_{l,h}\) ascending; assign rank \(i\).
\STATE Set \(s_i=i/(|\mathcal{S}|-1)\) (if \(|\mathcal{S}|>1\)); else set \(g^{\mathrm{txt}}=g_{\min}\).
\STATE Assign \(g^{\mathrm{txt}}_{(i)}=g_{\min}+(g_{\max}-g_{\min})\cdot \mathrm{clip}\!\left(s_i^{\gamma},\, \epsilon,\, 1-\epsilon\right)\).
\STATE Update the corresponding heads' \(g^{\mathrm{txt}}_{l,h}\leftarrow g^{\mathrm{txt}}_{(i)}\).
\ENDFOR
\STATE \textbf{return} \(\{g^{\mathrm{txt}}_{l,h}\}\).
\end{algorithmic}
\end{algorithm}

Algorithm~\ref{alg:conflict_rank_range} summarizes the procedure.

\noindent\textbf{Selecting the most relevant heads using VRI.}
Recall the two modality-conflict regimes:
Conflict-A (\(+, \!-\)) and Conflict-B (\(-, \!+\)).
We denote their head sets by
\begin{equation}
\begin{aligned}
\mathcal{H}_A &= \{(l,h): \hat{\Delta}^{\mathrm{vis}}_{l,h}>0,\hat{\Delta}^{\mathrm{txt}}_{l,h}<0\},\\
\mathcal{H}_B &= \{(l,h): \hat{\Delta}^{\mathrm{vis}}_{l,h}<0,\hat{\Delta}^{\mathrm{txt}}_{l,h}>0\}.
\end{aligned}
\label{eq:hahb}
\end{equation}
Within each conflict set \(\mathcal{H}\in\{\mathcal{H}_A,\mathcal{H}_B\}\), we assign each head a scalar score \(v_{l,h}=\mathrm{VRI}_{l,h}\).
We then select a small subset \(\mathcal{S}\subseteq\mathcal{H}\) of size \(k\) by taking the heads with the smallest VRI values:
\(\mathcal{S}=\mathrm{TopKSmallest}(\{v_{l,h}:(l,h)\in\mathcal{H}\},k)\).
This concentrates the intervention on a limited number of heads, improving controllability and reducing unintended side effects.
More broadly, this top-$k$ selection can be viewed as a budgeted allocation step, where coarse effect-derived scores may suffice for
near-optimal allocation value \cite{casacuberta2026goodallocationsbadestimates}.

\noindent\textbf{Rank-range gate schedule.}
Given the selected heads $\mathcal{S}$ with $|\mathcal{S}| = n$, we sort them by VRI in ascending order and assign rank $i \in \{0, \dots, n-1\}$.
For $n>1$, we define $s_i = i/(n-1)$ and map it to a gate value:
\(
g^{\mathrm{txt}}_{(i)}
=
g_{\min}
+
\left(g_{\max} - g_{\min}\right)
\cdot
\mathrm{clip}\!\left(s_i^{\gamma},\, \epsilon,\, 1-\epsilon\right),
\)
where $\gamma>0$ controls the nonlinearity of the rank-to-gate mapping and $\epsilon>0$ avoids extreme values.
Only heads in $\mathcal{S}$ receive scheduled gates, while all other heads keep $g^{\mathrm{txt}}_{l,h}=1$.
This schedule ensures a smooth and monotonic suppression strength across the selected heads instead of a hard threshold.

\begin{figure}
    \centering
    \includegraphics[width=0.9\linewidth]{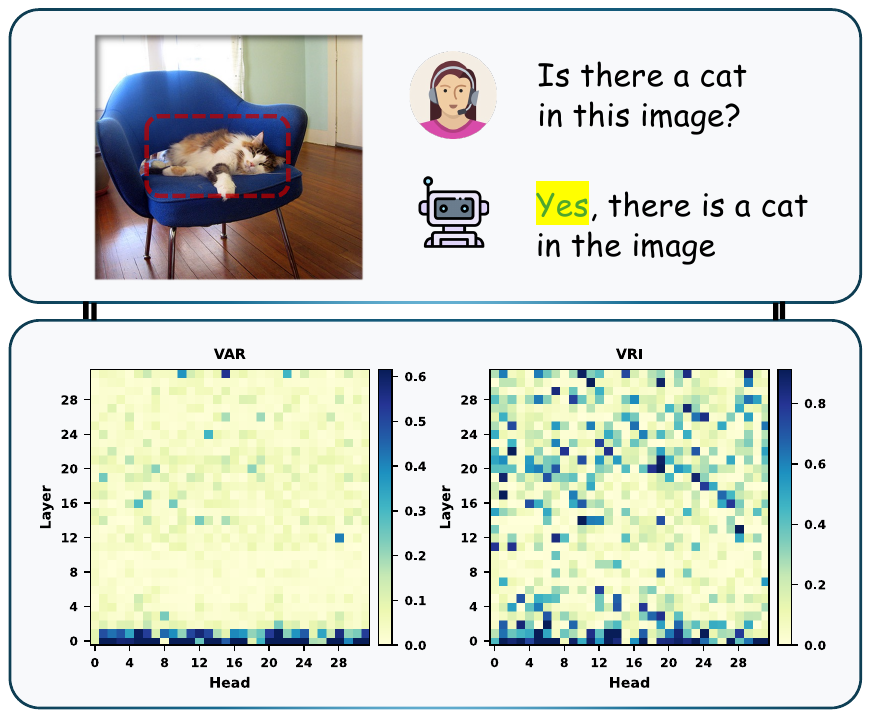}
    \caption{Per-layer, per-head VAR (left) and VRI (right) for the generated token ``Yes'' from LLaVA-1.5-7B. VAR reflects visual attention allocation, while VRI reflects relative visual reliance computed from decision-aligned route effects.}
    \label{fig:VARvsVRI}
\end{figure}

\noindent\textbf{Conflict-specific gate ranges.}
We use different gate ranges for the two conflicts:
(i) Conflict-A applies \emph{mild} text suppression with \((g_{\min}^A,g_{\max}^A)=(0.5,1.0)\), treating the text route as potentially noisy but not adversarial.
(ii) Conflict-B applies \emph{strong} text suppression with \((g_{\min}^B,g_{\max}^B)=(0,0.5)\), aiming to cut the prior-supporting pathway when language priors promote \(y^*\) despite visual opposition.
In both cases, \(g^{\mathrm{vis}}_{l,h}\) remains \(1\).
We keep these ranges fixed across models as a simple mild/strong semantic split, while model- and task-level adaptation comes from VRI-based head selection and within-range rank ordering.

\begin{table*}[!t]
\centering
\small
\setlength{\tabcolsep}{5pt}
\renewcommand{\arraystretch}{1.05}
\caption{Main results on POPE tasks. The best performances are bolded.}
\begin{tabular}{llcccccc}
\toprule
\multirow{2}{*}{Setting} & \multirow{2}{*}{Method}
& \multicolumn{2}{c}{LLaVA-1.5-7B}
& \multicolumn{2}{c}{Qwen-VL-Chat}
& \multicolumn{2}{c}{Qwen2.5-VL-7B-Instruct} \\
\cmidrule(lr){3-4}\cmidrule(lr){5-6}\cmidrule(lr){7-8}
& & Accuracy\(\uparrow\) & F1-Score\(\uparrow\)
& Accuracy\(\uparrow\) & F1-Score\(\uparrow\)
& Accuracy\(\uparrow\) & F1-Score\(\uparrow\) \\
\midrule

\rowcolor{gray!15}\multirow{6}{*}{\cellcolor{white}Random} & Regular & 83.29 & 81.33 & 84.63 & 82.61 & 84.52 & 84.62 \\
& VCD   & 87.73 & 87.16 & 86.93 & 85.46 & 87.82 & 88.23 \\
& OPERA & 89.20 & 88.81 & 85.71 & 84.64 & 89.72 & 90.02 \\
& PAI   & 86.33 & 84.56 & 85.38 & 85.54 & 89.54 & 88.72 \\
& VTI   & 89.50 & 88.89 & 86.73 & 85.59 & 90.43 & \textbf{89.44} \\
& \textbf{CRG(ours)} & \textbf{90.30} {\textcolor{ForestGreen}{(+7.01)}} & \textbf{89.51} {\textcolor{ForestGreen}{(+8.18)}}
& \textbf{89.46} {\textcolor{ForestGreen}{(+4.83)}} & \textbf{88.33} {\textcolor{ForestGreen}{(+5.72)}}
& \textbf{91.45} {\textcolor{ForestGreen}{(+6.93)}} & 89.23 {\textcolor{ForestGreen}{(+4.61)}} \\

\midrule

\rowcolor{gray!15}\multirow{6}{*}{\cellcolor{white}Popular}& Regular & 81.88 & 80.06 & 83.63 & 81.53 & 84.67 & 85.13 \\
& VCD   & 85.38 & 85.06 & 85.17 & 83.68 & 86.89 & 87.35 \\
& OPERA & 86.64 & 86.62 & 84.82 & 83.99 & 87.57 & 88.12 \\
& PAI   & 85.33 & 83.62 & 84.20 & 83.10 & 88.14 & 87.32 \\
& VTI   & 87.36 & \textbf{86.69} & 85.67 & 84.48 & 89.23 & 86.23 \\
& \textbf{CRG(ours)}
& \textbf{88.40} {\textcolor{ForestGreen}{(+6.52)}}
& 86.54 {\textcolor{ForestGreen}{(+6.48)}}
& \textbf{87.63} {\textcolor{ForestGreen}{(+4.00)}}
& \textbf{86.55} {\textcolor{ForestGreen}{(+5.02)}}
& \textbf{90.73} {\textcolor{ForestGreen}{(+6.06)}}
& \textbf{89.34} {\textcolor{ForestGreen}{(+4.21)}} \\
\midrule

\rowcolor{gray!15}\multirow{6}{*}{\cellcolor{white}Adversarial}& Regular & 78.96 & 77.57 & 81.03 & 79.30 & 82.79 & 83.15 \\
& VCD   & 80.88 & 81.33 & 83.10 & 82.04 & 84.78 & 84.63 \\
& OPERA & 81.24 & 81.38 & 82.67 & 79.89 & 84.93 & 85.17 \\
& PAI   & 83.17 & 81.67 & 82.19 & 82.06 & 85.12 & 84.77 \\
& VTI   & 82.57 & 82.11 & 83.13 & 82.16 & 85.78 & 85.14 \\
& \textbf{CRG(ours)}
& \textbf{84.43} {\textcolor{ForestGreen}{(+5.47)}}
& \textbf{83.77} {\textcolor{ForestGreen}{(+6.20)}}
& \textbf{84.70} {\textcolor{ForestGreen}{(+3.67)}}
& \textbf{83.78} {\textcolor{ForestGreen}{(+4.48)}}
& \textbf{86.98} {\textcolor{ForestGreen}{(+4.19)}}
& \textbf{87.07} {\textcolor{ForestGreen}{(+3.92)}} \\
\bottomrule
\end{tabular}
\label{tab:pope_main_results}
\end{table*}

\subsection{Why a High Visual Attention Ratio Does Not Guarantee Grounded Outputs}
\label{sec:var-vs-decision}

A popular proxy for ``visual reliance'' is the \emph{Visual Attention Ratio} (VAR) \cite{jiang2025devils}, which measures how much attention mass a head assigns to visual keys.
Using the same visual index set \(I_{\mathrm{vis}}\) as Sec.~\ref{sec:cre}, for a decoding query position \(i\) we adopt
\(\mathrm{VAR}_{l,h} := \sum_{j\in I_{\mathrm{vis}}}\alpha^{(l,h)}_{ij}\),
optionally averaged over query positions.
However, VAR is not decision-aligned.
We use our decision-aligned visual route effect $\widehat{\Delta}^{\mathrm{vis}}_{l,h}$ to expose this misalignment, and use VRI to summarize normalized modality reliance.
In Fig.~\ref{fig:VARvsVRI}, VAR and VRI share a similar pattern in the first few layers, where many heads allocate substantial attention to visual keys. However, in subsequent middle layers VAR becomes nearly flat even for the correct ``Yes'' token, while VRI still exhibits nontrivial structure. This indicates that, for this QA pattern, attention mass alone is not a reliable proxy for decision-relevant visual grounding.

The reason is that VAR depends only on attention weights, while the visual route effect depends on both weights and value content.
Recall our first-order estimator in Eq.~\eqref{eq:estimate},
\(\widehat{\Delta}^{\mathrm{vis}}_{l,h}\approx \langle G_{l,h}, O^{\mathrm{vis}}_{l,h}\rangle\),
where \(G_{l,h}:=\partial \ell/\partial \tilde O_{l,h}\) for a task-dependent scalar score \(\ell\). 
Writing \(O^{\mathrm{vis}}_{l,h}=\sum_{j\in I_{\mathrm{vis}}}\alpha^{(l,h)}_{ij} v^{(l,h)}_j\) yields
\footnote{For vectors \(a,b_j\) and scalars \(c_j\),
\(\left\langle a, \sum_j c_j b_j\right\rangle
= \sum_j \left\langle a, c_j b_j\right\rangle
= \sum_j c_j \langle a, b_j\rangle.\)}
\(\widehat{\Delta}^{\mathrm{vis}}_{l,h}\approx \sum_{j\in I_{\mathrm{vis}}}\alpha^{(l,h)}_{ij}\,\langle G_{l,h}, v^{(l,h)}_j\rangle.\)
Thus, even when \(\mathrm{VAR}_{l,h}\) is large, the signed projections \(\langle G_{l,h}, v^{(l,h)}_j\rangle\) can be small, cancel out, or be negative, making the net visual influence on the decision weak or even harmful---information that VAR discards.

Moreover, VAR is confounded by softmax competition.
Let \(\alpha_j=\exp(s_j)/\sum_k\exp(s_k)\), where \(s_j\) is the pre-softmax attention logit, and \(R=\sum_{j\in I_{\mathrm{vis}}}\alpha_j\) (the per-query VAR).
For a textual key \(k\in I_{\mathrm{txt}}\),
\(\frac{\partial R}{\partial s_k}=-R\alpha_k<0\),
so decreasing textual logits increases \(R\) even if all visual logits stay unchanged.
Therefore, a higher VAR may reflect weakened textual competition rather than stronger visual grounding.

In summary, VAR is neither sufficient nor necessary for identifying decision-critical (or hallucination-prone) components, motivating our use of decision-aligned CRE in Eq.~\eqref{eq:cre}. For further analysis and experiment comparison, readers can refer to Appendix~\ref{app:var_proxy}.

\begin{figure*}
    \centering
    \includegraphics[width=0.8\linewidth]{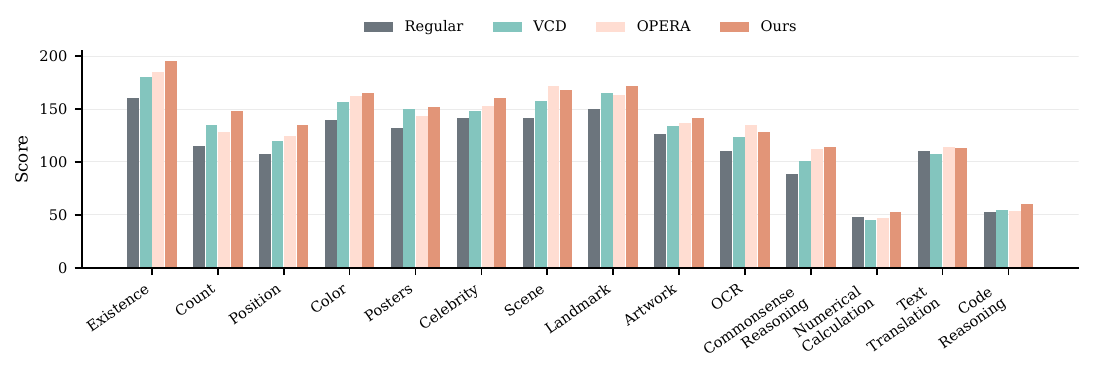}
    \caption{Category-wise MME scores (higher is better) comparing Regular decoding with VCD, OPERA, and our method.}
    \label{fig:mme_plot}
\end{figure*}

\section{Experiments}
\noindent\textbf{Models.}
We adopt LLaVA-1.5-7B~\cite{liu2024improved_llava}, Qwen-VL-Chat-7B~\cite{bai2023qwen_vl}, and Qwen2.5-VL-7B-Instruct~\cite{bai2025qwen2_5_vl}.
Appendix~\ref{app:more_advanced_model} further reports results on more advanced LVLMs.

\noindent\textbf{Benchmarks and Metrics.}
We evaluate our method on \textbf{five} benchmarks spanning discriminative and generative settings. 
We use POPE~\cite{li2023pope} for binary hallucination QA (Accuracy/F1), CHAIR~\cite{rohrbach2018object} (caption-level object hallucination), MME~\cite{fu2023mme} for comprehensive multimodal evaluation (overall score), MMHal-Bench~\cite{sun2023mmhal_bench} for open-ended hallucination assessment (official hallucination metrics), and AMBER \cite{wang2023amber}, a unified benchmark covering both generative and discriminative hallucination behaviors. For detailed Benchmarks and Metrics, readers can refer to Appendix~\ref{app:benchmarks}.

\noindent\textbf{Baselines and Implementation Details.}
We use greedy decoding unless otherwise specified.
We denote our inference-time method as \textbf{Causal Route Gating (CRG)}, and compare against representative training-free baselines, including VCD~\cite{leng2024vcd}, OPERA~\cite{huang2024opera}, PAI~\cite{liu2024pai}, and VTI~\cite{liu2025vti}.
Detailed experimental settings, model configurations, and decoding hyperparameters are provided in Appendix~\ref{app:exp_setup}.
Additional analyses and extensions are reported in Appendix~\ref{app:more_advanced_model} (stronger LVLMs), Appendix~\ref{app:imple_vcd} (combining with VCD), and Appendix~\ref{app:var_exp} (comparison with VAR-style proxies).

\subsection{Main Results}

\noindent\textbf{Results on POPE}
As shown in Table~\ref{tab:pope_main_results}, our method achieves the best accuracy and is competitive on F1 across settings. Relative to Regular, our method improves performance by \textbf{6.3\%} accuracy and \textbf{7.0\%} F1 on LLaVA-1.5-7B, \textbf{4.2\%} accuracy and \textbf{5.1\%} F1 on Qwen-VL-Chat, and \textbf{5.7\%} accuracy and \textbf{4.2\%} F1 on Qwen2.5-VL-7B-Instruct. Compared with the strongest baseline VTI, our method still yields consistent gains, improving accuracy/F1 by \textbf{1.2\%/0.7\%} on LLaVA-1.5-7B, \textbf{2.1\%/2.1\%} on Qwen-VL-Chat, and \textbf{1.2\%/1.6\%} on Qwen2.5-VL-7B-Instruct.

\noindent\textbf{Results on MME.}
As shown in Fig.~\ref{fig:mme_plot}, our method consistently improves over Regular decoding across almost all categories, with clear gains on grounding-sensitive skills such as \textit{Existence}, \textit{Count}, \textit{Position}, and \textit{Color}, while also improving higher-level categories (e.g., \textit{Commonsense Reasoning} and \textit{Numerical Calculation}). These results indicate that our conflict-aware suppression reduces hallucination without sacrificing general multi-modal performance.

\noindent\textbf{Results on CHAIR.}
As shown in Table~\ref{tab:chair_main_results}, our method achieves the lowest hallucination rates on both backbones: on LLaVA-1.5-7B, \(C_S/C_I\) drop from \(52.8/15.9\) to \(34.2/11.2\) while keeping recall close to Regular (\(77.8\) vs.\ \(77.3\)); on Qwen-VL-Chat, \(C_S/C_I\) decrease from \(50.2/14.2\) to \(32.4/10.4\) with recall improving relative to Regular (\(81.6\) vs.\ \(76.4\)). These improvements come with only a modest increase in length, indicating reduced hallucination without aggressively shortening the generation.

\begin{table}[!t]
\centering
\small
\setlength{\tabcolsep}{2pt}
\renewcommand{\arraystretch}{1.0}
\caption{Results on CHAIR benchmark. Max
new tokens are set to be 512.}
\begin{tabular}{@{}lcccccccc@{}}
\toprule
\multirow{2}{*}{Method} & \multicolumn{4}{c}{LLaVA-1.5-7B} & \multicolumn{4}{c}{Qwen-VL-Chat} \\
\cmidrule(lr){2-5}\cmidrule(lr){6-9}
& \(C_S\downarrow\) & \(C_I\downarrow\) & Recall\(\uparrow\) & Len
& \(C_S\downarrow\) & \(C_I\downarrow\) & Recall\(\uparrow\) & Len \\
\midrule
\rowcolor{gray!15}
Regular & 52.8 & 15.9 & 77.3 & 93.4 & 50.2 & 14.2 & 76.4 & 92.1 \\
VCD    & 51.6 & 14.9 & 77.2 & 101.9 & 47.3 & 13.7 & 75.3 & 98.1 \\
OPERA  & 44.6 & 12.8 & \underline{78.5} & 95.3 & 42.3 & 12.3 & 79.4 & 97.5 \\
PAI    & 39.1 & \underline{12.4} & 76.9 & 94.4 & 37.2 & 11.9 & \textbf{82.3} & 96.3 \\
VTI    & \underline{37.6} & 12.9 & \textbf{79.3} & 93.8
       & \underline{35.6} & \underline{11.5} & 81.2 & 95.3 \\
CRG    & \textbf{34.2} & \textbf{11.2} & 77.8 & 98.1
       & \textbf{32.4} & \textbf{10.4} & \underline{81.6} & 96.6 \\
\bottomrule
\end{tabular}
\hfill
\label{tab:chair_main_results}
\end{table}

\noindent\textbf{Results on MMHal-Bench.}
Figure~\ref{fig:mmhal_radar} provides a category-wise breakdown, where our method consistently dominates the radar plots across most MMHal categories for both LLaVA-1.5-7B and Qwen-VL-Chat. For detailed results of MMHal-Bench, please refer to Table~\ref{tab:mmhal} in Appendix~\ref{app:detail_amber}.

\noindent\textbf{Results on AMBER.}
Table~\ref{tab:amber_main} shows that our method reduces hallucinations (CHAIR: 8.3 \(\rightarrow\) 4.6) and improves discriminative performance (F1: 73.7 \(\rightarrow\) 77.5) on LLaVA-1.5-7B.
As a result, the AMBER Score increases by +3.75. Detailed AMBER results are provided in Table~\ref{tab:amber_llava7b} (Appendix~\ref{app:detail_amber}).

\begin{table}[t]
\centering
\small
\setlength{\tabcolsep}{4pt}
\renewcommand{\arraystretch}{1.05}
\caption{Results on AMBER under the official evaluation pipeline (LLaVA-1.5-7B). AMBER Score is computed as \( (100-\text{CHAIR}+F1)/2 \).}
\begin{adjustbox}{max width=\linewidth}
\begin{tabular}{lcccc}
\toprule
\textbf{Method} & \textbf{CHAIR}\(\downarrow\) & \textbf{Accuracy}\(\uparrow\) & \textbf{F1}\(\uparrow\) & \textbf{AMBER Score}\(\uparrow\) \\
\midrule
\rowcolor{gray!15}
Regular & 8.3 & 72.7 & 73.7 & 82.70 \\
CRG & \textbf{4.6} {\textcolor{ForestGreen}{(-3.7)}} & \textbf{77.4} {\textcolor{ForestGreen}{(+4.7)}} & \textbf{77.5} {\textcolor{ForestGreen}{(+3.8)}} & \textbf{86.45} {\textcolor{ForestGreen}{(+3.75)}} \\
\bottomrule
\end{tabular}
\end{adjustbox}

\label{tab:amber_main}
\end{table}

\begin{figure}
    \centering
    \includegraphics[width=1\linewidth]{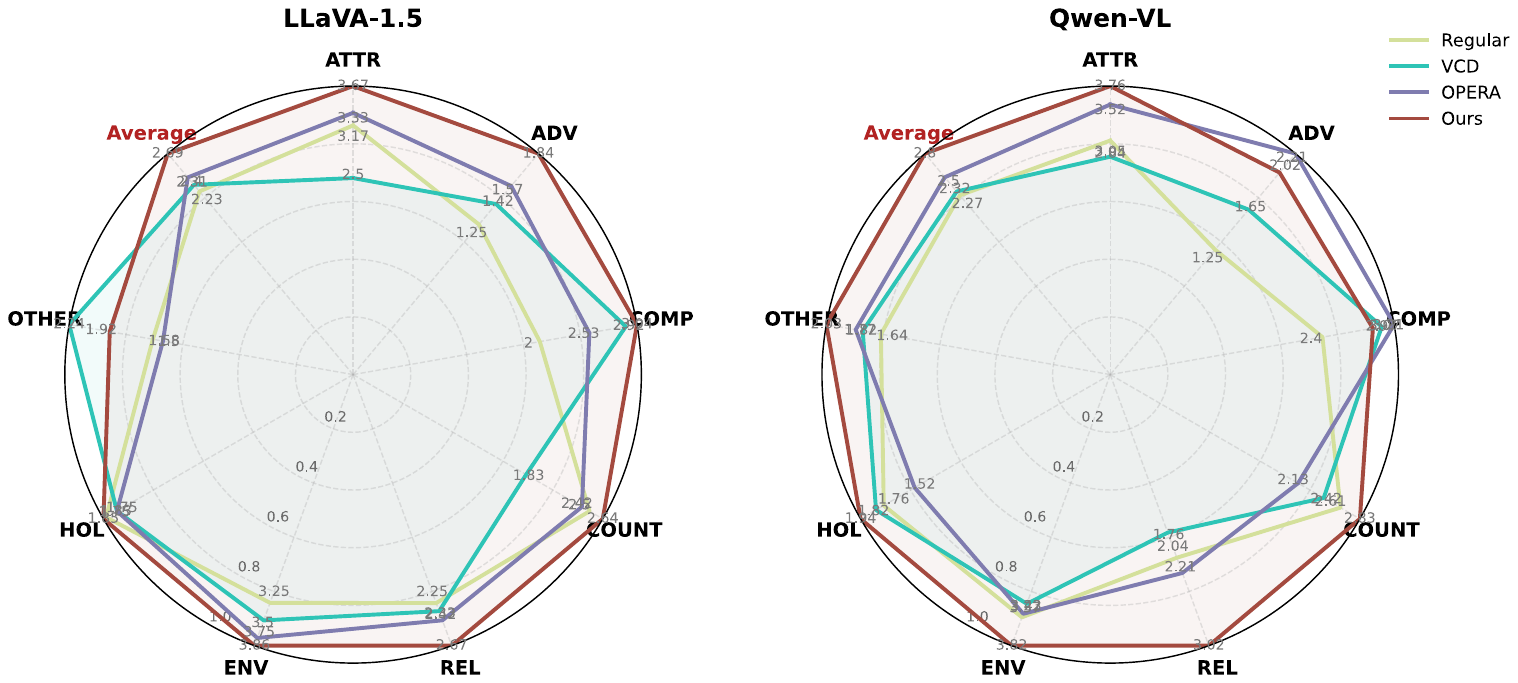}
    \caption{Per-category radar comparison on MMHal-Bench for two VLMs (LLaVA-1.5-7B and Qwen-VL-Chat).}
    \label{fig:mmhal_radar}
\end{figure}

\subsection{Ablation Study and Hyperparameters Analysis}
\noindent\textbf{Conflict-Aware Strategy Selection (A/B).}
To identify which conflict intervention contributes most, we ablate our strategy by removing one component at a time while keeping the rest unchanged. On LLaVA-1.5-7B, we evaluate POPE, CHAIR, MMHal, and MME, comparing our full strategy (A+B) with w/o A (only Conflict B) and w/o B (only Conflict A).
Table~\ref{tab:ablation_llava15} shows that A+B consistently outperforms the LLaVA-1.5-7B baseline on all four benchmarks. Both ablations also improve over the baseline, but fall short of the full method, indicating that Conflict A and Conflict B provide complementary signals. Moreover, w/o A performs slightly better than w/o B in most cases, suggesting Conflict B tends to be the stronger single component in this setting.

\begin{table}[t]
\centering
\small
\setlength{\tabcolsep}{4pt}
\renewcommand{\arraystretch}{1.05}
\caption{Ablation study on LLaVA-1.5-7B across four benchmarks.}
\begin{adjustbox}{max width=\linewidth}
\begin{tabular}{lcccc}
\toprule
\textbf{Method} & \textbf{POPE-Avg}\(\uparrow\) & \textbf{CHAIR}\textbf{$C_s$}\(\downarrow\) & \textbf{MMHal}\(\uparrow\) & \textbf{MME}\(\uparrow\) \\
\midrule
\rowcolor{gray!15}
Regular     & 81.37 & 52.8 & 2.23 & 1640 \\
CRG (A+B)  & \textbf{87.71} & \textbf{34.2} & \textbf{2.69} & \textbf{1897.42} \\
w/o A       & \underline{87.47} & \underline{34.7} & 2.54 & \underline{1892.18} \\
w/o B       & 87.31 & 35.2 & \underline{2.62} & 1884.32 \\
\bottomrule
\end{tabular}
\end{adjustbox}
\label{tab:ablation_llava15}
\end{table}

\noindent\textbf{Intervention Layer Range.}
We ablate the intervention layer range \((L_{\text{start}}, L_{\text{end}})\) on POPE-\texttt{popular} for LLaVA-1.5-7B and Qwen-VL-Chat-7B by sweeping \(L_{\text{start}}\in[5,24]\) and \(L_{\text{end}}\in[L_{\text{start}}+7,32]\).
Fig.~\ref{fig:layer_ablation} shows accuracy heatmaps (star: best).
LLaVA-1.5-7B peaks at \((8,19)\) with 0.8840, which is consistent with the trend in Fig.~\ref{fig:VARvsVRI}, while Qwen-VL-Chat-7B peaks at \((10,24)\) with 0.8763.
Both models favor intervening on a contiguous mid-to-upper layer block; very shallow or very late ranges are consistently less effective.

\noindent\textbf{Hyperparameter Sensitivity of $k$ and $\gamma$.}
Figure~\ref{fig:hyper} reports CHAIR results when varying the head budget $k$ and the rank-to-gate nonlinearity $\gamma$ (lower $C_s/C_i$ is better; higher F1 is better). 
\textbf{Varying $k$} (left), increasing $k$ generally reduces $C_s$ and $C_i$ and improves F1, with the best performance at a moderate budget (around $k\!\approx\!11$); larger $k$ brings diminishing returns and slightly lowers F1. 
\textbf{Varying $\gamma$} (right), moderate values yield the best trade-off, with peak F1 and lowest hallucination rates around $\gamma\!\approx\!0.3$--$0.7$ (best near $0.5$), while extreme $\gamma$ values are less effective.

\begin{figure}
    \centering
    \includegraphics[width=1\linewidth]{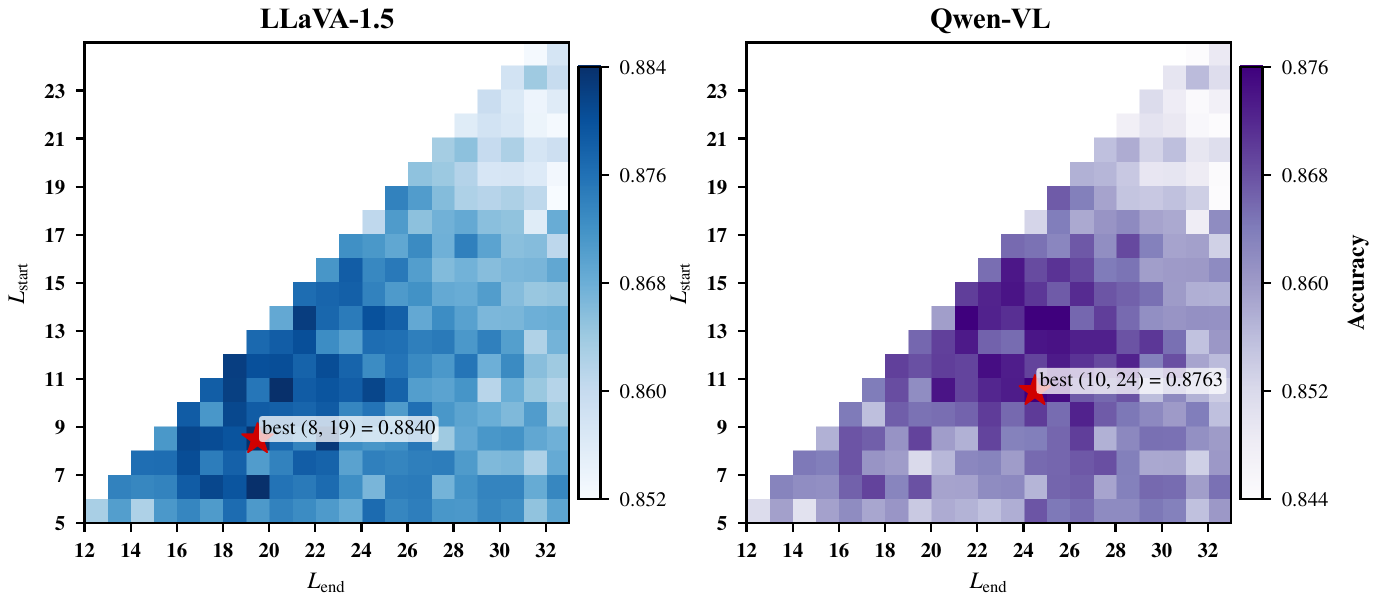}
    \caption{Layer-range ablation for intervention (POPE Setting Popular). Heatmap shows accuracy for intervening on layers \([L_{\mathrm{start}}, L_{\mathrm{end}}]\); the star marks the best range.}
    \label{fig:layer_ablation}
\end{figure}

\begin{table}[t]
\centering
\small
\setlength{\tabcolsep}{10pt}
\renewcommand{\arraystretch}{1.10}
\caption{Runtime and peak GPU memory overhead under the same decoding setup.}
\begin{tabular}{@{}lcc@{}}
\toprule
\textbf{Method} & \textbf{Avg. Latency}\(\downarrow\) & \textbf{GPU Memory}\(\downarrow\) \\
\midrule
\rowcolor{gray!15}
Regular     & 61.3 ms \,(\(\times 1.00\)) & 14945 MB \,(\(\times 1.00\)) \\
VCD         & \textbf{123.2 ms \,(\(\times 2.01\))}  & 15749 MB  \,(\(\times 1.05\)) \\
OPERA       & 435.7 ms \,(\(\times 7.12\)) & 22706 MB \,(\(\times 1.52\)) \\
\textbf{CRG} & 139.2 ms \,(\(\times 2.27 \)) & \textbf{14950 MB \,(\(\times 1.00\))} \\
\bottomrule
\end{tabular}
\label{tab:complexity_memory}
\end{table}

\begin{figure}
    \centering
    \includegraphics[width=1\linewidth]{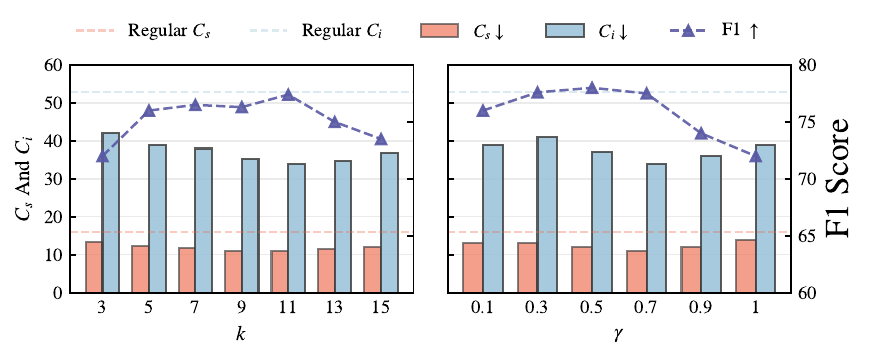}
\caption{Hyperparameter sensitivity of \(k\) and \(\gamma\).}
    \label{fig:hyper}
\end{figure}

\subsection{Complexity Analysis}
\noindent\textbf{Runtime and memory overhead.}
Table~\ref{tab:complexity_memory} reports average latency and peak GPU memory under the same decoding setup.
CRG incurs essentially no additional memory cost (\(\times 1.00\)), since we backpropagate only through lightweight gate scalars with frozen model parameters.
Notably, this overhead is close to VCD (\(2.01\times\); 123.2\,ms), while OPERA is substantially slower (\(7.12\times\); 435.7\,ms). Latency is reported per decoding step under the same setup using PyTorch eager attention.
Additional cost analysis is provided in Appendix~\ref{app:cost}.

\section{Conclusion and Limitations}
\label{sec:clu}
\noindent\textbf{Conclusion.}
Hallucination in LVLMs is often driven by a mismatch between visual evidence and linguistic priors, where plausible continuations override what is supported by the image.
We move beyond attention-based proxies by exposing two head-internal routes---a visual route and a text route---and measuring their decision-aligned effects on the current token via do-style interventions.
This route-level view reveals \emph{route conflict} and enables a conflict-aware inference-time intervention that selectively suppresses only the prior-dominant text route while keeping the visual route intact, avoiding the indiscriminate suppression induced by coarse head rescaling.
Across five benchmarks spanning discriminative and generative settings, our method consistently reduces hallucination-related errors across models with limited impact on overall multimodal performance, with a modest and controllable inference-time overhead.

\noindent\textbf{Limitations.}
First, route effects are estimated using a lightweight one-forward/one-gradient approximation; the estimate is local to the current decoding step and may be noisy under ambiguous evidence or strong cross-layer interactions.
More broadly, CRG is targeted at conflict-mediated hallucinations, and is not expected to correct cases where both visual and textual routes share the same bias, or errors dominated by perception, OCR, or multi-step reasoning failures.
Second, the overhead grows with generation length since effect estimation and gating are applied during decoding; while the per-token cost is modest in our experiments, long-form generation remains more expensive.
Third, the method requires white-box access to internal activations/gradients and inference-time patching, which may not apply to closed-source APIs or restricted deployment settings.
Finally, selectively suppressing the text route can make the model more conservative, potentially reducing descriptive richness when visual evidence is weak or when the task legitimately benefits from linguistic priors; balancing faithfulness and helpfulness remains an important direction for future work.
We discuss these scope and deployment boundaries in more detail in Appendix~\ref{app:limitations_more_discussion}.

\section*{Acknowledgements}
This work was supported in part by the Sichuan Science and Technology Program under Grant 2024ZYD0135, in part by the National Natural Science Foundation of China under Grants 12071372 and 12201395, and in part by the Fundamental Research Funds for the Central Universities (JBK2507005).

\section*{Impact Statement}
This paper proposes \textbf{CRG}, an inference-time intervention that estimates token-level, decision-aligned effects of within-head visual and text routes and selectively suppresses prior-dominant text routes to improve visual grounding and reduce hallucinated claims in vision--language models. We use VRI as a lightweight ranking signal to focus interventions on a small subset of heads. The primary societal benefit is improved reliability and interpretability of model behavior in downstream applications. Potential risks include misuse to produce more convincing misinformation, uneven effects across demographic or cultural contexts, and over-reliance on automated mitigation despite residual failure modes. We therefore recommend careful evaluation across diverse settings, transparent reporting of limitations, and human oversight for high-stakes deployment.

\bibliography{reference}
\bibliographystyle{icml2026}

\newpage
\appendix
\onecolumn

\begingroup
\hypersetup{linkcolor=black}

\setlength{\parindent}{0pt}
\setlength{\parskip}{3pt}
\renewcommand{\baselinestretch}{1.08}\selectfont
\large

\newcommand{\apptocsec}[3]{
  \noindent\hyperref[#1]{\textbf{#2}\hspace{0.8em}#3}
  \dotfill\hyperref[#1]{\pageref*{#1}}\par
  \vspace{8pt}
  }
\newcommand{\apptocsub}[3]{
  \noindent\hspace*{1.8em}\hyperref[#1]{#2\hspace{0.8em}#3}
  \dotfill\hyperref[#1]{\pageref*{#1}}\par
  \vspace{5pt}
  }

\begin{center}
  \LARGE \textbf{Appendix}
\end{center}
\vspace{0.6em}

{\Large\noindent\textbf{Contents}\par}
\vspace{1em}

\apptocsec{app:var_proxy}{A}{\textbf{Analysis of VAR-Style Attention Proxies}}
\apptocsub{app:var_g1}{A.1}{CRE vs.\ VAR-Style Proxies}
\apptocsub{app:var_g2}{A.2}{\textbf{Theoretical Analysis}: What VAR Can (and Cannot) Guarantee}
\apptocsub{app:var_g3}{A.3}{Three Canonical Failure Modes of VAR}
\apptocsub{app:var_exp}{A.4}{Empirical Alignment of VAR with Decision-Aligned Effects}

\apptocsec{app:do-vs-approx}{B}{\textbf{Validity of the First-Order Do-Effect Approximation}}
\apptocsub{app:first_error_bound}{B.1}{First-Order Error Bound}
\apptocsub{app:sign_rel}{B.2}{Sign Reliability}
\apptocsub{app:exact_validation}{B.3}{Exact Validation on a Small Subset}

\apptocsec{app:proof}{C}{\textbf{Proofs of Main Results}}

\apptocsec{app:impl}{D}{\textbf{Implementation Details}}
\apptocsub{app:benchmarks}{D.1}{Benchmark Details}
\apptocsub{app:exp_setup}{D.2}{Models and Experiments Setup}
\apptocsub{app:ell}{D.3}{Choice of The Scalar Objective \(\ell\)}

\apptocsec{app:additional_exp}{E}{\textbf{Additional Experiments}}
\apptocsub{app:more_advanced_model}{E.1}{Results on More Advanced LVLMs}
\apptocsub{app:imple_vcd}{E.2}{Implementation with VCD}
\apptocsub{app:detail_amber}{E.3}{Detailed Results of AMBER \& MMHal-Bench}
\apptocsub{app:supp_pope_comparison}{E.4}{Supplementary POPE Comparisons with Recent Methods}

\apptocsec{app:limitations_more_discussion}{F}{\textbf{Limitations and More Discussion}}
\apptocsub{app:scope_boundary}{F.1}{Scope Boundary of CRG}
\apptocsub{app:cross_gate_interactions}{F.2}{Cross-Gate Interactions}
\apptocsub{app:per_token_crg}{F.3}{Per-Token CRG in Open-Ended Generation}

\apptocsec{app:cost}{G}{\textbf{Computational Cost Analysis}}

\apptocsec{app:case_studies}{H}{\textbf{Case Studies}}

\endgroup

\clearpage

\section{Analysis of VAR-Style Attention Proxies}
\label{app:var_proxy}

\subsection{CRE vs.\ VAR-Style Proxies}
\label{app:var_g1}

The main text argues that attention-mass proxies, such as VAR~\cite{jiang2025devils}, are not decision-aligned for diagnosing and intervening on hallucination behaviors in LVLMs.
In our framework, the target of interest is the decision score \(\ell\) (defined in the main text for discriminative and generative settings), and we quantify how each head-route causally changes \(\ell\) via do-interventions.
Concretely, for head \((l,h)\), we define the visual-route and textual-route CRE as
\(\Delta^{\mathrm{vis}}_{l,h} := \ell_{l,h}(1,1)-\ell_{l,h}(0,1)\) and
\(\Delta^{\mathrm{txt}}_{l,h} := \ell_{l,h}(1,1)-\ell_{l,h}(1,0)\),
where \(\ell_{l,h}(g_{\mathrm{vis}},g_{\mathrm{txt}})\) denotes the score under route gates \((g_{\mathrm{vis}},g_{\mathrm{txt}})\in\{0,1\}^2\).
We further normalize the relative reliance via the
\(\mathrm{VRI}_{l,h} := \frac{|\Delta^{\mathrm{vis}}_{l,h}|}{|\Delta^{\mathrm{vis}}_{l,h}|+|\Delta^{\mathrm{txt}}_{l,h}|+\varepsilon}\),
where \(\varepsilon>0\) is a small constant for numerical stability.

In contrast, VAR summarizes only the amount of attention mass assigned to visual tokens.
Let \(I_{\mathrm{vis}}\) be the index set of visual tokens, and let \(\alpha^{(l,h)}_{ij}\) denote the attention weight from query position \(i\) to key position \(j\) at head \((l,h)\).
A typical VAR definition takes the form
\(\mathrm{VAR}_{l,h}(i) := \sum_{j\in I_{\mathrm{vis}}}\alpha^{(l,h)}_{ij}\),
optionally averaged over query positions \(i\) (e.g., over decoded positions or over all tokens in the prefix).
Crucially, \(\mathrm{VAR}_{l,h}\) does not condition on (i) the content carried by value vectors, nor (ii) how such content aligns with the local sensitivity of the score \(\ell\).
Therefore, VAR is inherently a correlational proxy: it measures where the head attends, but not whether the attended information supports (or opposes) the current decision.

\subsection{Theoretical Analysis: What VAR Can (and Cannot) Guarantee}
\label{app:var_g2}

We formalize the limitation of VAR by connecting it to a first-order, decision-aligned approximation of route effects.
Let \(v^{(l,h)}_j\) be the value vector at key position \(j\) for head \((l,h)\), and define the visual-route output as
\(O^{\mathrm{vis}}_{l,h}(i) := \sum_{j\in I_{\mathrm{vis}}}\alpha^{(l,h)}_{ij} v^{(l,h)}_j\).
Let \(G_{l,h}(i)\) denote the local sensitivity (gradient-like) vector used in the main text, so that the first-order approximation of the visual-route effect can be written as
\(\widehat{\Delta}^{\mathrm{vis}}_{l,h}(i) := \langle G_{l,h}(i),\, O^{\mathrm{vis}}_{l,h}(i)\rangle\).
Expanding the definition yields
\(\widehat{\Delta}^{\mathrm{vis}}_{l,h}(i) = \sum_{j\in I_{\mathrm{vis}}}\alpha^{(l,h)}_{ij}\langle G_{l,h}(i), v^{(l,h)}_j\rangle\),
which makes explicit that the decision-aligned quantity depends not only on attention weights \(\alpha\), but also on the signed alignment terms \(\langle G, v\rangle\).

\medskip
\begin{lemma}[VAR provides only a coarse upper bound]\label{lem:var-upper}
Define
$
m_{l,h}(i) \;:=\; \max_{j\in I_{\mathrm{vis}}}\big|\langle G_{l,h}(i),\, v^{(l,h)}_j\rangle\big|.
$
Then the first-order visual-route effect satisfies
\[
\big|\widehat{\Delta}^{\mathrm{vis}}_{l,h}(i)\big|
\;\le\;
m_{l,h}(i)\cdot \mathrm{VAR}_{l,h}(i).
\]
\end{lemma}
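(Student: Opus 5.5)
The bound follows directly from the expansion of the first-order visual-route effect stated just before Lemma~\ref{lem:var-upper}, namely
\[
\widehat{\Delta}^{\mathrm{vis}}_{l,h}(i)=\sum_{j\in I_{\mathrm{vis}}}\alpha^{(l,h)}_{ij}\,\big\langle G_{l,h}(i),\,v^{(l,h)}_j\big\rangle,
\]
which is just linearity of the inner product applied to $O^{\mathrm{vis}}_{l,h}(i)=\sum_{j\in I_{\mathrm{vis}}}\alpha^{(l,h)}_{ij}v^{(l,h)}_j$. Starting from this identity, I will take absolute values and apply the triangle inequality to the finite sum over $I_{\mathrm{vis}}$. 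This gives $|\widehat{\Delta}^{\mathrm{vis}}_{l,h}(i)|\le\sum_{j\in I_{\mathrm{vis}}}|\alpha^{(l,h)}_{ij}|\,|\langle G_{l,h}(i),v^{(l,h)}_j\rangle|$.

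The next step uses the fact that the attention weights are softmax outputs, so $\alpha^{(l,h)}_{ij}\ge 0$. Causally masked positions have weight exactly $0$, which still satisfies this. Hence $|\alpha^{(l,h)}_{ij}|=\alpha^{(l,h)}_{ij}$. Each alignment factor is then bounded by its maximum over the visual index set, $|\langle G_{l,h}(i),v^{(l,h)}_j\rangle|\le m_{l,h}(i)$ for every $j\in I_{\mathrm{vis}}$. Pulling $m_{l,h}(i)$ out of the sum leaves $m_{l,h}(i)\sum_{j\in I_{\mathrm{vis}}}\alpha^{(l,h)}_{ij}=m_{l,h}(i)\cdot\mathrm{VAR}_{l,h}(i)$ by the definition of VAR, which is the claimed inequality.

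There is no real obstacle. The only points to state explicitly are the nonnegativity of the attention weights, without which the absolute values could not be dropped, and the finiteness and nonemptiness of $I_{\mathrm{vis}}$, so that $m_{l,h}(i)$ is a well-defined maximum. If VAR is averaged over query positions, the same argument applies pointwise in $i$ before averaging. I will close with a remark that equality requires all $\langle G,v_j\rangle$ with $\alpha_{ij}>0$ to have a common sign and magnitude $m_{l,h}(i)$. This explains why the bound is only coarse: sign cancellations among the alignment terms can make $\widehat{\Delta}^{\mathrm{vis}}$ small even when VAR is large.
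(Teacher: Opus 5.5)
Your proof is correct and follows the paper's argument step for step: expand $\widehat{\Delta}^{\mathrm{vis}}_{l,h}(i)$ over visual keys, apply the triangle inequality, bound each alignment term by $m_{l,h}(i)$, and identify the remaining sum of weights as $\mathrm{VAR}_{l,h}(i)$. Stating that the weights $\alpha^{(l,h)}_{ij}$ are nonnegative is a small improvement, since the paper relies on this silently when it drops the absolute values on the weights.
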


\begin{proof}
Recall that the first-order visual-route effect can be written as a weighted sum over visual keys:
\[
\widehat{\Delta}^{\mathrm{vis}}_{l,h}(i)
\;=\;
\sum_{j\in I_{\mathrm{vis}}}\alpha^{(l,h)}_{ij}\,
\langle G_{l,h}(i),\, v^{(l,h)}_j\rangle.
\]
Taking the absolute value and applying the triangle inequality yields
\[
\big|\widehat{\Delta}^{\mathrm{vis}}_{l,h}(i)\big|
\;=\;
\Big|\sum_{j\in I_{\mathrm{vis}}}\alpha^{(l,h)}_{ij}\,
\langle G_{l,h}(i),\, v^{(l,h)}_j\rangle\Big|
\;\le\;
\sum_{j\in I_{\mathrm{vis}}}\alpha^{(l,h)}_{ij}\,
\big|\langle G_{l,h}(i),\, v^{(l,h)}_j\rangle\big|.
\]
By the definition of \(m_{l,h}(i)\), for every \(j\in I_{\mathrm{vis}}\) we have
\[
\big|\langle G_{l,h}(i),\, v^{(l,h)}_j\rangle\big|
\;\le\;
m_{l,h}(i).
\]
Substituting this bound into the previous inequality gives
\[
\sum_{j\in I_{\mathrm{vis}}}\alpha^{(l,h)}_{ij}\,
\big|\langle G_{l,h}(i),\, v^{(l,h)}_j\rangle\big|
\;\le\;
m_{l,h}(i)\sum_{j\in I_{\mathrm{vis}}}\alpha^{(l,h)}_{ij}.
\]
Finally, noting that \(\sum_{j\in I_{\mathrm{vis}}}\alpha^{(l,h)}_{ij}=\mathrm{VAR}_{l,h}(i)\) by definition, we obtain
\[
\big|\widehat{\Delta}^{\mathrm{vis}}_{l,h}(i)\big|
\;\le\;
m_{l,h}(i)\cdot \mathrm{VAR}_{l,h}(i),
\]
which completes the proof.
\end{proof}

\paragraph{Implication.}
Lemma~\ref{lem:var-upper} clarifies that VAR can at best constrain the magnitude of a decision-aligned effect up to an unknown, head- and context-dependent factor \(m_{l,h}(i)\).
However, \(m_{l,h}(i)\) depends on value content and its alignment with the score sensitivity, which is not observable from attention weights alone.
Consequently, ranking or selecting heads solely by \(\mathrm{VAR}_{l,h}\) is not theoretically justified as a decision-aligned criterion, and can be unreliable especially when \(\langle G, v\rangle\) varies widely or changes sign across tokens.

\medskip
\begin{lemma}[A tighter second-moment upper bound]\label{lem:second-moment}
For \(j\in I_{\mathrm{vis}}\), define the (signed) alignment
$
a_j \;:=\; \langle G_{l,h}(i),\, v^{(l,h)}_j\rangle.
$
When \(\mathrm{VAR}_{l,h}(i)>0\), define the normalized visual attention weights
$
\tilde{\alpha}^{(l,h)}_{ij}
\;:=\;
\frac{\alpha^{(l,h)}_{ij}}{\mathrm{VAR}_{l,h}(i)},
 j\in I_{\mathrm{vis}},
$
so that \(\sum_{j\in I_{\mathrm{vis}}}\tilde{\alpha}^{(l,h)}_{ij}=1\).
Define the (attention-weighted) second-moment factor
$
s_{l,h}(i)
\;:=\;
\Big(\sum_{j\in I_{\mathrm{vis}}}\tilde{\alpha}^{(l,h)}_{ij}\, a_j^2\Big)^{1/2}.
$
Then the first-order visual-route effect satisfies
\[
\big|\widehat{\Delta}^{\mathrm{vis}}_{l,h}(i)\big|
\;\le\;
\mathrm{VAR}_{l,h}(i)\cdot s_{l,h}(i)
\;\le\;
\mathrm{VAR}_{l,h}(i)\cdot m_{l,h}(i),
\]
where \(m_{l,h}(i):=\max_{j\in I_{\mathrm{vis}}}|a_j|\) as in Lemma~\ref{lem:var-upper}.
\end{lemma}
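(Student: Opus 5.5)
The plan is to rewrite the first-order visual-route effect as a VAR-scaled average of the alignments $a_j$ and then apply Jensen (or Cauchy--Schwarz) to that average. Start from the expansion already used in the proof of Lemma~\ref{lem:var-upper},
\[
\widehat{\Delta}^{\mathrm{vis}}_{l,h}(i)=\sum_{j\in I_{\mathrm{vis}}}\alpha^{(l,h)}_{ij}\,a_j .
\]
Assuming $\mathrm{VAR}_{l,h}(i)>0$, substitute $\alpha^{(l,h)}_{ij}=\mathrm{VAR}_{l,h}(i)\,\tilde\alpha^{(l,h)}_{ij}$. This gives
\[
\widehat{\Delta}^{\mathrm{vis}}_{l,h}(i)=\mathrm{VAR}_{l,h}(i)\sum_{j}\tilde\alpha^{(l,h)}_{ij}a_j .
\]
The problem now reduces to bounding the absolute value of a mean of the $a_j$ under the probability vector $\tilde\alpha^{(l,h)}_{i\cdot}$. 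The weights $\tilde\alpha$ are nonnegative because softmax weights are nonnegative, and they sum to one by construction.

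For the first inequality, apply Cauchy--Schwarz with the factorization $\tilde\alpha_j a_j=\sqrt{\tilde\alpha_j}\cdot\sqrt{\tilde\alpha_j}\,a_j$:
\[
\Big|\sum_j\tilde\alpha_j a_j\Big|\le\Big(\sum_j\tilde\alpha_j\Big)^{1/2}\Big(\sum_j\tilde\alpha_j a_j^2\Big)^{1/2}=s_{l,h}(i).
\]
Equivalently, one can use Jensen's inequality for the convex map $x\mapsto x^2$, i.e.\ $(\mathbb{E}a)^2\le\mathbb{E}a^2$. Multiplying by $\mathrm{VAR}_{l,h}(i)\ge0$ yields $|\widehat{\Delta}^{\mathrm{vis}}_{l,h}(i)|\le\mathrm{VAR}_{l,h}(i)\,s_{l,h}(i)$.

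For the second inequality, observe that $a_j^2\le m_{l,h}(i)^2$ for every $j\in I_{\mathrm{vis}}$. Since the $\tilde\alpha$ are a probability vector, $\sum_j\tilde\alpha_j a_j^2\le m_{l,h}(i)^2$. Taking square roots gives $s_{l,h}(i)\le m_{l,h}(i)$, and multiplying by $\mathrm{VAR}_{l,h}(i)$ finishes the chain.

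There is no real obstacle here. The only care needed is the degenerate case $\mathrm{VAR}_{l,h}(i)=0$. In that case all visual weights vanish, so $\widehat{\Delta}^{\mathrm{vis}}_{l,h}(i)=0$, and the bound holds trivially under any convention for $s_{l,h}(i)$. I would state that convention explicitly. I would also remark that the Cauchy--Schwarz step is tight exactly when the $a_j$ are constant on the support of $\tilde\alpha$. This shows that $s_{l,h}(i)$ is a genuine refinement of Lemma~\ref{lem:var-upper}.
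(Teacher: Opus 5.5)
Your proposal is correct and follows the paper's own proof: you write \(\widehat{\Delta}^{\mathrm{vis}}_{l,h}(i)=\mathrm{VAR}_{l,h}(i)\sum_j\tilde{\alpha}^{(l,h)}_{ij}a_j\), apply Cauchy--Schwarz using \(\sum_j\tilde{\alpha}^{(l,h)}_{ij}=1\), and then use \(a_j^2\le m_{l,h}(i)^2\) to get \(s_{l,h}(i)\le m_{l,h}(i)\). Your additional remarks are valid details that the paper leaves implicit: the nonnegativity of \(\tilde{\alpha}\), the degenerate case \(\mathrm{VAR}_{l,h}(i)=0\) (which needs a finite convention for \(s_{l,h}(i)\)), and the equality condition.
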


\begin{proof}
Starting from the definition of the first-order effect,
\[
\widehat{\Delta}^{\mathrm{vis}}_{l,h}(i)
\;=\;
\sum_{j\in I_{\mathrm{vis}}}\alpha^{(l,h)}_{ij}\, a_j
\;=\;
\mathrm{VAR}_{l,h}(i)\sum_{j\in I_{\mathrm{vis}}}\tilde{\alpha}^{(l,h)}_{ij}\, a_j.
\]
Taking absolute values yields
\[
\big|\widehat{\Delta}^{\mathrm{vis}}_{l,h}(i)\big|
\;=\;
\mathrm{VAR}_{l,h}(i)\,
\Big|\sum_{j\in I_{\mathrm{vis}}}\tilde{\alpha}^{(l,h)}_{ij}\, a_j\Big|.
\]
Applying the Cauchy--Schwarz inequality to the weighted sum gives
\[
\Big|\sum_{j\in I_{\mathrm{vis}}}\tilde{\alpha}^{(l,h)}_{ij}\, a_j\Big|
\;\le\;
\Big(\sum_{j\in I_{\mathrm{vis}}}\tilde{\alpha}^{(l,h)}_{ij}\Big)^{1/2}
\Big(\sum_{j\in I_{\mathrm{vis}}}\tilde{\alpha}^{(l,h)}_{ij}\, a_j^2\Big)^{1/2}.
\]
Since \(\sum_{j\in I_{\mathrm{vis}}}\tilde{\alpha}^{(l,h)}_{ij}=1\), the first factor equals \(1\), and we obtain
\[
\Big|\sum_{j\in I_{\mathrm{vis}}}\tilde{\alpha}^{(l,h)}_{ij}\, a_j\Big|
\;\le\;
\Big(\sum_{j\in I_{\mathrm{vis}}}\tilde{\alpha}^{(l,h)}_{ij}\, a_j^2\Big)^{1/2}
\;=\;
s_{l,h}(i).
\]
Therefore,
\[
\big|\widehat{\Delta}^{\mathrm{vis}}_{l,h}(i)\big|
\;\le\;
\mathrm{VAR}_{l,h}(i)\cdot s_{l,h}(i).
\]
Finally, because \(|a_j|\le m_{l,h}(i)\) for all \(j\in I_{\mathrm{vis}}\), we have \(a_j^2\le m_{l,h}(i)^2\), and hence
\[
s_{l,h}(i)^2
=
\sum_{j\in I_{\mathrm{vis}}}\tilde{\alpha}^{(l,h)}_{ij}\, a_j^2
\;\le\;
m_{l,h}(i)^2\sum_{j\in I_{\mathrm{vis}}}\tilde{\alpha}^{(l,h)}_{ij}
=
m_{l,h}(i)^2,
\]
\end{proof}

\medskip
\begin{remark}
Lemma~\ref{lem:second-moment} sharpens Lemma~\ref{lem:var-upper} by replacing the worst-case factor \(m_{l,h}(i)\) with a data-dependent RMS factor \(s_{l,h}(i)\), which can be much smaller when only a few attended visual tokens have large alignment magnitude.
\end{remark}

\bigskip
\begin{proposition}[When VAR becomes informative: sign-consistent alignment]\label{prop:sign-consistent}
Assume \(\mathrm{VAR}_{l,h}(i)>0\). For \(j\in I_{\mathrm{vis}}\), let
$
a_j \;:=\; \langle G_{l,h}(i),\, v^{(l,h)}_j\rangle.
$
Suppose that \(\{a_j\}_{j\in I_{\mathrm{vis}}}\) is sign-consistent, i.e., either \(a_j\ge 0\) for all \(j\in I_{\mathrm{vis}}\) or \(a_j\le 0\) for all \(j\in I_{\mathrm{vis}}\).
Moreover, assume that there exist scalars \(\underline{a}_{l,h}(i)\le \overline{a}_{l,h}(i)\) such that
$
a_j \in \big[\underline{a}_{l,h}(i),\,\overline{a}_{l,h}(i)\big]
$, for all  $j\in I_{\mathrm{vis}}$.

Then the first-order visual-route effect obeys the two-sided bound
\[
\underline{a}_{l,h}(i)\,\mathrm{VAR}_{l,h}(i)
\;\le\;
\widehat{\Delta}^{\mathrm{vis}}_{l,h}(i)
\;\le\;
\overline{a}_{l,h}(i)\,\mathrm{VAR}_{l,h}(i).
\]
\end{proposition}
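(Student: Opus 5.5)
The plan is to use the same weighted-sum representation as in the proofs of Lemma~\ref{lem:var-upper} and Lemma~\ref{lem:second-moment}, namely
\[
\widehat{\Delta}^{\mathrm{vis}}_{l,h}(i)=\sum_{j\in I_{\mathrm{vis}}}\alpha^{(l,h)}_{ij}\,a_j .
\]
The key structural fact is that the attention weights are nonnegative, being softmax outputs. By definition their visual sum equals $\mathrm{VAR}_{l,h}(i)$.

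\textbf{Termwise bounds.} First, for each $j\in I_{\mathrm{vis}}$ I multiply the assumed inclusion $\underline{a}_{l,h}(i)\le a_j\le \overline{a}_{l,h}(i)$ by $\alpha^{(l,h)}_{ij}\ge 0$. This preserves the inequalities and gives
\[
\alpha^{(l,h)}_{ij}\,\underline{a}_{l,h}(i)\le \alpha^{(l,h)}_{ij}a_j\le \alpha^{(l,h)}_{ij}\,\overline{a}_{l,h}(i).
\]

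\textbf{Summation.} Second, I sum these inequalities over $j\in I_{\mathrm{vis}}$. Since $\underline{a}_{l,h}(i)$ and $\overline{a}_{l,h}(i)$ do not depend on $j$, they factor out of the sums, and the remaining $\sum_{j\in I_{\mathrm{vis}}}\alpha^{(l,h)}_{ij}$ is replaced by $\mathrm{VAR}_{l,h}(i)$. This yields the claimed two-sided bound directly.

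Equivalently, one can use the normalized weights $\tilde{\alpha}^{(l,h)}_{ij}$ from Lemma~\ref{lem:second-moment}, which is where the assumption $\mathrm{VAR}_{l,h}(i)>0$ enters. The quantity $\widehat{\Delta}^{\mathrm{vis}}_{l,h}(i)/\mathrm{VAR}_{l,h}(i)$ is then a convex combination of the $a_j$. It therefore lies in $[\min_j a_j,\max_j a_j]\subseteq[\underline{a}_{l,h}(i),\overline{a}_{l,h}(i)]$, and multiplying back by $\mathrm{VAR}_{l,h}(i)>0$ gives the bound.

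\textbf{Role of sign-consistency.} Logically the bound uses only nonnegativity of $\alpha$ and the interval assumption; sign-consistency is not needed for the inequality itself. I would add a remark explaining its role. Under sign-consistency one may take $\underline{a}_{l,h}(i)\ge 0$, or $\overline{a}_{l,h}(i)\le 0$, so both endpoints share a sign. Then $\widehat{\Delta}^{\mathrm{vis}}_{l,h}(i)$ has a known sign, and its magnitude is sandwiched between two constant multiples of VAR. This is what makes VAR informative in this regime, in contrast to Lemma~\ref{lem:var-upper}, which gives only an upper bound.

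\textbf{Expected obstacle.} There is no substantive obstacle. The only point to state carefully is that $\alpha^{(l,h)}_{ij}\ge0$ holds because the weights are softmax outputs, including the zeros that the causal mask assigns to future positions. This nonnegativity is what justifies multiplying the inequalities without flipping them.
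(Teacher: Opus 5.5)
Your proof is correct and is essentially the paper's: the paper uses your second variant, normalizing by $\mathrm{VAR}_{l,h}(i)>0$ so that $\widehat{\Delta}^{\mathrm{vis}}_{l,h}(i)/\mathrm{VAR}_{l,h}(i)$ is a convex combination of the $a_j$ lying in $[\underline{a}_{l,h}(i),\overline{a}_{l,h}(i)]$, and your termwise-summation version is the same argument without the normalization. You are also right that sign-consistency is not needed for the inequality; the paper's proof does not use it either.
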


\begin{proof}
By definition, when \(\mathrm{VAR}_{l,h}(i)>0\) the normalized visual attention weights
\[
\tilde{\alpha}^{(l,h)}_{ij}
\;:=\;
\frac{\alpha^{(l,h)}_{ij}}{\mathrm{VAR}_{l,h}(i)},
\qquad j\in I_{\mathrm{vis}},
\]
satisfy \(\sum_{j\in I_{\mathrm{vis}}}\tilde{\alpha}^{(l,h)}_{ij}=1\) and \(\tilde{\alpha}^{(l,h)}_{ij}\ge 0\).
Using the expansion of the first-order effect,
\[
\widehat{\Delta}^{\mathrm{vis}}_{l,h}(i)
\;=\;
\sum_{j\in I_{\mathrm{vis}}}\alpha^{(l,h)}_{ij}\, a_j
\;=\;
\mathrm{VAR}_{l,h}(i)\sum_{j\in I_{\mathrm{vis}}}\tilde{\alpha}^{(l,h)}_{ij}\, a_j.
\]
The quantity \(\sum_{j\in I_{\mathrm{vis}}}\tilde{\alpha}^{(l,h)}_{ij}\, a_j\) is a convex combination of \(\{a_j\}_{j\in I_{\mathrm{vis}}}\).
Since each \(a_j\in[\underline{a}_{l,h}(i),\overline{a}_{l,h}(i)]\), any convex combination also lies in the same interval, namely
\[
\underline{a}_{l,h}(i)
\;\le\;
\sum_{j\in I_{\mathrm{vis}}}\tilde{\alpha}^{(l,h)}_{ij}\, a_j
\;\le\;
\overline{a}_{l,h}(i).
\]
Multiplying both sides by the positive scalar \(\mathrm{VAR}_{l,h}(i)\) yields
\[
\underline{a}_{l,h}(i)\,\mathrm{VAR}_{l,h}(i)
\;\le\;
\widehat{\Delta}^{\mathrm{vis}}_{l,h}(i)
\;\le\;
\overline{a}_{l,h}(i)\,\mathrm{VAR}_{l,h}(i),
\]
which completes the proof.
\end{proof}

\medskip
\begin{remark}
This proposition formalizes a \emph{sufficient condition} under which VAR may correlate with decision-aligned effect: the attended visual values must be roughly aligned with the score sensitivity (no sign flips or severe cancellations).
Outside this regime, large VAR does not imply large \(\widehat{\Delta}^{\mathrm{vis}}\) (cf.\ Appendix~\ref{app:var_g3}).
\end{remark}

\medskip
\begin{theorem}[Non-identifiability of decision-aligned influence from attention alone]\label{thm:nonident-attn}
Fix a head \((l,h)\) and a query position \(i\).
Assume that the attention weights \(\{\alpha^{(l,h)}_{ij}\}_{j}\) are fixed (equivalently, the queries and keys are fixed), and let
\[
\mathrm{VAR}_{l,h}(i) \;:=\; \sum_{j\in I_{\mathrm{vis}}}\alpha^{(l,h)}_{ij}.
\]
Let \(G_{l,h}(i)\) be any nonzero sensitivity vector with \(\|G_{l,h}(i)\|_2>0\).
Suppose the value vectors are norm-bounded as \(\|v^{(l,h)}_j\|_2\le V\) for all \(j\in I_{\mathrm{vis}}\), where \(V>0\).
Then there exist two feasible value assignments \(\{v^{(l,h)}_j\}_{j\in I_{\mathrm{vis}}}\) and \(\{v^{\prime(l,h)}_j\}_{j\in I_{\mathrm{vis}}}\) that yield the same attention weights \(\alpha^{(l,h)}_{ij}\) but produce opposite first-order visual-route effects:
\[
\widehat{\Delta}^{\mathrm{vis}}_{l,h}(i)
\;=\;
+V\|G_{l,h}(i)\|_2\,\mathrm{VAR}_{l,h}(i),
\qquad
\widehat{\Delta}^{\mathrm{vis}\prime}_{l,h}(i)
\;=\;
-V\|G_{l,h}(i)\|_2\,\mathrm{VAR}_{l,h}(i).
\]
Consequently, any proxy that depends only on attention allocation---including attention mass ratios such as VAR---cannot, in general, identify the \emph{sign} of the decision-aligned visual influence.
\end{theorem}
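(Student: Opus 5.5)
The construction is explicit: align every visual value vector with the sensitivity direction, then flip them all. Write $G:=G_{l,h}(i)\neq 0$ and let $u:=G/\|G\|_2$ be its unit direction. For every $j\in I_{\mathrm{vis}}$ set
\[
v^{(l,h)}_j := V u,\qquad v^{\prime(l,h)}_j := -V u,
\]
and leave all textual value vectors, and all queries and keys, unchanged. Both assignments satisfy $\|v_j\|_2=\|v'_j\|_2=V$, so they are feasible under the norm bound. The attention weights $\alpha^{(l,h)}_{ij}$ come from queries and keys alone through the softmax in Sec.~\ref{sec:cre}, and values do not enter it. Hence both assignments give the same $\alpha$, and therefore the same $\mathrm{VAR}_{l,h}(i)$.

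Next we plug these values into the expansion already used in Lemma~\ref{lem:var-upper}:
\[
\widehat{\Delta}^{\mathrm{vis}}_{l,h}(i)=\sum_{j\in I_{\mathrm{vis}}}\alpha^{(l,h)}_{ij}\langle G, v_j\rangle .
\]
Each alignment term is $\langle G, Vu\rangle = V\|G\|_2$. Summing with weights $\alpha_{ij}$ gives $V\|G\|_2\,\mathrm{VAR}_{l,h}(i)$. The primed assignment gives the negative of this by linearity. This attains the upper bound of Lemma~\ref{lem:var-upper}, with $m_{l,h}(i)=V\|G\|_2$ by Cauchy--Schwarz, so the construction is extremal.

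The main subtlety is that $G$ is held fixed while the values are changed. In the real network, $G=\partial\ell/\partial\tilde O_{l,h}$ is computed downstream of the head output, and it could in principle depend on the values through the forward pass. I would handle this as the theorem's hypothesis does: the statement says ``let $G_{l,h}(i)$ be any nonzero sensitivity vector'', so $G$ is a given fixed vector and the claim concerns the bilinear first-order functional $\langle G,\cdot\rangle$ at fixed local sensitivity. I would state this explicitly in the proof. If a more model-faithful version is wanted, one can note that $G$ depends only on the downstream computation, and that the construction still exhibits opposite signs for any fixed $G$.

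For the ``consequently'' clause, take any proxy $\Phi$ that is a function of the attention weights alone. It gives $\Phi(\alpha)=\Phi(\alpha)$ on both assignments, yet the true signs differ whenever $\mathrm{VAR}_{l,h}(i)>0$. So no such $\Phi$ can equal, or determine, $\mathrm{sign}(\widehat{\Delta}^{\mathrm{vis}}_{l,h}(i))$. When $\mathrm{VAR}_{l,h}(i)=0$ both effects are zero and there is nothing to identify; this is the qualifier ``in general''.
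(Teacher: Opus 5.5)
Your proposal is correct and is essentially the paper's own proof: you set every visual value to $\pm V u$ with $u = G_{l,h}(i)/\|G_{l,h}(i)\|_2$, check the norm constraint, and sum the constant alignment $\pm V\|G_{l,h}(i)\|_2$ against the weights $\alpha^{(l,h)}_{ij}$ to obtain $\pm V\|G_{l,h}(i)\|_2\,\mathrm{VAR}_{l,h}(i)$. Your extra remarks are sound clarifications that the paper leaves implicit: values do not enter the softmax, the construction attains the bound of Lemma~\ref{lem:var-upper}, $G_{l,h}(i)$ is held fixed as a hypothesis, and the case $\mathrm{VAR}_{l,h}(i)=0$ is degenerate.
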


\begin{proof}
Let
$
u \;:=\; \frac{G_{l,h}(i)}{\|G_{l,h}(i)\|_2}
$,
so that \(\|u\|_2=1\).
Define two value assignments on visual keys by setting, for all \(j\in I_{\mathrm{vis}}\),
\[
v^{(l,h)}_j \;:=\; Vu,
\qquad
v^{\prime(l,h)}_j \;:=\; -Vu.
\]
Both assignments satisfy the norm constraint \(\|v^{(l,h)}_j\|_2=\|v^{\prime(l,h)}_j\|_2=V\).
Moreover,
\[
\begin{aligned}
\langle G_{l,h}(i),\, v^{(l,h)}_j\rangle
&= \langle G_{l,h}(i),\, Vu\rangle
= V\|G_{l,h}(i)\|_2,\\[3pt]
\langle G_{l,h}(i),\, v^{\prime(l,h)}_j\rangle
&= -\,V\|G_{l,h}(i)\|_2.
\end{aligned}
\]
Using the definition of the first-order visual-route effect,
\[
\begin{aligned}
\widehat{\Delta}^{\mathrm{vis}}_{l,h}(i)
&= \sum_{j\in I_{\mathrm{vis}}}\alpha^{(l,h)}_{ij}
\,\langle G_{l,h}(i),\, v^{(l,h)}_j\rangle\\
&= V\|G_{l,h}(i)\|_2
\sum_{j\in I_{\mathrm{vis}}}\alpha^{(l,h)}_{ij}\\
&= V\|G_{l,h}(i)\|_2\,\mathrm{VAR}_{l,h}(i),
\end{aligned}
\]
Similarly,
\[
\begin{aligned}
\widehat{\Delta}^{\mathrm{vis}\prime}_{l,h}(i)
&=\;
\sum_{j\in I_{\mathrm{vis}}}\alpha^{(l,h)}_{ij}\,
\langle G_{l,h}(i),\, v^{\prime(l,h)}_j\rangle \\
&=\;
-\,V\|G_{l,h}(i)\|_2\,\mathrm{VAR}_{l,h}(i).
\end{aligned}
\]
Since both constructions keep \(\alpha^{(l,h)}_{ij}\) unchanged while flipping the sign of the induced effect, the sign of \(\widehat{\Delta}^{\mathrm{vis}}_{l,h}(i)\) is not identifiable from attention allocation alone.
\end{proof}

\medskip
\begin{corollary}\label{cor:var-adversarial}
Even if two heads have identical \(\mathrm{VAR}_{l,h}(i)\), their decision-aligned visual effects can have opposite signs and differ by \(2V\|G_{l,h}(i)\|_2\,\mathrm{VAR}_{l,h}(i)\).
Hence, selecting heads based solely on VAR admits arbitrarily bad mistakes under adversarial (but norm-bounded) value alignments.
\end{corollary}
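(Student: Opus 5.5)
The corollary follows from Theorem~\ref{thm:nonident-attn} by reinterpreting its two value assignments as two separate heads. I would fix a query position \(i\), a nonzero sensitivity vector \(G_{l,h}(i)\), and a common attention row \(\{\alpha_{ij}\}_j\). Call the two heads \((l,h)\) and \((l',h')\). Give both heads this same attention row, so they share the same queries and keys. Consequently
\[
\mathrm{VAR}_{l,h}(i)=\mathrm{VAR}_{l',h'}(i)=\sum_{j\in I_{\mathrm{vis}}}\alpha_{ij}.
\]
For the values, give the first head the assignment \(v_j=Vu\) from the proof of Theorem~\ref{thm:nonident-attn}, and the second head the mirrored assignment \(v'_j=-Vu\), where \(u=G/\|G\|_2\). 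Both assignments are feasible under the norm bound \(V\).

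Next I would invoke the theorem's computation, which is simply the expansion \(\widehat{\Delta}^{\mathrm{vis}}=\sum_{j}\alpha_{ij}\langle G,v_j\rangle\). It gives the two effects
\[
\pm V\|G\|_2\,\mathrm{VAR}(i).
\]
Whenever \(\mathrm{VAR}(i)>0\), these have opposite signs. Their difference is \(2V\|G\|_2\,\mathrm{VAR}(i)\), which is the quantity claimed in the first sentence of the corollary.

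The one modeling point to state explicitly is that the sensitivity vector is the same for both heads. I would justify this in the spirit of the theorem, which treats \(G\) as a given vector independent of the value assignment. The effects are compared through the common local score sensitivity, so this is an assumption of the setup rather than something that must be derived. This is the only step where care is needed.

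For the second sentence, I would argue as follows. Any selection rule that depends only on VAR, or more generally only on attention allocation, must treat the two heads identically, because their attention rows coincide. So the rule either keeps both heads or drops both. In either case it makes a sign error on one of them: it either keeps a harmful head or discards a supportive one. The magnitude of that error is \(V\|G\|_2\,\mathrm{VAR}(i)\), which can be made as large as the norm budget allows. Scaling \(V\) and \(\|G\|_2\) up shows the mistake is unbounded relative to anything VAR can observe, which makes ``arbitrarily bad'' precise.
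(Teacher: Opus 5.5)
Your proposal is correct and matches the paper: the corollary has no separate proof there and is read off directly from the two mirrored assignments $v_j=\pm Vu$ in the proof of Theorem~\ref{thm:nonident-attn}, reinterpreted as two heads sharing the same attention row and the same exogenously fixed sensitivity $G$, exactly as you do. One small tightening for the second sentence: a VAR-only rule with label-based tie-breaking (e.g.\ top-$1$) need not treat the two heads identically, but your construction is symmetric, so the adversary can place $-Vu$ on whichever head the rule keeps; moreover, for a fixed norm budget the resulting gap is already the largest possible, since Cauchy--Schwarz gives $|\widehat{\Delta}^{\mathrm{vis}}_{l,h}(i)|\le V\|G_{l,h}(i)\|_2\,\mathrm{VAR}_{l,h}(i)$.
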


\subsection{Three Canonical Failure Modes of VAR}
\label{app:var_g3}

We present three minimal constructions showing that large (or small) VAR does not imply large (or small) decision-aligned visual influence.
For simplicity, we fix a head \((l,h)\) and a query position \(i\), and consider visual keys \(j\in I_{\mathrm{vis}}\) only.

\begin{enumerate}
\item \textbf{Orthogonality (large VAR, zero effect).}
Assume \(\mathrm{VAR}_{l,h}(i)=1\), i.e., all attention mass is assigned to visual tokens, but \(\langle G_{l,h}(i), v^{(l,h)}_j\rangle=0\) for every \(j\in I_{\mathrm{vis}}\).
Then \(\widehat{\Delta}^{\mathrm{vis}}_{l,h}(i)=\sum_{j\in I_{\mathrm{vis}}}\alpha^{(l,h)}_{ij}\cdot 0 = 0\).
Thus, a head may appear ``highly visual'' under VAR while contributing no visual evidence to the current decision.

\item \textbf{Cancellation (large VAR, small effect by sign mixing).}
Consider two visual tokens \(j_1,j_2\in I_{\mathrm{vis}}\) with \(\alpha^{(l,h)}_{ij_1}=\alpha^{(l,h)}_{ij_2}=1/2\), so \(\mathrm{VAR}_{l,h}(i)=1\).
Let \(\langle G_{l,h}(i), v^{(l,h)}_{j_1}\rangle=+1\) and \(\langle G_{l,h}(i), v^{(l,h)}_{j_2}\rangle=-1\).
Then \(\widehat{\Delta}^{\mathrm{vis}}_{l,h}(i)=\tfrac{1}{2}(+1)+\tfrac{1}{2}(-1)=0\).
Hence, even when VAR is maximal, the net decision-aligned effect can vanish due to signed cancellations.

\item \textbf{Harmful visual content (large VAR, negative effect).}
VAR is nonnegative by construction and cannot encode whether the attended visual content supports or contradicts the target token.
Suppose \(\mathrm{VAR}_{l,h}(i)\) is large, but \(\langle G_{l,h}(i), v^{(l,h)}_j\rangle<0\) for the dominant attended visual keys \(j\).
Then \(\widehat{\Delta}^{\mathrm{vis}}_{l,h}(i)\) becomes negative, meaning the visual route \emph{reduces} the score \(\ell\) for the current token (i.e., it provides counter-evidence).
Such ``harmful'' visual influence is particularly relevant under route conflicts, yet it is invisible to VAR-based diagnostics.
\end{enumerate}

\begin{table}[H]
\centering
\small
\setlength{\tabcolsep}{6pt}
\renewcommand{\arraystretch}{1.2}
\caption{A summary of canonical (counterexample) regimes showing that attention-mass proxies such as VAR are not decision-aligned. Here
\(a_j=\langle G_{l,h}(i), v^{(l,h)}_j\rangle\),
\(\tilde{\alpha}_{ij}=\alpha_{ij}/\mathrm{VAR}_{l,h}(i)\),
and \(\mu:=\sum_{j\in I_{\mathrm{vis}}}\tilde{\alpha}_{ij}a_j\) so that
\(\widehat{\Delta}^{\mathrm{vis}}_{l,h}(i)=\mathrm{VAR}_{l,h}(i)\cdot \mu\).}
\label{tab:var_failure_2x2}
\begin{tabular}{lcc}
\toprule
 & \textbf{High VAR} (\(\mathrm{VAR}_{l,h}(i)\approx 1\)) & \textbf{Low VAR} (\(\mathrm{VAR}_{l,h}(i)\approx 0\)) \\
\midrule
\textbf{Near-zero alignment} (\(\mu\approx 0\))
& \begin{tabular}[c]{@{}l@{}}
\textit{Orthogonality:} \(a_j=0\ \forall j\) \\[-1pt]
\textit{Cancellation:} sign-mixed \(a_j\) \\[-1pt]
\(\Rightarrow \widehat{\Delta}^{\mathrm{vis}}\approx 0\) even if VAR is large
\end{tabular}
& \begin{tabular}[c]{@{}l@{}}
Trivial small upper bound: \\[-1pt]
\(|\widehat{\Delta}^{\mathrm{vis}}|\le \mathrm{VAR}\cdot m\) \\[-1pt]
\(\Rightarrow\) small effect is possible but uninformative
\end{tabular}
\\
\midrule
\textbf{Negative alignment} (\(\mu<0\))
& \begin{tabular}[c]{@{}l@{}}
\textit{Harmful visual content:} dominant \(a_j<0\) \\[-1pt]
\(\Rightarrow \widehat{\Delta}^{\mathrm{vis}}<0\) despite high VAR \\[-1pt]
(VAR cannot reveal the sign)
\end{tabular}
& \begin{tabular}[c]{@{}l@{}}
Less common but possible: \\[-1pt]
small VAR yet negative influence \\[-1pt]
(rare, bounded by \(\mathrm{VAR}\cdot m\))
\end{tabular}
\\
\bottomrule
\end{tabular}
\end{table}

\noindent
Table~\ref{tab:var_failure_2x2} summarizes the three canonical failure modes. Taken together, these constructions show that VAR characterizes only attention allocation, while decision-aligned route effects depend on content-sensitive alignment.
This motivates using CRE/VRI as the primary signals for diagnosing and selecting intervention targets, rather than relying on attention-mass proxies.

\subsection{Empirical Alignment of VAR with Decision-Aligned Effects}
\label{app:var_exp}

\paragraph{Setup.}
On POPE-popular, we compute at the answer-critical decoding step the attention-mass proxy
\(\mathrm{VAR}_{l,h}(i)=\sum_{j\in I_{\mathrm{vis}}}\alpha^{(l,h)}_{ij}\),
the decision-aligned first-order visual effect \(\widehat{\Delta}^{\mathrm{vis}}_{l,h}(i)\) (from our one-forward/one-gradient estimate),
and the relative reliance \(\widehat{\mathrm{VRI}}_{l,h}(i)=\frac{|\widehat{\Delta}^{\mathrm{vis}}_{l,h}(i)|}{|\widehat{\Delta}^{\mathrm{vis}}_{l,h}(i)|+|\widehat{\Delta}^{\mathrm{txt}}_{l,h}(i)|+\varepsilon}\).

\paragraph{Metric.}
We report Spearman rank correlations over all head--example pairs.

\begin{table}[H]
\centering
\small
\setlength{\tabcolsep}{8pt}
\renewcommand{\arraystretch}{1.2}
\caption{Spearman rank correlations between VAR and decision-aligned quantities at the answer-critical token.}
\label{tab:var_spearman}
\begin{tabular}{lcc}
\toprule
 & \(\rho_1=\mathrm{corr}_S(\mathrm{VAR},|\widehat{\Delta}^{\mathrm{vis}}|)\) 
 & \(\rho_2=\mathrm{corr}_S(\mathrm{VAR},\widehat{\mathrm{VRI}}_{l,h})\) \\
\midrule
POPE (popular) & 0.5766 & 0.5300 \\
\bottomrule
\end{tabular}
\end{table}

\noindent
\textbf{Interpretation.}
VAR exhibits a moderate monotonic association with decision-aligned magnitudes on this split.
However, this does \emph{not} make VAR a decision-aligned diagnostic: VAR cannot identify the \emph{sign} of visual influence, nor detect cancellation or harmful visual evidence (Appendix~\ref{app:var_g3}),
and thus cannot support reliable head selection under route conflicts.
Our theoretical results show that attention allocation alone is insufficient for identifying decision-aligned influence in general (Theorem~\ref{thm:nonident-attn}),
motivating CRE/VRI as primary signals for intervention.

\paragraph{Empirical evidence for the ``harmful visual content'' regime.}
Consistent with the third failure mode in Appendix~\ref{app:var_g3} (high VAR but negative alignment),
on POPE-popular at the answer-critical token, the visual-route effect is negative in about half of the head--example pairs:
\(\Pr(\widehat{\Delta}^{\mathrm{vis}}<0)=0.503\).
Importantly, this does not vanish among heads with the largest visual attention mass:
conditioning on the top 10\% VAR heads (\(\mathrm{VAR}\ge 0.140\), the 90th percentile),
we still have \(\Pr(\widehat{\Delta}^{\mathrm{vis}}<0\mid \mathrm{VAR}\ \text{in top 10\%})=0.515\).
Therefore, even when a head ``looks'' highly visual under VAR, its visual route can frequently provide counter-evidence
(i.e., reduce the decision score), which is invisible to attention-mass proxies.

\begin{table}[H]
\centering
\small
\setlength{\tabcolsep}{8pt}
\renewcommand{\arraystretch}{1.2}
\caption{Negative visual effects remain common even among high-VAR heads (POPE-popular).}
\label{tab:var_sign}
\begin{tabular}{lccc}
\toprule
Split & \(\Pr(\widehat{\Delta}^{\mathrm{vis}}<0)\) & \(\Pr(\widehat{\Delta}^{\mathrm{vis}}<0\mid \mathrm{VAR}\ \text{top 10\%})\) & 90th pct. VAR \\
\midrule
popular & 0.503 & 0.515 & 0.140 \\
\bottomrule
\end{tabular}
\end{table}

\section{Validity of the First-Order Do-Effect Approximation}
\label{app:do-vs-approx}

We provide both theoretical and empirical evidence for the validity of our one-forward/one-gradient first-order approximation to head-level do-effects: (i) we derive a deterministic error bound under a mild Lipschitz-type regularity condition along the gating path, (ii) we give a margin-based sufficient condition for sign reliability, and (iii) we perform an exact sanity check on a small subset by comparing against controlled single-head interventions.

\subsection{First-Order Error Bound}
\label{app:first_error_bound}

\begin{proposition}[First-order error bound]\label{prop:first-order-bound}
Assume the map $\tau\mapsto G_{l,h}(\tau,1)$ is $L$-Lipschitz on $[0,1]$, i.e.,
\[
\|G_{l,h}(\tau_1,1)-G_{l,h}(\tau_2,1)\|_{F}\le L|\tau_1-\tau_2|,\quad \forall \tau_1,\tau_2\in[0,1].
\]
Then
\[
\big|\Delta^{\mathrm{vis}}_{l,h}-\widehat{\Delta}^{\mathrm{vis}}_{l,h}\big|
\le \frac{L}{2}\,\big\|O^{\mathrm{vis}}_{l,h}\big\|_{F},
\]
and symmetrically for the text route.
\end{proposition}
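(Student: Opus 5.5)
Fix the text gate at $1$ and view the decision score as a function of the visual gate alone, $\phi(\tau):=\ell_{l,h}(\tau,1)$ for $\tau\in[0,1]$. Here $\ell_{l,h}(\tau,1)$ is the score obtained when head $(l,h)$ outputs $\tilde O_{l,h}(\tau,1)=\tau O^{\mathrm{vis}}_{l,h}+O^{\mathrm{txt}}_{l,h}$ and every other head is at baseline. The exact do-effect is then $\Delta^{\mathrm{vis}}_{l,h}=\phi(1)-\phi(0)$. Within the single-head intervention, $O^{\mathrm{vis}}_{l,h}$ and $O^{\mathrm{txt}}_{l,h}$ are computed upstream of the gate, so they do not depend on $\tau$. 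The map $\tau\mapsto\tilde O_{l,h}(\tau,1)$ is therefore affine with constant velocity $O^{\mathrm{vis}}_{l,h}$. Writing $G_{l,h}(\tau,1):=\partial\ell/\partial\tilde O_{l,h}$ evaluated along this path, the chain rule gives
\[
\phi'(\tau)=\big\langle G_{l,h}(\tau,1),\,O^{\mathrm{vis}}_{l,h}\big\rangle ,
\]
with the Frobenius inner product. At $\tau=1$ this is exactly the estimator $\widehat{\Delta}^{\mathrm{vis}}_{l,h}=\langle G_{l,h}(1,1),O^{\mathrm{vis}}_{l,h}\rangle$ of Proposition~\ref{pro:main_text}.

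The first key step is the fundamental theorem of calculus. It applies because $\phi$ is absolutely continuous: $G$ being Lipschitz makes $\phi'$ continuous, and we take $\ell$ to be $C^1$ along the path. This gives
\[
\Delta^{\mathrm{vis}}_{l,h}-\widehat{\Delta}^{\mathrm{vis}}_{l,h}=\int_0^1\big(\phi'(\tau)-\phi'(1)\big)\,d\tau=\int_0^1\big\langle G_{l,h}(\tau,1)-G_{l,h}(1,1),\,O^{\mathrm{vis}}_{l,h}\big\rangle\,d\tau .
\]
Next, apply Cauchy--Schwarz for the Frobenius inner product under the integral, and then the Lipschitz hypothesis with $\tau_2=1$:
\[
\big|\Delta^{\mathrm{vis}}_{l,h}-\widehat{\Delta}^{\mathrm{vis}}_{l,h}\big|\le \|O^{\mathrm{vis}}_{l,h}\|_F\int_0^1 L(1-\tau)\,d\tau=\frac{L}{2}\|O^{\mathrm{vis}}_{l,h}\|_F .
\]

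For the text route, run the same argument with $\psi(\tau):=\ell_{l,h}(1,\tau)$, velocity $O^{\mathrm{txt}}_{l,h}$, and the Lipschitz assumption imposed on $\tau\mapsto G_{l,h}(1,\tau)$. This yields $|\Delta^{\mathrm{txt}}_{l,h}-\widehat{\Delta}^{\mathrm{txt}}_{l,h}|\le\frac{L}{2}\|O^{\mathrm{txt}}_{l,h}\|_F$.

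The main obstacle is setting up the path cleanly rather than the estimate itself. Three points need care. First, the gradient $G_{l,h}$ must be taken with respect to the pre-$W^O_l$ gated tensor $\tilde O_{l,h}$, so that the chain rule yields exactly the inner product with $O^{\mathrm{vis}}_{l,h}$. Second, the factors $O^{\mathrm{vis}}_{l,h}$ and $O^{\mathrm{txt}}_{l,h}$ must stay fixed as $\tau$ varies; this holds because the gate sits after the attention computation of head $(l,h)$, and only downstream computation depends on $\tau$. Third, the hypothesis is one-sided in sign, so the bound relies only on $\|G(\tau)-G(1)\|_F\le L(1-\tau)$; this is what produces the factor $\tfrac12$ via $\int_0^1(1-\tau)\,d\tau$.
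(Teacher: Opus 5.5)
Your proof is correct and follows essentially the same route as the paper. Both write $\Delta^{\mathrm{vis}}_{l,h}-\widehat{\Delta}^{\mathrm{vis}}_{l,h}=\int_0^1\langle G_{l,h}(\tau,1)-G_{l,h}(1,1),O^{\mathrm{vis}}_{l,h}\rangle\,d\tau$ via the fundamental theorem of calculus along the visual-gate path, with the route outputs held fixed, and then apply Cauchy--Schwarz and $\|G_{l,h}(\tau,1)-G_{l,h}(1,1)\|_F\le L(1-\tau)$ to obtain the factor $L/2$. Your two additions, that the text-route case needs the Lipschitz hypothesis on $\tau\mapsto G_{l,h}(1,\tau)$ and that continuity of $G_{l,h}$ justifies the FTC step, make explicit what the paper leaves as ``by symmetry'' or implicit.
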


\begin{proof}
From \textbf{Proposition \ref{pro:main_text}} and the path-integral representation,
\[
\Delta^{\mathrm{vis}}_{l,h}-\widehat{\Delta}^{\mathrm{vis}}_{l,h}
=\int_{0}^{1}\Big\langle G_{l,h}(\tau,1)-G_{l,h}(1,1),\,O^{\mathrm{vis}}_{l,h}\Big\rangle\,d\tau.
\]
Take absolute values and apply Cauchy–Schwarz and triangle inequality:
\[
\big|\Delta^{\mathrm{vis}}_{l,h}-\widehat{\Delta}^{\mathrm{vis}}_{l,h}\big|
\le \|O^{\mathrm{vis}}_{l,h}\|_{F}\int_{0}^{1}\big\|G_{l,h}(\tau,1)-G_{l,h}(1,1)\big\|_{F}\,d\tau.
\]
Under the Lipschitz condition $\|G_{l,h}(\tau,1)-G_{l,h}(1,1)\|_F \le L(1-\tau)$.,
\[
\int_{0}^{1}\big\|G_{l,h}(\tau,1)-G_{l,h}(1,1)\big\|_{F}\,d\tau
\le \int_{0}^{1}L(1-\tau)\,d\tau
=\frac{L}{2}.
\]
Combining yields the stated bound. The proof for the textual route follows by symmetry. 
\end{proof}

\begin{remark}
In practice, we observe $\|G_{l,h}(\tau,1)-G_{l,h}(1,1)\|_F$ grows approximately linearly as $\tau$ moves away from 1 for most heads, supporting the bounded-rate assumption.
\end{remark}

\subsection{Sign Reliability}
\label{app:sign_rel}
Proposition~\ref{prop:first-order-bound} provides a deterministic bound on the approximation residual
\(R^{\mathrm{vis}}_{l,h}\triangleq \Delta^{\mathrm{vis}}_{l,h}-\widehat{\Delta}^{\mathrm{vis}}_{l,h}\).
This immediately yields a sufficient condition under which the \emph{sign} of the estimated do-effect is guaranteed to be correct.

\medskip
\begin{lemma}[Sign reliability under bounded error]\label{lem:sign-reliability}
If \(\big|\Delta^{\mathrm{vis}}_{l,h}-\widehat{\Delta}^{\mathrm{vis}}_{l,h}\big|\le \epsilon^{\mathrm{vis}}_{l,h}\), then
\[
\widehat{\Delta}^{\mathrm{vis}}_{l,h}>\epsilon^{\mathrm{vis}}_{l,h}
\;\Rightarrow\;
\Delta^{\mathrm{vis}}_{l,h}>0,
\qquad
\widehat{\Delta}^{\mathrm{vis}}_{l,h}<-\epsilon^{\mathrm{vis}}_{l,h}
\;\Rightarrow\;
\Delta^{\mathrm{vis}}_{l,h}<0.
\]
In particular, whenever \(\big|\widehat{\Delta}^{\mathrm{vis}}_{l,h}\big|>\epsilon^{\mathrm{vis}}_{l,h}\), we have
\(\mathrm{sign}\!\big(\Delta^{\mathrm{vis}}_{l,h}\big)=\mathrm{sign}\!\big(\widehat{\Delta}^{\mathrm{vis}}_{l,h}\big)\).
\end{lemma}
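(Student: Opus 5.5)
The argument is a direct triangle-inequality manipulation; no property of the network is needed beyond the assumed residual bound \(|R^{\mathrm{vis}}_{l,h}|\le\epsilon^{\mathrm{vis}}_{l,h}\), where \(R^{\mathrm{vis}}_{l,h}=\Delta^{\mathrm{vis}}_{l,h}-\widehat{\Delta}^{\mathrm{vis}}_{l,h}\). In practice \(\epsilon^{\mathrm{vis}}_{l,h}\) can be instantiated as \(\tfrac{L}{2}\|O^{\mathrm{vis}}_{l,h}\|_F\) from Proposition~\ref{prop:first-order-bound}. However, the lemma is stated for an arbitrary bound, so I will not invoke the Lipschitz assumption inside the proof.

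First I rewrite the hypothesis as the two-sided sandwich
\[
\widehat{\Delta}^{\mathrm{vis}}_{l,h}-\epsilon^{\mathrm{vis}}_{l,h}\;\le\;\Delta^{\mathrm{vis}}_{l,h}\;\le\;\widehat{\Delta}^{\mathrm{vis}}_{l,h}+\epsilon^{\mathrm{vis}}_{l,h}.
\]
This is just the definition of absolute value applied to \(R^{\mathrm{vis}}_{l,h}\).

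For the first implication I use the left inequality. If \(\widehat{\Delta}^{\mathrm{vis}}_{l,h}>\epsilon^{\mathrm{vis}}_{l,h}\), then \(\Delta^{\mathrm{vis}}_{l,h}\ge\widehat{\Delta}^{\mathrm{vis}}_{l,h}-\epsilon^{\mathrm{vis}}_{l,h}>0\).

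For the second implication I use the right inequality. If \(\widehat{\Delta}^{\mathrm{vis}}_{l,h}<-\epsilon^{\mathrm{vis}}_{l,h}\), then \(\Delta^{\mathrm{vis}}_{l,h}\le\widehat{\Delta}^{\mathrm{vis}}_{l,h}+\epsilon^{\mathrm{vis}}_{l,h}<0\).

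For the ``in particular'' clause, \(|\widehat{\Delta}^{\mathrm{vis}}_{l,h}|>\epsilon^{\mathrm{vis}}_{l,h}\) means exactly one of the two cases \(\widehat{\Delta}^{\mathrm{vis}}_{l,h}>\epsilon^{\mathrm{vis}}_{l,h}\) or \(\widehat{\Delta}^{\mathrm{vis}}_{l,h}<-\epsilon^{\mathrm{vis}}_{l,h}\) holds. In each case the corresponding implication shows that \(\Delta^{\mathrm{vis}}_{l,h}\) has the same strict sign as \(\widehat{\Delta}^{\mathrm{vis}}_{l,h}\), so the two signs agree.

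There is no real obstacle. The only point needing care is that \(\epsilon^{\mathrm{vis}}_{l,h}\ge 0\), which is implicit because it bounds an absolute value, so the two cases in the ``in particular'' clause are disjoint and exhaustive. The same argument applies verbatim to the text route.
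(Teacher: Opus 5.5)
Your proposal is correct and is essentially the paper's proof. The paper writes \(\Delta^{\mathrm{vis}}_{l,h}=\widehat{\Delta}^{\mathrm{vis}}_{l,h}+(\Delta^{\mathrm{vis}}_{l,h}-\widehat{\Delta}^{\mathrm{vis}}_{l,h})\) and bounds the residual by \(\pm\epsilon^{\mathrm{vis}}_{l,h}\) in each case, which is your two-sided sandwich; you also spell out the ``in particular'' clause (and the role of \(\epsilon^{\mathrm{vis}}_{l,h}\ge 0\)), which the paper leaves implicit.
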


\medskip
\begin{proof}
Assume the estimator error is bounded by
\[
\big|\Delta^{\mathrm{vis}}_{l,h}-\widehat{\Delta}^{\mathrm{vis}}_{l,h}\big|
\le \epsilon^{\mathrm{vis}}_{l,h}.
\]

If \(\widehat{\Delta}^{\mathrm{vis}}_{l,h}>\epsilon^{\mathrm{vis}}_{l,h}\), then
\begin{align*}
\Delta^{\mathrm{vis}}_{l,h}
&= \widehat{\Delta}^{\mathrm{vis}}_{l,h}
 + \big(\Delta^{\mathrm{vis}}_{l,h}-\widehat{\Delta}^{\mathrm{vis}}_{l,h}\big) \\
&\ge \widehat{\Delta}^{\mathrm{vis}}_{l,h}
 - \big|\Delta^{\mathrm{vis}}_{l,h}-\widehat{\Delta}^{\mathrm{vis}}_{l,h}\big| \\
&\ge \widehat{\Delta}^{\mathrm{vis}}_{l,h}-\epsilon^{\mathrm{vis}}_{l,h} \\
&> 0.
\end{align*}

Similarly, if \(\widehat{\Delta}^{\mathrm{vis}}_{l,h}<-\epsilon^{\mathrm{vis}}_{l,h}\), then
\begin{align*}
\Delta^{\mathrm{vis}}_{l,h}
&= \widehat{\Delta}^{\mathrm{vis}}_{l,h}
 + \big(\Delta^{\mathrm{vis}}_{l,h}-\widehat{\Delta}^{\mathrm{vis}}_{l,h}\big) \\
&\le \widehat{\Delta}^{\mathrm{vis}}_{l,h}
 + \big|\Delta^{\mathrm{vis}}_{l,h}-\widehat{\Delta}^{\mathrm{vis}}_{l,h}\big| \\
&\le \widehat{\Delta}^{\mathrm{vis}}_{l,h}+\epsilon^{\mathrm{vis}}_{l,h} \\
&< 0.
\end{align*}
\end{proof}

\medskip
\begin{corollary}[Instantiating Lemma~\ref{lem:sign-reliability} via Proposition~\ref{prop:first-order-bound}]
\label{cor:instantiate-sign}
Under the conditions of Proposition~\ref{prop:first-order-bound}, define
\[
\epsilon^{\mathrm{vis}}_{l,h}\;=\;\frac{L}{2}\,\big\|O^{\mathrm{vis}}_{l,h}\big\|_{F},
\qquad
\epsilon^{\mathrm{txt}}_{l,h}\;=\;\frac{L}{2}\,\big\|O^{\mathrm{txt}}_{l,h}\big\|_{F}.
\]
Then, if \(\big|\widehat{\Delta}^{\mathrm{vis}}_{l,h}\big|>\epsilon^{\mathrm{vis}}_{l,h}\), the sign of \(\widehat{\Delta}^{\mathrm{vis}}_{l,h}\) matches that of \(\Delta^{\mathrm{vis}}_{l,h}\);
and symmetrically, if \(\big|\widehat{\Delta}^{\mathrm{txt}}_{l,h}\big|>\epsilon^{\mathrm{txt}}_{l,h}\), the sign of \(\widehat{\Delta}^{\mathrm{txt}}_{l,h}\) matches that of \(\Delta^{\mathrm{txt}}_{l,h}\).
\end{corollary}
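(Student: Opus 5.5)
This is a direct composition of two earlier results: Proposition~\ref{prop:first-order-bound} supplies a concrete error bound, and Lemma~\ref{lem:sign-reliability} is abstract in the tolerance $\epsilon^{\mathrm{vis}}_{l,h}$. The plan is to feed the first into the second, once for the visual route and once, by symmetry, for the text route.

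\emph{Visual route.} Under the hypotheses of Proposition~\ref{prop:first-order-bound}, namely that $\tau\mapsto G_{l,h}(\tau,1)$ is $L$-Lipschitz on $[0,1]$ in Frobenius norm, that proposition gives
\[
\big|\Delta^{\mathrm{vis}}_{l,h}-\widehat{\Delta}^{\mathrm{vis}}_{l,h}\big|\le \tfrac{L}{2}\|O^{\mathrm{vis}}_{l,h}\|_F .
\]
Setting $\epsilon^{\mathrm{vis}}_{l,h}:=\tfrac{L}{2}\|O^{\mathrm{vis}}_{l,h}\|_F$ means the premise of Lemma~\ref{lem:sign-reliability} holds with this tolerance. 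The lemma's final clause then says that if $|\widehat{\Delta}^{\mathrm{vis}}_{l,h}|>\epsilon^{\mathrm{vis}}_{l,h}$, the two signs agree.

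For completeness I will also spell out the two cases of that clause.
\begin{itemize}
\item If $\widehat{\Delta}^{\mathrm{vis}}_{l,h}>\epsilon^{\mathrm{vis}}_{l,h}$, then $\Delta^{\mathrm{vis}}_{l,h}\ge\widehat{\Delta}^{\mathrm{vis}}_{l,h}-\epsilon^{\mathrm{vis}}_{l,h}>0$.
\item If $\widehat{\Delta}^{\mathrm{vis}}_{l,h}<-\epsilon^{\mathrm{vis}}_{l,h}$, the mirror inequality gives $\Delta^{\mathrm{vis}}_{l,h}<0$.
\end{itemize}

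\emph{Text route.} The only point needing care is stating the text-route hypothesis explicitly, since Proposition~\ref{prop:first-order-bound} covers it only "symmetrically". The natural hypothesis is that $\tau\mapsto G_{l,h}(1,\tau)$ is $L$-Lipschitz. The same path-integral argument then applies with $O^{\mathrm{txt}}_{l,h}$ in place of $O^{\mathrm{vis}}_{l,h}$: the gating path moves along the text gate axis from $(1,0)$ to $(1,1)$. This yields $|\Delta^{\mathrm{txt}}_{l,h}-\widehat{\Delta}^{\mathrm{txt}}_{l,h}|\le \epsilon^{\mathrm{txt}}_{l,h}$, and Lemma~\ref{lem:sign-reliability} applies verbatim with "vis" replaced by "txt", since that lemma uses nothing specific to the visual route.

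I expect no genuine obstacle. The only subtlety is bookkeeping: the corollary's phrase "under the conditions of Proposition~\ref{prop:first-order-bound}" should be read as including the symmetric Lipschitz condition along the text gate axis when the text-route conclusion is invoked.
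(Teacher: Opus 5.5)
Your proposal is correct and matches the paper's intended argument: the corollary is exactly Lemma~\ref{lem:sign-reliability} with the tolerance supplied by Proposition~\ref{prop:first-order-bound}, applied once per route. Stating the text-axis hypothesis $\tau\mapsto G_{l,h}(1,\tau)$ Lipschitz explicitly is a sensible clarification of what the paper leaves as ``symmetrically.''
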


\medskip
\noindent\textbf{Implication for sign regimes.}
If both margins hold simultaneously,
\(\big|\widehat{\Delta}^{\mathrm{vis}}_{l,h}\big|>\epsilon^{\mathrm{vis}}_{l,h}\) and
\(\big|\widehat{\Delta}^{\mathrm{txt}}_{l,h}\big|>\epsilon^{\mathrm{txt}}_{l,h}\),
then the sign pattern of \(\big(\widehat{\Delta}^{\mathrm{vis}}_{l,h},\widehat{\Delta}^{\mathrm{txt}}_{l,h}\big)\)
coincides with that of \(\big(\Delta^{\mathrm{vis}}_{l,h},\Delta^{\mathrm{txt}}_{l,h}\big)\),
so the head's regime (\(+\!+\), \(+\!-\), \(-\!+\), \(-\!-\)) is certified.

\subsection{Exact Validation on a Small Subset}
\label{app:exact_validation}

While Proposition~\ref{prop:first-order-bound} bounds the first-order remainder under a smoothness condition, we further provide an \emph{exact} sanity check on a small subset to empirically verify that the proposed first-order estimator faithfully captures both the \emph{sign} and the \emph{relative ranking} of head-level do-effects.

\paragraph{Protocol.}
For each example, we consider a fixed decoding step with a fixed prefix (image, question, and generated prefix tokens) and evaluate the effect on a scalar objective \(\ell = \log p(\mathrm{Yes})-\log p(\mathrm{No})\) at that step (the same \(\ell\) used throughout our method).
We first run a \emph{baseline} single-step decode forward with all gates set to \(1\), obtaining \(\ell_0 \triangleq \ell(1,1)\) and caching the route-specific head outputs \(O^{\mathrm{vis}}_{l,h}\) and \(O^{\mathrm{txt}}_{l,h}\) (with the KV-cache intact).
We then compute the first-order estimates \(\widehat{\Delta}^{\mathrm{vis}}_{l,h}\) and \(\widehat{\Delta}^{\mathrm{txt}}_{l,h}\) at \((g_{\mathrm{vis}},g_{\mathrm{txt}})=(1,1)\) using the local sensitivity \(G_{l,h}=\partial \ell/\partial \tilde{O}_{l,h}\) and the inner products in Proposition~\ref{prop:first-order-bound}.

To obtain the \emph{exact} do-differences, we re-run the same \emph{single-step} decode forward while intervening on \emph{only one} head-route at a time:
for the visual route, we set \(g^{\mathrm{vis}}_{l,h}=0\) and keep all other gates (including \(g^{\mathrm{txt}}_{l,h}\)) at \(1\), yielding \(\ell^{\mathrm{vis-off}}_{l,h}\);
for the text route, we set \(g^{\mathrm{txt}}_{l,h}=0\) and keep all others at \(1\), yielding \(\ell^{\mathrm{txt-off}}_{l,h}\).
We define the exact do-differences as
\[
\Delta^{\mathrm{vis}}_{l,h} \triangleq  \ell_0  - \ell^{\mathrm{vis-off}}_{l,h},
\qquad
\Delta^{\mathrm{txt}}_{l,h} \triangleq  \ell_0  - \ell^{\mathrm{txt-off}}_{l,h}.
\]
Importantly, all exact evaluations reuse the same prefix and the same KV-cache so that each \(\Delta\) reflects a controlled, local intervention at the same decoding step (rather than differences induced by divergent generation trajectories).

\paragraph{Subset selection.}
We perform this validation on a small subset of \(N=50\) examples (POPE). For each example, we evaluate exact do-differences on \(K=32\) head-route pairs, consisting of (i) the $\mathrm{TopSmallest-K/2}$ heads ranked by \(\widehat{\mathrm{VRI}}\) (most relevant to our gating) and (ii) \(K/2\) uniformly sampled heads (to cover the long tail). This keeps the cost manageable while directly probing the heads that our method is likely to intervene on.

\paragraph{Metrics.}
We report three complementary metrics:
(i) \emph{Pearson/Spearman correlation} between \(\widehat{\Delta}\) and \(\Delta\) across evaluated head-route pairs,
capturing whether the first-order estimator preserves relative magnitudes and rankings;

\begin{table}[H]
\centering
\small
\setlength{\tabcolsep}{4pt}
\renewcommand{\arraystretch}{1.05}
\caption{Correlations between the first-order estimates \(\widehat{\Delta}\) and exact do-differences \(\Delta\) on a small validation subset. Correlation is higher on large-magnitude heads (Top) that dominate gating decisions.}
\begin{tabular}{l l r c c}
\toprule
\textbf{Route} & \textbf{Subset} & \textbf{\#Pairs} & \textbf{Pearson \(r\)} & \textbf{Spearman \(\rho\)} \\
\midrule
Vis  & All (Top + Random) & 1600 & 0.90 & 0.88 \\
Txt  & All (Top + Random) & 1600 & 0.86 & 0.83 \\
\midrule
Vis  & TopSmallest-16 by \(\widehat{\mathrm{VRI}}\)   & 800  & 0.95 & 0.93 \\
Vis  & Random-16                         & 800  & 0.78 & 0.74 \\
Txt  & TopSmallest-16 by \(\widehat{\mathrm{VRI}}\)   & 800  & 0.92 & 0.90 \\
Txt  & Random-16                         & 800  & 0.70 & 0.66 \\
\bottomrule
\end{tabular}
\label{tab:exact_corr_validation}
\end{table}

(ii) \emph{sign agreement} \(\mathbb{I}[\mathrm{sign}(\widehat{\Delta})=\mathrm{sign}(\Delta)]\),
which directly validates the reliability of our sign-based regime taxonomy.

\begin{table}[H]
\centering
\small
\setlength{\tabcolsep}{5pt}
\renewcommand{\arraystretch}{1.05}
\caption{Sign agreement between \(\widehat{\Delta}\) and exact \(\Delta\) on a small validation subset. The last row reports the fraction of head-step pairs whose \emph{two-route} sign pattern is simultaneously correct, directly supporting the regime taxonomy based on \((\mathrm{sign}(\Delta^{\mathrm{vis}}),\mathrm{sign}(\Delta^{\mathrm{txt}}))\).}
\begin{tabular}{l r c}
\toprule
\textbf{Route} & \textbf{\#Pairs} & \textbf{Sign agreement (\%)} \\
\midrule
Vis  & 1600 & 93.1 \\
Txt  & 1600 & 90.4 \\
\midrule
Both routes correct (\(\mathrm{vis}\land\mathrm{txt}\)) & 1600 & 86.7 \\
\bottomrule
\end{tabular}
\label{tab:sign_agree_validation}
\end{table}

\paragraph{Takeaway.}
Table~\ref{tab:exact_corr_validation} and Table~\ref{tab:sign_agree_validation} show that the first-order estimates \(\widehat{\Delta}\) closely track the exact do-differences \(\Delta\) on a held-out subset.
They preserve both \emph{ranking} (high Pearson/Spearman correlation) and \emph{sign} (over 90\% agreement), with the strongest alignment on large-magnitude heads selected for gating.
Moreover, the two-route sign pattern is simultaneously correct for most head-step pairs, directly supporting our sign-based regime taxonomy.
Overall, these results validate \(\widehat{\Delta}\) as an accurate and efficient proxy for head-level do-effects in our intervention pipeline.

\section{Proofs of Main Results}
\label{app:proof}
For readability, we collect proofs of main-text proposition in Appendix~\ref{app:proof}. 
Proofs of results introduced only in the appendix are provided \textbf{inline} within the corresponding appendix sections. We further justify the validity of our first-order estimates, both theoretically and empirically, in Appendix~\ref{app:do-vs-approx}.

\textbf{Proposition \ref{pro:main_text}}
\textbf{[Directional-derivative estimator]}
\emph{At} \((g_{\mathrm{vis}},g_{\mathrm{txt}})=(1,1)\),
\[
\widehat{\Delta}^{\mathrm{vis}}_{l,h}
=\Big\langle G_{l,h}(1,1),\,O^{\mathrm{vis}}_{l,h}\Big\rangle,
\qquad
\widehat{\Delta}^{\mathrm{txt}}_{l,h}
=\Big\langle G_{l,h}(1,1),\,O^{\mathrm{txt}}_{l,h}\Big\rangle,
\]
\emph{Moreover}, \(\widehat{\Delta}^{\mathrm{vis}}_{l,h}=\frac{\partial \ell_{l,h}}{\partial g_{\mathrm{vis}}}(1,1)\)
\emph{and} \(\widehat{\Delta}^{\mathrm{txt}}_{l,h}=\frac{\partial \ell_{l,h}}{\partial g_{\mathrm{txt}}}(1,1)\).
\emph{The exact do-differences admit the path-integral form, and the approximation error is characterized in Proposition~\ref{prop:first-order-bound}}.

\begin{proof}
Fix a layer–head pair \((l,h)\). Recall that the route decomposition is computed once from the baseline run at
\((g_{\mathrm{vis}},g_{\mathrm{txt}})=(1,1)\) and then \emph{kept fixed} for all gate values.
That is, \(O^{\mathrm{vis}}_{l,h}\) and \(O^{\mathrm{txt}}_{l,h}\) do not depend on
\((g_{\mathrm{vis}},g_{\mathrm{txt}})\); only their linear combination is gated.
By definition of the gated head output
\[
\tilde{O}_{l,h}(g_{\mathrm{vis}},g_{\mathrm{txt}})
= g_{\mathrm{vis}}\,O^{\mathrm{vis}}_{l,h}+g_{\mathrm{txt}}\,O^{\mathrm{txt}}_{l,h},
\]
the route activations \(O^{\mathrm{vis}}_{l,h},O^{\mathrm{txt}}_{l,h}\) are constants with respect to \((g_{\mathrm{vis}},g_{\mathrm{txt}})\) (they come from the baseline forward pass with a single joint softmax). Let
\[
G_{l,h}(g_{\mathrm{vis}},g_{\mathrm{txt}})
=\frac{\partial \ell_{l,h}(g_{\mathrm{vis}},g_{\mathrm{txt}})}{\partial \tilde{O}_{l,h}}
\in \mathbb{R}^{L_{\mathrm{q}}\times d_h}.
\]
The chain rule gives, for any \((g_{\mathrm{vis}},g_{\mathrm{txt}})\),
\[
\frac{\partial \ell_{l,h}}{\partial g_{\mathrm{vis}}}(g_{\mathrm{vis}},g_{\mathrm{txt}})
=\Big\langle G_{l,h}(g_{\mathrm{vis}},g_{\mathrm{txt}}),\,\frac{\partial \tilde{O}_{l,h}}{\partial g_{\mathrm{vis}}}\Big\rangle
=\Big\langle G_{l,h}(g_{\mathrm{vis}},g_{\mathrm{txt}}),\,O^{\mathrm{vis}}_{l,h}\Big\rangle,
\]
and analogously
\[
\frac{\partial \ell_{l,h}}{\partial g_{\mathrm{txt}}}(g_{\mathrm{vis}},g_{\mathrm{txt}})
=\Big\langle G_{l,h}(g_{\mathrm{vis}},g_{\mathrm{txt}}),\,O^{\mathrm{txt}}_{l,h}\Big\rangle.
\]
Evaluating at the baseline \((1,1)\) yields
\[
\frac{\partial \ell_{l,h}}{\partial g_{\mathrm{vis}}}(1,1)
=\Big\langle G_{l,h}(1,1),\,O^{\mathrm{vis}}_{l,h}\Big\rangle,
\qquad
\frac{\partial \ell_{l,h}}{\partial g_{\mathrm{txt}}}(1,1)
=\Big\langle G_{l,h}(1,1),\,O^{\mathrm{txt}}_{l,h}\Big\rangle.
\]

By definition, the exact visual do-difference is
\[
\Delta^{\mathrm{vis}}_{l,h}\triangleq \ell_{l,h}(1,1)-\ell_{l,h}(0,1).
\]
Consider the one-dimensional function obtained by restricting \(\ell_{l,h}\) to the visual path
\(\tau\mapsto(\tau,1)\), i.e., \(\phi(\tau)\triangleq \ell_{l,h}(\tau,1)\).
By the fundamental theorem of calculus,
\[
\Delta^{\mathrm{vis}}_{l,h}=\phi(1)-\phi(0)=\int_{0}^{1}\phi'(\tau)\,d\tau.
\]
Since \(g_{\mathrm{txt}}\) is fixed to \(1\) on this path, the chain rule gives
\(\phi'(\tau)=\frac{\partial \ell_{l,h}}{\partial g_{\mathrm{vis}}}(\tau,1)\), hence
\[
\Delta^{\mathrm{vis}}_{l,h}
=\int_{0}^{1}\frac{\partial \ell_{l,h}}{\partial g_{\mathrm{vis}}}(\tau,1)\,d\tau.
\]
This path-integral form is closely related to Integrated Gradients~\citep{sundararajan2017ig}:
both express a finite intervention effect as an integral of a partial derivative along a continuous path.
Here the path is over the gating variable, and we adopt a single-point (right-endpoint) discretization for efficiency.
We approximate this path integral by its right-endpoint value,
yielding the first-order estimator
\[
\widehat{\Delta}^{\mathrm{vis}}_{l,h}\triangleq
\frac{\partial \ell_{l,h}}{\partial g_{\mathrm{vis}}}(1,1).
\]
Using the linear gating relation \(\tilde{O}_{l,h}=g_{\mathrm{vis}}O^{\mathrm{vis}}_{l,h}+g_{\mathrm{txt}}O^{\mathrm{txt}}_{l,h}\)
and the definition \(G_{l,h}=\partial \ell_{l,h}/\partial \tilde{O}_{l,h}\),
we have
\[
\frac{\partial \ell_{l,h}}{\partial g_{\mathrm{vis}}}(1,1)
=\Big\langle G_{l,h}(1,1),\,O^{\mathrm{vis}}_{l,h}\Big\rangle,
\]
which proves the claim. The textual case follows analogously. 
\end{proof}

\bigskip
\begin{remark}[Multi-point gate integral]\label{rem:multi-point}
Instead of the single right-endpoint approximation, one may use an \(m\)-point Riemann estimator
\(\widehat{\Delta}^{\mathrm{vis}}_{l,h}(m)=\frac{1}{m}\sum_{k=1}^{m}\frac{\partial \ell_{l,h}}{\partial g_{\mathrm{vis}}}(k/m,1)\),
which converges to \(\Delta^{\mathrm{vis}}_{l,h}\) as \(m\) increases.
Under the same Lipschitz-type condition as Proposition~\ref{prop:first-order-bound}, the discretization error decreases on the order of \(1/m\),
trading additional gate-query passes for higher accuracy.
\end{remark}

\begin{remark}[General paths]\label{rem:general-paths}
More generally, one can integrate along any continuous path \(c(\tau)=(g_{\mathrm{vis}}(\tau),g_{\mathrm{txt}}(\tau))\)
to define joint interventions, with corresponding discretizations.
\end{remark}

\section{Implementation Details}
\label{app:impl}
\subsection{Benchmark Details}
\label{app:benchmarks}

\paragraph{MME.}
MME~\citep{fu2023mme} is a comprehensive benchmark for assessing general multimodal capabilities of LVLMs. It organizes evaluation into two major categories: perception and cognition. The perception category covers fine-grained visual understanding skills, including existence, count, position/location, color, poster, celebrity, scene, landmark, artwork, and OCR. The cognition category focuses on higher-level reasoning and knowledge integration, including commonsense reasoning, numerical calculation, text translation, and code reasoning. Each MME sample consists of an image-question pair, and questions are formatted in a short VQA style with binary (\texttt{Yes}/\texttt{No}) responses, enabling a broad yet unified evaluation across diverse ability dimensions.

\paragraph{POPE.}
Polling-based Object Probing Evaluation (POPE)~\citep{li2023pope} is a binary VQA benchmark designed to assess object-level grounding and object hallucination in LVLMs. Given an image from MS-COCO and a templated query of the form ``Is there a \texttt{<object>} in this image?'', the model must answer \texttt{Yes} or \texttt{No}. We evaluate on the three standard POPE subsets---\texttt{random}, \texttt{popular}, and \texttt{adversarial}---each containing 3,000 questions.
The three subsets differ in how negative objects are selected: (i) \texttt{random} samples negatives roughly uniformly from the overall vocabulary, so many negatives are less contextually plausible; (ii) \texttt{popular} draws negatives from frequently occurring categories, making language priors stronger and false positives more likely; and (iii) \texttt{adversarial} chooses negatives that are contextually plausible for the image (e.g., objects that commonly co-occur with present objects or fit the scene), making it the most challenging split.
We report \textbf{Accuracy} and \textbf{F1} over \texttt{Yes}/\texttt{No} predictions (treating \texttt{Yes} as the positive class):
\[
\text{Accuracy}=\frac{\#\text{correct}}{\#\text{total}},\quad
\text{Precision}=\frac{TP}{TP+FP},\quad
\text{Recall}=\frac{TP}{TP+FN},\quad
\text{F1}=\frac{2\cdot \text{Precision}\cdot \text{Recall}}{\text{Precision}+\text{Recall}}.
\]

\paragraph{CHAIR.}
Caption Hallucination Assessment with Image Relevance (CHAIR)~\citep{rohrbach2018object} is a benchmark for measuring object hallucination in image captioning. It prompts LVLMs to generate a description for each input image and compares the mentioned objects against the ground-truth object set (e.g., MS-COCO annotations). Using the CHAIR object lexicon to map noun phrases in the caption to object categories, an object is counted as hallucinated if it is mentioned by the model but does not appear in the ground truth. CHAIR quantifies hallucination at both the instance and sentence levels:
$$
\mathrm{CHAIR}_I = \frac{|\{\text{hallucinated objects}\}|}{|\{\text{all mentioned objects}\}|},\quad
\mathrm{CHAIR}_S = \frac{|\{\text{captions with hallucinated objects}\}|}{|\{\text{all captions}\}|}.
$$
Lower values indicate fewer hallucinations. To evaluate whether captions still cover the necessary visual content, we additionally report instance-level recall:
$$
\mathrm{Recall} = \frac{|\{\text{non-hallucinated objects}\}|}{|\{\text{all existing objects}\}|}.
$$
We also report the average response length (\textbf{Len}), computed as the mean number of generated tokens per caption, to control for potential verbosity changes across methods.

\paragraph{MMHal-Bench.}
MMHal-Bench~\citep{sun2023mmhal_bench} is a specialized benchmark for evaluating multimodal hallucination in LVLMs, consisting of 96 carefully designed image--question pairs spanning eight categories: object attributes (ATTR), adversarial objects (ADV), comparisons (COMP), counting (COUNT), spatial relations (SPAT), environmental inferences (ENV), holistic descriptions (HOL), and others (OTHER).
Following prior work, model outputs are judged by GPT-4 against the reference answer and associated object information, producing an overall score $s_i$ on a 0--5 scale (higher is better).
We report the \textbf{Overall Score} and \textbf{Hallucination rate (Hu.\%)}:
\[
\text{Score}=\frac{1}{N}\sum_{i=1}^{N} s_i, \qquad
\text{Hu.\%}= \frac{1}{N}\sum_{i=1}^{N}\mathbb{I}[s_i < 3]\times 100,
\]
where $N$ is the number of examples and responses with $s_i<3$ are counted as hallucinations.

\paragraph{AMBER.}
An LLM-free Multi-dimensional Benchmark (AMBER)~\citep{wang2023amber} evaluates multimodal hallucinations without relying on LLM-based judges. It contains \textbf{1{,}004 }manually curated images with structured annotations spanning object existence, attributes (e.g., state/number/action), relations (e.g., direct contact), and per-image hallucinatory target objects, covering 337 object categories. AMBER supports both generative and discriminative evaluation: in the generative setting, models produce free-form captions (e.g., ``Describe this image.''), and AMBER extracts mentioned objects with an official lexicon; in the discriminative setting, AMBER constructs \textbf{10{,}586} binary (\texttt{Yes}/\texttt{No}) probes over existence/attributes/relations and counterfactual hallucinatory targets.
For generative evaluation, let $A_i$ be the ground-truth object set, $\hat{A}_i$ the extracted mentioned-object set, $H_i=\hat{A}_i\setminus A_i$ the hallucinated-object set, and $T_i$ the hallucinatory target set for image $i$. AMBER reports:
\[
\mathrm{CHAIR}=\frac{1}{N}\sum_{i=1}^{N}\frac{|H_i|}{|\hat{A}_i|},\quad
\mathrm{Cover}=\frac{1}{N}\sum_{i=1}^{N}\frac{|\hat{A}_i\cap A_i|}{|A_i|},\quad
\mathrm{Hal}=\frac{1}{N}\sum_{i=1}^{N}\mathbb{I}[|H_i|>0],\quad
\mathrm{Cog}=\frac{1}{N}\sum_{i=1}^{N}\frac{|\hat{A}_i\cap T_i|}{|T_i|}.
\]
For discriminative evaluation, AMBER reports standard classification scores (Acc./F1), and further summarizes overall performance with the aggregated \textbf{AMBER Score}:
\[
\mathrm{AMBER\ Score}=\frac{1}{2}\bigl(100-\mathrm{CHAIR}+\mathrm{F1}\bigr).
\]

\subsection{Models and Experiments Setup}
\label{app:exp_setup}
Most experiments are conducted on two NVIDIA A40 GPUs ( 48\,GB memory per GPU).
For the larger LLaVA-NeXT-34B and Qwen2.5-VL-32B-Instruct models, we use a single NVIDIA RTX PRO 6000 Blackwell GPU (96\,GB memory).

\paragraph{Model Architectures.}
In Table~\ref{tab:architecture}, we present the detailed architectures of the LVLMs used in our experiments, where the gray rows correspond to the backbones used in our main experiments. These models are based on the Vision Transformer (ViT) \citep{dosovitskiy2020image} and employ pre-trained vision encoders from various sources.

\begin{table}[H]
\centering
\small
\setlength{\tabcolsep}{4pt}
\renewcommand{\arraystretch}{1.05}
\caption{Details of the LVLM architectures used in our experiments.}
\begin{adjustbox}{max width=\linewidth}
\begin{tabular}{c|c|c}
\toprule
\textbf{Model} & \textbf{Vision encoder} & \textbf{LLM} \\
\midrule

\rowcolor{gray!15}
LLaVA-1.5-7B \citep{liu2024improved_llava}
& CLIP ViT-L/14 (336px) \citep{radford2021learning}
& Vicuna-v1.5-7B \citep{vicuna2023} \\

\rowcolor{gray!15}
Qwen-VL-Chat-7B \citep{bai2023qwen_vl}
& OpenCLIP ViT-bigG \citep{ilharco2021openclip}
& Qwen-7B \\

\rowcolor{gray!15}
Qwen2.5-VL-7B-Instruct \citep{bai2025qwen2_5_vl}
& ViT w/ 2D-RoPE \& window attention
& Qwen2.5-7B \\

LLaVA-1.5-13B \citep{liu2024improved_llava}
& CLIP ViT-L/14 (336px) \citep{radford2021learning}
& Vicuna-v1.5-13B \citep{vicuna2023} \\

LLaVA-NeXT-34B \citep{li2024llavanext-strong}
& CLIP ViT-L/14 (336px) \citep{radford2021learning}
& Nous-Hermes-2-Yi-34B (Yi-34B family) \\

Qwen2.5-VL-32B-Instruct \citep{bai2025qwen2_5_vl}
& ViT w/ 2D-RoPE \& window attention
& Qwen2.5-32B \\
\bottomrule
\end{tabular}
\end{adjustbox}
\label{tab:architecture}
\end{table}

\paragraph{Hyperparameters.}
Our hyperparameters include the intervened layer range \((L_{\text{start}}, L_{\text{end}})\), the number of selected heads \(k\), and the gating strength \(\gamma\).
Concretely, on the POPE \texttt{popular} subset, we intervene on mid-to-upper transformer layers, using \((L_{\text{start}}, L_{\text{end}})=(8,19)\) for LLaVA-1.5-7B, \((10,24)\) for Qwen-VL-Chat, and \((9,17)\) for Qwen2.5-VL-7B-Instruct.
At each intervened layer, we select the top-K heads with the smallest VRI values and apply text-route gating with strength \(\gamma\).
The optimal \(k\) and \(\gamma\) across benchmarks are summarized in Table~\ref{tab:hparam_k_gamma}.

\begin{table}[H]
\centering
\small
\setlength{\tabcolsep}{4pt}
\renewcommand{\arraystretch}{1.05}
\caption{Hyperparameter search selected for $k$ and $\gamma$ across benchmarks.}
\begin{adjustbox}{max width=\linewidth}
\begin{tabular}{c|c|c|c|c|c|c|c|c|c}
\toprule
\textbf{Model} & \textbf{parameter} & \textbf{POPE (random)} & \textbf{POPE (popular)} & \textbf{POPE (adversarial)} & \textbf{MME} & \textbf{CHAIR} & \textbf{MMHal-Bench} & \textbf{AMBER-G} & \textbf{AMBER-D} \\
\midrule
\multirow{2}{*}{\textbf{LLaVA-1.5-7B}} 
& $k$       & 7 & 11 & 9 & 12 & 11 & 8 & 9 & 10\\
& $\gamma$  & 0.6 & 0.5 & 0.5 & 0.5 & 0.7 & 0.5 & 0.6 & 0.7\\
\midrule
\multirow{2}{*}{\textbf{Qwen-VL-Chat-7B}} 
& $k$       & 10 & 8 & 12 & 10 & 8 & 9 & -- & --\\
& $\gamma$  & 0.4 & 0.7 & 0.5 & 0.4 & 0.6 & 0.6 & -- & -- \\
\midrule
\multirow{2}{*}{\textbf{Qwen2.5-VL-7B-Instruct}} 
& $k$       & 10 & 9 & 11 & -- & -- & -- & -- & --\\
& $\gamma$  & 0.4 & 0.6 & 0.5 & -- & -- & -- & -- & -- \\
\bottomrule
\end{tabular}
\end{adjustbox}
\label{tab:hparam_k_gamma}
\end{table}

\subsection{Choice of The Scalar Objective \(\ell\)}
\label{app:ell}
To compute decision-aligned route effects, we require a scalar objective \(\ell\) whose gradient reflects the model's preference for the target decision at the current decoding step. 
We instantiate \(\ell\) differently for discriminative (binary) and generative benchmarks, matching each evaluation protocol.

\paragraph{Discriminative tasks (binary Yes/No).}
For discriminative hallucination benchmarks such as POPE, MME and the binary subset of AMBER, the model is prompted to output \emph{only} ``Yes'' or ``No''.
Accordingly, we define the scalar target as the log-odds margin
\[
\ell \;=\; \log p(\textsc{Yes}\mid x)\;-\;\log p(\textsc{No}\mid x),
\]
where \(p(\cdot\mid x)\) is the next-token distribution conditioned on the multimodal input \(x\).
This choice is both \emph{decision-aligned} and \emph{interpretable}: \(\ell>0\) indicates a higher preference for ``Yes'' than ``No'', while \(\ell<0\) indicates the opposite.
In particular, when the candidate set is restricted to \(\{\textsc{Yes},\textsc{No}\}\), we have
\[
\ell \;=\; \log \frac{p(\textsc{Yes}\mid x)}{p(\textsc{No}\mid x)},
\]
so changes in \(\ell\) directly correspond to how an intervention shifts the binary decision boundary.
Therefore, the sign and magnitude of the estimated causal contribution under this \(\ell\) admit a straightforward interpretation: a positive contribution supports the ``Yes'' decision, whereas a negative contribution supports ``No''.

\paragraph{Empirical note.}
We also experimented with using \(\ell=\log p(y_t\mid x)\) for discriminative tasks, but found it consistently inferior to the log-odds margin \(\log p(\textsc{Yes}\mid x)-\log p(\textsc{No}\mid x)\), likely because the margin explicitly accounts for the competing alternative and thus better matches the binary decision structure.

\paragraph{Generative tasks (open-ended decoding).}
For open-ended generation, we apply the intervention \emph{token-by-token} and define the step-wise target as
\[
\ell_t \;=\; \log p\!\left(y_t \mid x, y_{<t}\right),
\]
where \(y_t\) denotes the token selected at decoding step \(t\) (under the chosen decoding rule), and \(p(\cdot\mid x,y_{<t})\) is the next-token distribution.
Intuitively, \(\ell_t\) measures how strongly the model commits to the current token, enabling us to quantify whether the visual/textual routes \emph{support} or \emph{contradict} this local decision at step \(t\).

Operationally, at each step \(t\) we (i) run a standard one-step decode forward to obtain \(p(\cdot\mid x,y_{<t})\) and the selected token \(y_t\); (ii) compute the local sensitivity for this decision via a single backward pass,
\[
G_t \;=\; \frac{\partial \ell_t}{\partial \tilde O_t},
\]
where \(\tilde O_t\) denotes the intervened route/head activations at step \(t\); (iii) aggregate the resulting head-/route-level effects into \(r_t\) and apply the gate; and finally (iv) re-run a lightweight one-step forward with the gate applied to produce the actual token emission.
This procedure requires two forward passes and one gradient computation per generated token, but the gradient is taken only through a \emph{single-step} decode (with the prefix KV-cache already built), so its overhead is modest: in our profiling, the autograd component takes 8.77 ms per step, accounting for 6.30\% of the total token-generation time, while computing \(r_t\) and applying the gate are lightweight.
Moreover, the memory footprint is close to regular decoding: we do not retain computation graphs across steps, so the backward-related activations are transient, and the dominant KV-cache growth remains essentially the same as the baseline decode.

\paragraph{Token filtering and object-only intervention.}
A practical concern in token-by-token intervention is that many generated tokens are weakly semantic (e.g., punctuation such as ``,'', ``.''), for which estimating route effects may be unnecessary and could add noise.
We therefore explored two variants: (i) skipping intervention on such non-informative tokens; and (ii) performing intervention only when the current token corresponds to an object word.
For (ii), we leverage the CHAIR object vocabulary and apply the intervention only if the selected token \(y_t\) matches an entry in this list.
While this object-only strategy is straightforward on CHAIR-style evaluation where the object set is predefined, it is less practical in open-world generation because the space of possible objects is unbounded and the relevant lexicon is not known a priori.
Empirically, we find that restricting intervention to object tokens yields performance close to our default full-token intervention, suggesting that the gains are largely driven by object-related decisions and that our method is not overly sensitive to intervening on punctuation or other low-content tokens.

\section{Additional Experiments}
\label{app:additional_exp}

\subsection{Results on More Advanced LVLMs}
\label{app:more_advanced_model}
To assess whether our intervention generalizes beyond the primary 7B backbones, we further evaluate it on several stronger LVLMs with different architectures and larger model scales.
Table~\ref{tab:advanced_models} reports POPE-average results on more advanced LVLMs, where each entry is \textit{Accuracy / F1}.
Overall, our method yields consistent gains across both LLaVA-style and Qwen-VL-style backbones, including larger-capacity models (LLaVA-1.5-13B, Qwen2.5-VL-32B-Instruct, and LLaVA-NeXT-34B).
This suggests that the intervention does not rely on a particular model architecture or a specific pretraining recipe, but instead captures a more general property of how LVLMs mix visual and textual evidence during decision making.

Notably, the improvements persist as model scale increases.
While stronger backbones typically improve general instruction-following and linguistic priors, they may still exhibit ``over-confident yes'' tendencies under weak visual evidence in POPE-style settings.
Our method counteracts such bias by suppressing visually unsupported routes at decoding time, leading to simultaneous improvements in Accuracy and F1.
Importantly, our method improves both metrics for all evaluated models, implying that it does not trade off correctness for conservativeness (or vice versa) on POPE-average.
These results further support the robustness and general applicability of our approach beyond the 7B setting.

\begin{table}[H]
\centering
\small
\setlength{\tabcolsep}{4pt}
\renewcommand{\arraystretch}{1.05}
\caption{\textbf{Results on more advanced LVLMs (POPE-average).}
Each entry is Accuracy / F1-Score.
Our method consistently improves both accuracy and F1 across larger backbones.}
\begin{tabular}{lccccc}
\toprule
\textbf{Method} &
\textbf{LLaVA-1.5-7B} &
\textbf{Qwen2.5-VL-7B-Instruct} &
\textbf{LLaVA-1.5-13B} &
\textbf{Qwen2.5-VL-32B-Instruct} &
\textbf{LLaVA-NeXT-34B} \\
\midrule
\rowcolor{gray!15}
Regular & 81.37 / 79.65 & 83.99 / 84.31 & 83.05 / 82.81 & 84.25 / 85.31 & 83.17 / 84.79 \\
CRG    & 87.71 / 86.61 & 89.72 / 88.55 & 87.87 / 87.94 & 89.92 / 88.93 & 89.74 / 87.73 \\
\bottomrule
\end{tabular}
\label{tab:advanced_models}
\end{table}

\subsection{Implementation with VCD}
\label{app:imple_vcd}

To examine whether our conflict-aware text-route gating can be combined with existing training-free decoding heuristics, we integrate our method with VCD \cite{leng2024vcd}, a representative contrastive decoding strategy. Below we briefly summarize VCD and then describe a simple way to apply it on top of our method.

\paragraph{Visual Contrastive Decoding.}
VCD \cite{leng2024vcd} is a training-free decoding method that mitigates hallucinations by contrasting the model's token preferences under the original image and a visually degraded counterfactual.
At each decoding step $t$, VCD computes logits from the original image $I$ and a perturbed image $I^{-}$ (e.g., blurred or noised), denoted by
$z_t^{+}=\mathrm{logits}(I, y_{<t})$ and $z_t^{-}=\mathrm{logits}(I^{-}, y_{<t})$, respectively.
It then forms a contrastive logit for sampling/greedy decoding:
$$
\tilde z_t
= z_t^{+} + \alpha\,(z_t^{+}-z_t^{-})
= (1+\alpha)z_t^{+}-\alpha z_t^{-},
$$
where $\alpha \ge 0$ controls the contrast strength.
Intuitively, tokens that remain highly probable even under the weakened visual evidence $I^{-}$ are more likely driven by language priors; subtracting $z_t^{-}$ suppresses such prior-dominant predictions and promotes image-grounded alternatives.

\paragraph{VCD on top of our conflict-aware text-route gating.}
We incorporate VCD on top of our Conflict A+B text-route gating by using our method as the positive branch and a regular model as the negative branch.
Concretely, for the original image $I$, we enable Conflict A+B gating to obtain
$z_t^{\mathrm{CRG}}=\mathrm{logits}(\mathrm{CRG}(I), y_{<t})$.
For the counterfactual image $I^{-}$, we follow the standard VCD construction but disable CRG (i.e., keep the original model unchanged) to preserve a strong language-prior reference,
yielding $z_t^{\mathrm{Reg},-}=\mathrm{logits}(\mathrm{Regular}(I^{-}), y_{<t})$.
We then apply the same contrastive fusion:
$$
\tilde z_t
= z_t^{\mathrm{CRG}} + \alpha\,(z_t^{\mathrm{CRG}}-z_t^{\mathrm{Reg},-}).
$$
This design combines conflict-aware suppression on the original image with contrastive penalization of tokens that are insensitive to visual perturbations, providing potentially complementary signals during generation.

We evaluate the proposed combination on both LLaVA-1.5-7B and Qwen-VL-Chat over the three POPE subsets (random, popular, adversarial).
As shown in Table~\ref{tab:vcd_pope_main_results}, \textsc{CRG$_{\text{+VCD}}$} brings only marginal changes relative to \textsc{CRG}: it slightly improves accuracy in most settings, while the F1 score can be comparable or slightly lower on some subsets.

\begin{table}[H]
\centering
\small
\setlength{\tabcolsep}{6pt}
\renewcommand{\arraystretch}{1.05}
\caption{POPE results for LLaVA-1.5-7B and Qwen-VL-Chat, comparing \textsc{Regular}, \textsc{VCD}, \textsc{CRG} (ours) and \textsc{CRG$_{\text{+VCD}}$}.}
\begin{tabular}{llcccc}
\toprule
\multirow{2}{*}{Setting} & \multirow{2}{*}{Method}
& \multicolumn{2}{c}{LLaVA-1.5-7B}
& \multicolumn{2}{c}{Qwen-VL-Chat} \\
\cmidrule(lr){3-4}\cmidrule(lr){5-6}
& & Accuracy & F1-Score
& Accuracy & F1-Score \\
\midrule

\rowcolor{gray!15}\multirow{4}{*}{\cellcolor{white}Random}
& Regular & 83.29 & 81.33 & 84.63 & 82.61 \\
& VCD     & 87.73 & 87.16 & 86.93 & 85.46 \\
& \textbf{CRG (ours)} & 90.30 & \textbf{89.51} & \textbf{89.46} & 88.33 \\
& \textbf{CRG$_{\text{+VCD}}$} & \textbf{90.43} & 89.30 & 89.43 & \textbf{88.63} \\
\midrule

\rowcolor{gray!15}\multirow{4}{*}{\cellcolor{white}Popular}
& Regular & 81.88 & 80.06 & 83.63 & 81.53 \\
& VCD     & 85.38 & 85.06 & 85.17 & 83.68 \\
& \textbf{CRG (ours)} & 88.40 & \textbf{86.54} & 87.63 & \textbf{86.55} \\
& \textbf{CRG$_{\text{+VCD}}$} & \textbf{88.69} & 86.04 & \textbf{87.66} & 86.46 \\
\midrule

\rowcolor{gray!15}\multirow{4}{*}{\cellcolor{white}Adversarial}
& Regular & 78.96 & 77.57 & 81.03 & 79.30 \\
& VCD     & 80.88 & 81.33 & 83.10 & 82.04 \\
& \textbf{CRG (ours)} & 84.43 & 83.77 & 84.70 & 83.78 \\
& \textbf{CRG$_{\text{+VCD}}$} & \textbf{84.97} & \textbf{83.81} & \textbf{84.76} & \textbf{83.84} \\
\bottomrule
\end{tabular}
\label{tab:vcd_pope_main_results}
\end{table}

\subsection{Detailed Results of AMBER \& MMHal-Bench}
\label{app:detail_amber}

\paragraph{Detailed Results of MMHal-Bench.}
As reported in Table~\ref{tab:mmhal}, our method achieves the best results on both backbones. On LLaVA-1.5-7B, we improve the score from 2.23 to 2.69 while reducing Hu.\% from 65.3 to 50.9; on Qwen-VL-Chat, the score increases from 2.27 to 2.80 with Hu.\% dropping from 58.2 to 48.8. Compared with the strongest baseline (VTI), our approach still yields consistent gains in both Score and Hu.\%, indicating more informative generations with fewer hallucinations.

\begin{table}[H]
\centering
\setlength{\tabcolsep}{4pt}
\caption{Results on MMHal-Bench (evaluated by GPT-4)}
\begin{tabular}{@{}l c c c c@{}}
\toprule
\multirow{2}{*}{\textbf{Method}} &
\multicolumn{2}{c}{\textbf{LLaVA-1.5-7B}} &
\multicolumn{2}{c}{\textbf{Qwen-VL-Chat}} \\
\cmidrule(lr){2-3}\cmidrule(lr){4-5}
& Score\(\uparrow\) & Hu.\%\(\downarrow\) & Score\(\uparrow\) & Hu.\%\(\downarrow\) \\
\midrule
\rowcolor{gray!15}
Regular & 2.23 & 65.3 & 2.27 & 58.2 \\
VCD    & 2.31 & 58.7 & 2.32 & 56.5 \\
OPERA  & 2.40 & 56.4 & 2.49 & 55.3 \\
VTI    & 2.51 & 53.7 & 2.62 & 51.4 \\
CRG   & \textbf{2.69} & \textbf{50.9} & \textbf{2.80} & \textbf{48.8} \\
\bottomrule
\end{tabular}
\label{tab:mmhal}
\end{table}

\paragraph{Detailed Results of AMBER.}
Table~\ref{tab:amber_llava7b} reports the per-metric results on AMBER for LLaVA-1.5-7B.
Compared with regular decoding and PAI, our method substantially reduces object hallucination (CHAIR 8.3 to 4.6, Hal 36.7 to 23.2) while improving coverage and decision quality (Cover 49.4 to 53.3, F1 73.7 to 77.5).
As a result, it achieves the best overall AMBER score (82.70 to 86.45).

\begin{table}[H]
\centering
\small
\setlength{\tabcolsep}{4pt}
\renewcommand{\arraystretch}{1.05}
\caption{Results on AMBER (LLaVA-1.5-7B). The AMBER metric is computed as \( (100-\text{CHAIR}+F1)/2 \).}
\begin{tabular}{ll|cccc|cccc|c}
\toprule
\textbf{Model} & \textbf{Method} & \textbf{CHAIR}\(\downarrow\) & \textbf{Cover}\(\uparrow\) & \textbf{Hal}\(\downarrow\) & \textbf{Cog}\(\downarrow\)
& \textbf{Acc.}\(\uparrow\) & \textbf{Prec.}\(\uparrow\) & \textbf{Rec.}\(\uparrow\) & \textbf{F1}\(\uparrow\) & \textbf{AMBER}\(\uparrow\) \\
\midrule

\rowcolor{gray!15}
\multirow{3}{*}{\cellcolor{white}LLaVA-1.5-7B} & Regular
& 8.3 & 49.4 & 36.7 & 4.4 & 72.7 & 84.7 & 62.1 & 73.7 & 82.70 \\
& PAI
& 6.7 & 48.8 & 42.3 & \textbf{1.9} & 74.3 & 87.2 & \textbf{73.8} & 74.8 & 84.05 \\
& CRG
& \textbf{4.6} & \textbf{53.3} & \textbf{23.2} & 2.1 & \textbf{77.4} & \textbf{92.3} & 72.1 & \textbf{77.5} & \textbf{86.45} \\
\bottomrule
\end{tabular}
\label{tab:amber_llava7b}
\end{table}

\subsection{Supplementary POPE Comparisons with Recent Methods}
\label{app:supp_pope_comparison}

Table~\ref{tab:supp_pope_recent_methods} also compares CRG with recent training-free hallucination mitigation methods on the standard POPE benchmark under the MS-COCO setup. 
These methods include ONLY~\citep{only2025onelayer}, CAUSALMM~\citep{zhou2025mitigating}, ICT~\citep{ict2025imageobject}, and DMAS~\citep{dmas2026dynamic}.
The Average columns are arithmetic means over the three standard POPE splits: Random, Popular, and Adversarial.
The CRG row uses the results reported in this submission.

\begin{table*}[t]
\centering
\small
\setlength{\tabcolsep}{4pt}
\renewcommand{\arraystretch}{1.08}
\caption{Supplementary POPE results on MS-COCO. Each split reports Accuracy and F1-Score.}
\label{tab:supp_pope_recent_methods}
\begin{tabular}{lcccccccc}
\toprule
\multirow{2}{*}{\textbf{Method}} &
\multicolumn{2}{c}{\textbf{Random}} &
\multicolumn{2}{c}{\textbf{Popular}} &
\multicolumn{2}{c}{\textbf{Adversarial}} &
\multicolumn{2}{c}{\textbf{Average}} \\
\cmidrule(lr){2-3}\cmidrule(lr){4-5}\cmidrule(lr){6-7}\cmidrule(lr){8-9}
& Acc. & F1 & Acc. & F1 & Acc. & F1 & Acc. & F1 \\
\midrule
ONLY~\citep{only2025onelayer} & 89.70 & 89.10 & 86.00 & 86.31 & 79.40 & 81.07 & 85.03 & 85.49 \\
CAUSALMM~\citep{zhou2025mitigating} & 88.93 & 88.10 & 87.13 & 87.26 & 83.70 & 82.78 & 86.59 & 86.05 \\
ICT~\citep{ict2025imageobject} & 90.11 & 90.03 & 87.50 & 87.60 & 84.43 & 83.74 & 87.35 & 87.12 \\
DMAS~\citep{dmas2026dynamic} & -- & -- & -- & -- & -- & -- & 86.81 & 86.79 \\
\rowcolor{gray!15}
CRG (this work) & \textbf{90.30} & 89.51 & \textbf{88.40} & 86.54 & \textbf{84.43} & \textbf{83.77} & \textbf{87.71} & 86.61 \\
\bottomrule
\end{tabular}
\end{table*}

\paragraph{Relation to process- and attention-based signals.}
Recent work also suggests that process-level or attention-based signals can be useful.
For example, Knowledge Transfer from Interaction Learning~\citep{gao2025knowledge} emphasizes the value of interaction processes rather than only final representations.
In hallucination mitigation, Modality Bias in LVLMs~\citep{zheng2025modality} uses attention-lens analysis to rebalance modality usage, and AdaIAT~\citep{adaiat2026adaiat} adaptively modifies attention to generated text to reduce hallucination while preserving fluency.
These directions are complementary to CRG.
Our claim is not that attention or interaction signals are useless; rather, attention allocation alone is insufficient as the primary decision-aligned criterion for selective gating.
CRG uses route effects to determine whether a route helps or hurts the current decision, while attention/process signals can provide complementary diagnostics or future triggers.

\section{Limitations and More Discussion}
\label{app:limitations_more_discussion}

\subsection{Scope Boundary of CRG}
\label{app:scope_boundary}

CRG should be read as a conflict detector and intervention, not as a general hallucination corrector.
It targets cases in which visual evidence is available, but a prior-driven text route dominates the current token decision.
This is the setting our route decomposition can identify and act on: the visual and text routes push the decision in opposite directions, and suppressing the selected text route can restore a more visually grounded choice.
Errors whose cause does not pass through such visual-text route conflict are outside the primary mechanism.

The key non-target case is same-direction bias, which is different from the target failure mode addressed by CRG.
CRG is designed for cases where usable visual evidence is present in the model but is overridden at the token decision by a stronger prior-driven text route.
In this regime, the visual and text routes push in opposite directions, giving CRG a decision-aligned conflict signal: suppressing the selected text route allows the existing visual evidence to carry more influence.
By contrast, if both routes already support the same wrong answer, CRG has no internal conflict signal to exploit.
For example, in a rare-attribute case such as a pink banana, the visual representation may itself collapse to a stereotyped ``yellow banana'' feature, while the language prior also favors ``yellow banana''.
CRG cannot create missing visual counter-evidence or flip a visual route that is itself biased; such cases are missed-correction cases caused by deficient or biased visual encoding, rather than failures of the intended conflict-aware gating mechanism.
This distinction is why we do not make a universal claim over OCR, reasoning, or all attribute errors: CRG is expected to help when the error is mediated by a text-over-vision route conflict.

A second boundary appears when visual evidence is weak or ambiguous.
In these cases, language priors may provide useful contextual guesses, whereas CRG tends to suppress prior-driven affirmative answers when estimated visual support is weak.
The degraded-evidence case study in Appendix~\ref{app:case_studies} illustrates this behavior.
We therefore view this regime as a faithfulness--helpfulness tradeoff: CRG favors calibrated uncertainty and grounded faithfulness over unsupported, but sometimes useful, guessing.

This scope also determines how to interpret results on broader hallucination categories.
MME, MMHal-Bench, and AMBER include attributes, colors, spatial relations, OCR-sensitive categories, and higher-level multimodal judgments; CRG is expected to help on these categories only when the error is mediated by visual-text route conflict.
Errors dominated by poor perception, OCR, weak visual representations, or multi-step reasoning require complementary upstream or reasoning-oriented methods.
Finally, CRG is a white-box inference-time intervention requiring access to activations, gradients, and route-level patching, so it is aimed at open-weight or self-hosted LVLMs rather than closed-source API-only systems.
Adapting the idea to black-box settings would require output-level proxies, such as sensitivity under image/text perturbations, and is left for future work.

\subsection{Cross-Gate Interactions}
\label{app:cross_gate_interactions}

CRG ranks heads using local first-order route effects, but the final intervention applies multiple gates jointly.
This creates the possibility of higher-order interactions: suppressing one text route can change downstream hidden states, which may in turn change the effect of another gate in a later layer.
Thus, the first-order theory justifies the local ranking and selection signals, but it does not fully characterize all cross-gate couplings during the full autoregressive computation.

A local view makes this limitation explicit.
Let \(g\) collect the selected gate values and let \(\ell(g)\) denote the decision score after applying these gates.
Around the unmodified model \(g=\mathbf{1}\), we can write
\[
\ell(g)-\ell(\mathbf{1})
\approx
\sum_i
\left.
\frac{\partial \ell}{\partial g_i}
\right|_{\mathbf{1}}
(g_i-1)
+
\frac{1}{2}
\sum_{i\ne j}
\frac{\partial^2 \ell}{\partial g_i\partial g_j}(\tilde g)
(g_i-1)(g_j-1)
+\cdots .
\]
The first-order terms are the local signals used by CRG for ranking and selection.
The cross terms correspond to the concern that one gate may change the effect of another, especially across layers after the hidden state has already been modified.

We control this interaction risk through design choices that reduce both the number and the magnitude of possible cross terms, although they do not make the interaction terms vanish.
First, the intervention is sparse: CRG selects only a small number of low-VRI heads within the chosen layer range, instead of gating all conflicting heads.
Since the second-order sum involves pairs of selected gates, sparsity directly limits how many gate--gate interaction terms can contribute.
This also makes the intervention easier to monitor empirically, because failures caused by interactions would have to arise from a small selected set rather than from dense global suppression.

Second, each selected gate is route-specific and graded.
CRG modifies only the text-route contribution of a selected head, leaving the visual route and the rest of the head computation intact.
Thus, the perturbation is not a full head deletion but a partial change of the form \((g_i-1)O_i^{\mathrm{txt}}\).
In the cross term, this makes the effective interaction scale depend on the product of two graded deviations, \((g_i-1)(g_j-1)\), and on the text-route components being attenuated.
This is substantially milder than removing entire heads, which would perturb both visual and text routes and could introduce larger downstream state shifts.

Third, the effective gates concentrate in a contiguous mid-to-upper layer block.
This matters because perturbations in very early layers can propagate through many later transformations before another gate is applied, creating more opportunities for long-range accumulation.
By contrast, a localized mid-to-upper block applies the intervention after lower-level visual features have already been formed and over a shorter downstream path.
The gates can still interact within this block, but the design avoids broad, all-layer perturbations and reduces the chance that early route changes cascade through the full decoder before later gates act.

Empirically, strong negative complementarity would likely appear as brittle sensitivity to \(k\), \(\gamma\), or layer range, or as the full Conflict A+B intervention underperforming a single-branch variant.
Instead, Table~\ref{tab:ablation_llava15} shows that the combined A+B intervention is consistently strongest, while the layer-range and hyperparameter sweeps in Figures~\ref{fig:layer_ablation} and~\ref{fig:hyper} show smooth behavior.
We therefore view cross-gate interaction as a real but secondary limitation: it is not fully eliminated by the current formulation, but the sparse and graded design keeps it controlled in practice.

\subsection{Per-Token CRG in Open-Ended Generation}
\label{app:per_token_crg}

For open-ended generation, CRG is not based on a single attribution computed once and then reused for the entire answer.
Instead, it is recomputed at each decoding step \(t\): we define
\[
\ell_t=\log p(y_t\mid x,y_{<t}),
\]
estimate visual/text route effects for the current token, apply conflict-aware gating, and then proceed to the next step.
The final answer is therefore shaped by a sequence of local decision-time interventions.

At the same time, not every generated token is equally tied to the final grounded semantic decision.
In a sentence such as ``It is a lovely dog,'' content-bearing tokens such as ``dog'' are more directly grounded in the image, whereas function words such as ``It'', ``is'', and ``a'' mainly support linguistic realization and fluency.
This raises a valid concern: applying CRG at every token may introduce intervention on positions that are only weakly related to the final grounded content.

We examined three practical strategies on CHAIR using LLaVA-1.5-7B with \texttt{max\_new\_tokens}=128.
The first is an oracle-like vocabulary-triggered strategy that applies CRG only when the current token matches the closed CHAIR object vocabulary, i.e., the MS-COCO object categories and their synonyms.
The second is our default step-wise CRG, which applies CRG at every decoding step while still selecting heads and gates based on the current token's route conflict.
The third is a first-token-only variant motivated by the observation that early token distributions can contain response-level signals.

\begin{table}[H]
\centering
\small
\setlength{\tabcolsep}{5pt}
\renewcommand{\arraystretch}{1.08}
\caption{Comparison of token-level intervention strategies on CHAIR with LLaVA-1.5-7B. Lower \(C_s\) and \(C_i\) are better; higher recall is better.}
\label{tab:token_strategy_chair}
\begin{tabular}{lcccc}
\toprule
\textbf{Setting} & \(C_s\downarrow\) & \(C_i\downarrow\) & Recall\(\uparrow\) & Generalizable? \\
\midrule
CHAIR-vocabulary-triggered CRG & 33.4 & 10.6 & 78.6 & No \\
Step-wise CRG (ours) & 34.2 & 11.2 & 77.8 & Yes \\
First-token-only CRG & 45.3 & 14.2 & 77.1 & Yes \\
\bottomrule
\end{tabular}
\end{table}

The vocabulary-triggered variant is slightly stronger than step-wise CRG, which is expected because it uses benchmark-specific knowledge about which tokens are evaluated by CHAIR.
However, the gap is small, suggesting that the most important hallucination-related conflicts are concentrated near content-bearing positions even when CRG is evaluated at every step.
The first-token-only variant is more efficient and still meaningful, but it underperforms the two token-adaptive strategies, indicating that later decoding positions remain important because hallucinated objects and attributes often emerge after the first token.
We therefore use step-wise CRG as the default because it is general across open-ended generation settings without relying on benchmark-specific vocabularies, while acknowledging that more selective semantic triggering is a promising future direction.

\section{Computational Cost Analysis}
\label{app:cost}

Table~\ref{tab:timing_chair_dialog_512} reports a \emph{per-decoding-step} runtime breakdown on a CHAIR dialogue with \texttt{max\_new\_tokens}=512 using LLaVA-1.5-7B on a single A40 GPU under PyTorch eager attention.
Each entry reports the average per-decoding-step time of each module and its percentage share (averaged over the generated steps).

The per-step cost is dominated by standard model execution.
The baseline forward pass (\texttt{base\_forward}, 39.58\%) and the gated forward pass (\texttt{gated\_forward}, 42.83\%) together account for 82.41\% of the time (114.73 ms/step), reflecting that CRG performs an additional forward pass during decoding.
Estimator-specific overhead is modest: computing local sensitivities (\texttt{grad}, 6.30\%) and aggregating head-route effects (\texttt{compute\_CRE}, 2.84\%) sum to 9.14\% (12.72 ms/step).
The gate computation itself is lightweight (\texttt{gate\_compute}, 1.12\%; 1.56 ms/step), and the remaining overhead is grouped into \texttt{other} (7.34\%).

Overall, the breakdown confirms that the runtime overhead mainly comes from standard forward computation shared across methods, while the additional gradient- and CRE-related components remain a small fraction of the per-step latency.

\begin{table}[H]
\centering
\small
\setlength{\tabcolsep}{6pt}
\renewcommand{\arraystretch}{1.08}
\caption{Per-decoding-step runtime breakdown on a single CHAIR dialogue using \textbf{LLaVA-1.5-7B} on a single A40 GPU with \texttt{max\_new\_tokens}=512 under PyTorch eager attention. Each entry reports the average time per decoding step (averaged over the generated steps) and its percentage share. The total matches the per-step latency of CRG reported in Table~\ref{tab:complexity_memory}.}
\label{tab:timing_chair_dialog_512}
\begin{tabular}{lrr}
\toprule
\textbf{Block} & \textbf{Avg (ms/step)} & \textbf{Share (\%)} \\
\midrule
\textbf{base\_forward}   & 55.10 & 39.58 \\
\textbf{grad}           &  8.77 &  6.30 \\
\textbf{compute\_CRE}   &  3.95 &  2.84 \\
\textbf{gate\_compute}  &  1.56 &  1.12 \\
\textbf{gated\_forward} & 59.63 & 42.83 \\
\textbf{other}          & 10.22 &  7.34 \\
\midrule
\textbf{Total}          & \textbf{139.22} & \textbf{100.00} \\
\bottomrule
\end{tabular}
\end{table}

\noindent\textbf{Runtime and memory overhead.}
Table~\ref{tab:latency_memory} compares average latency (reported \emph{per decoding step}) and peak GPU memory under the same decoding setup for LLaVA-1.5-7B.
CRG achieves the best overall performance, improving POPE accuracy from 81.37 to 87.71 and reducing CHAIR \(C_s\) from 52.8 to 34.2.
In terms of efficiency, CRG incurs a \(2.27\times\) latency increase (139.2 ms/step), which is close to VCD (\(2.01\times\); 123.2 ms/step) and M3ID (\(2.03\times\); 124.5 ms/step), while OPERA and HALC are substantially slower (\(7.12\times\) and \(6.52\times\), respectively).
Notably, CRG introduces essentially no additional peak memory overhead (\(\times 1.00\); 14950 MB), whereas OPERA and HALC increase memory usage by more than \(1.5\times\).

\begin{table}[H]
\centering
\small
\setlength{\tabcolsep}{10pt}
\renewcommand{\arraystretch}{1.10}
\caption{Runtime and peak GPU memory overhead under the same decoding setup for LLaVA-1.5-7B.}
\begin{tabular}{lcccc}
\toprule
\textbf{Method} & \textbf{Avg. Latency}\(\downarrow\) & \textbf{GPU Memory}\(\downarrow\) & \textbf{POPE-Average}\(\uparrow\) & \textbf{CHAIR} $C_s$\(\downarrow\)\\
\midrule
\rowcolor{gray!15}
Regular     & 61.3 ms \,(\(\times 1.00\)) & 14945 MB \,(\(\times 1.00\)) & 81.37   &52.8     \\
VCD~\cite{leng2024vcd}        & \textbf{123.2 ms \,(\(\times 2.01\))}  & 15749 MB  \,(\(\times 1.05\))&  84.66   & 51.6  \\
M3ID~\cite{m3id_favero2024multi}        & 124.5 ms \,(\(\times 2.03\)) & 15575 MB \,(\(\times 1.04\)) & 85.39     & 48.3  \\
OPERA~\cite{huang2024opera}       & 435.7 ms \,(\(\times 7.12\)) & 22706 MB \,(\(\times 1.52\))& 85.69     & 44.6 \\
HALC~\cite{halc}       & 399.4 ms \,(\(\times 6.52\)) & 23084 MB \,(\(\times 1.54\))&  86.32    & 39.7     \\
\textbf{CRG} & 139.2 ms \,(\(\times 2.27\)) & \textbf{14950 MB \,(\(\times 1.00\))} & \textbf{87.71}  & \textbf{34.2} \\
\bottomrule
\end{tabular}
\label{tab:latency_memory}
\end{table}

\section{Case Studies}
\label{app:case_studies}
Figure~\ref{fig:case_study} serves as an illustrative case study that isolates the causal role of visual evidence by constructing controlled counterfactual/ambiguous variants: panel (a) is an author-taken photo, while panels (b--d) are generated with Gemini 3 Pro Image \cite{nanobanana} by removing the queried object, replacing it with a visually similar distractor, or degrading the evidence via blur. These examples are not included in quantitative evaluation.
In our framework, the decision is summarized by the logit gap \(m=\ell_{\text{yes}}-\ell_{\text{no}}\), and we estimate per-head, per-route causal gains \((\hat{\Delta}^{\mathrm{vis}}_{l,h},\hat{\Delta}^{\mathrm{txt}}_{l,h})\) as interventional changes in \(m\) under route-specific do-operations.
Intuitively, when the image truly contains sheep (a), both routes tend to support the same decision so the estimated gains are sign-aligned and we keep the head unchanged.
In contrast, in the counterfactual no-sheep setting (b) and the horse-as-distractor setting (c), regular decoding still produces an affirmative answer, which is consistent with a modality conflict pattern where the text route pushes \(m\) upward (\(\hat{\Delta}^{\mathrm{txt}}_{l,h}>0\)) despite weak or negative visual support (\(\hat{\Delta}^{\mathrm{vis}}_{l,h}\le 0\)).
Our intervention resolves this conflict by attenuating only the text route (via \(g^{\mathrm{txt}}_{l,h}\in[0,1]\)), thereby reducing unsupported increases in \(m\) and correcting false positives.
Finally, under ambiguous evidence (d), we observe the same mechanism: regular decoding commits to ``yes'', while our method becomes conservative because the estimated visual support is weak and conflicting, so suppressing the conflicting text-route prevents an over-confident affirmative claim.
\begin{figure}[H]
  \centering

  \begin{subfigure}[t]{0.28\linewidth}
    \centering
    \includegraphics[width=0.96\linewidth]{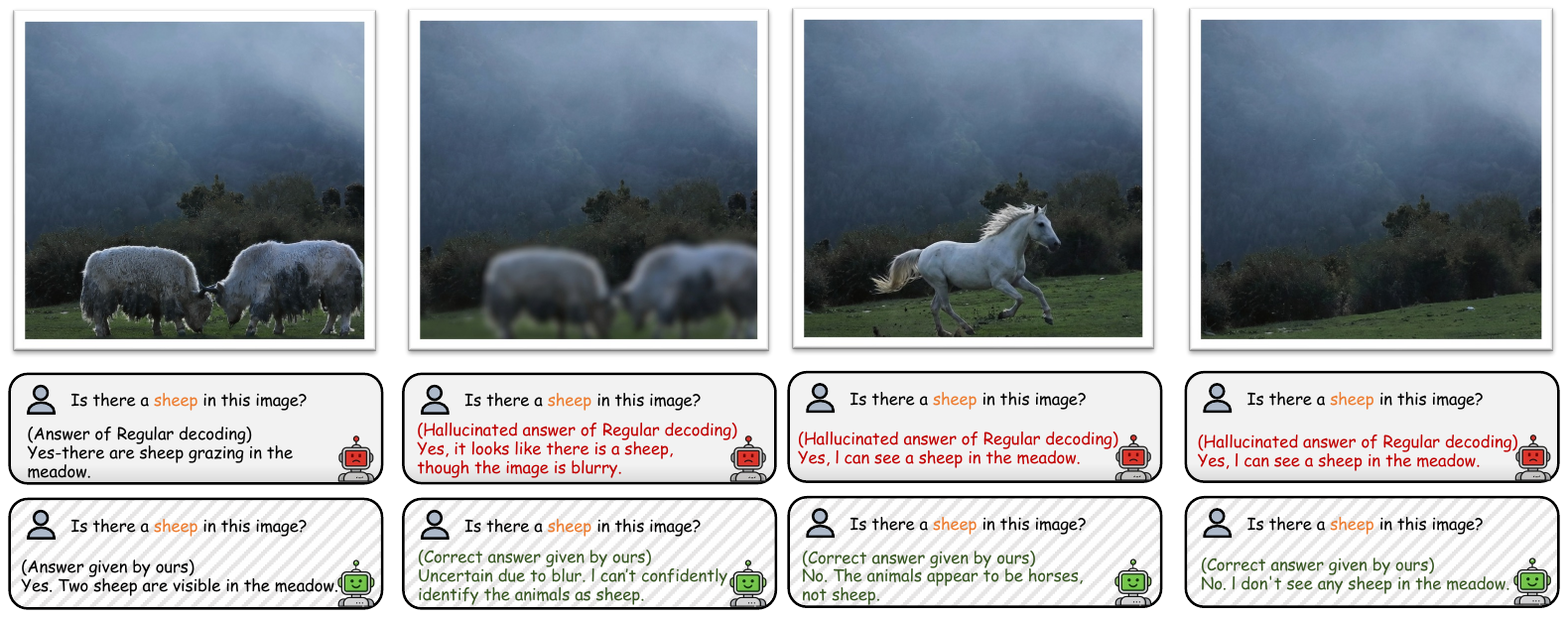}
    \caption{True positive preserved: sheep present; both methods answer correctly.}
    \label{fig:sub_a}
  \end{subfigure}
  \hspace{0.2\linewidth}
  \begin{subfigure}[t]{0.28\linewidth}
    \centering
    \includegraphics[width=0.96\linewidth]{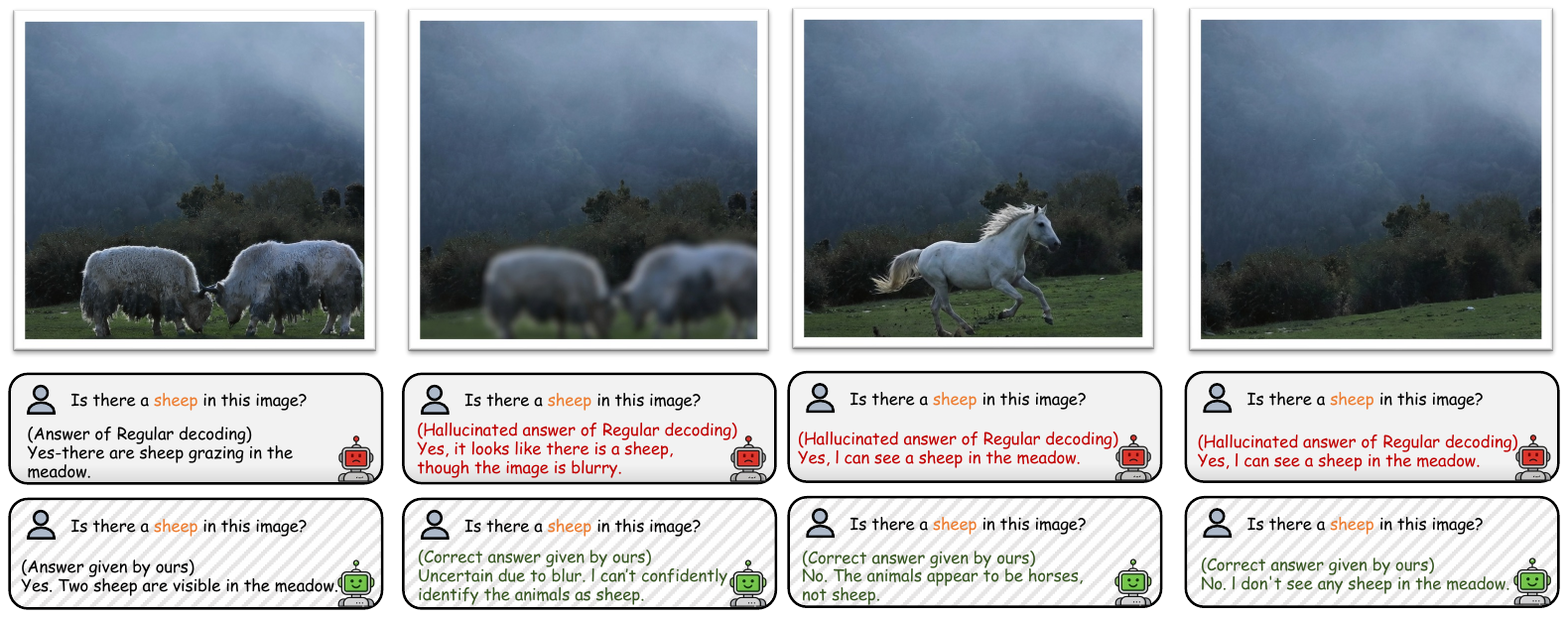}
    \caption{False positive corrected: no sheep in the image; regular decoding hallucinates ``yes''.}
    \label{fig:sub_b}
  \end{subfigure}

  \vspace{0.6em}

  \begin{subfigure}[t]{0.28\linewidth}
    \centering
    \includegraphics[width=0.96\linewidth]{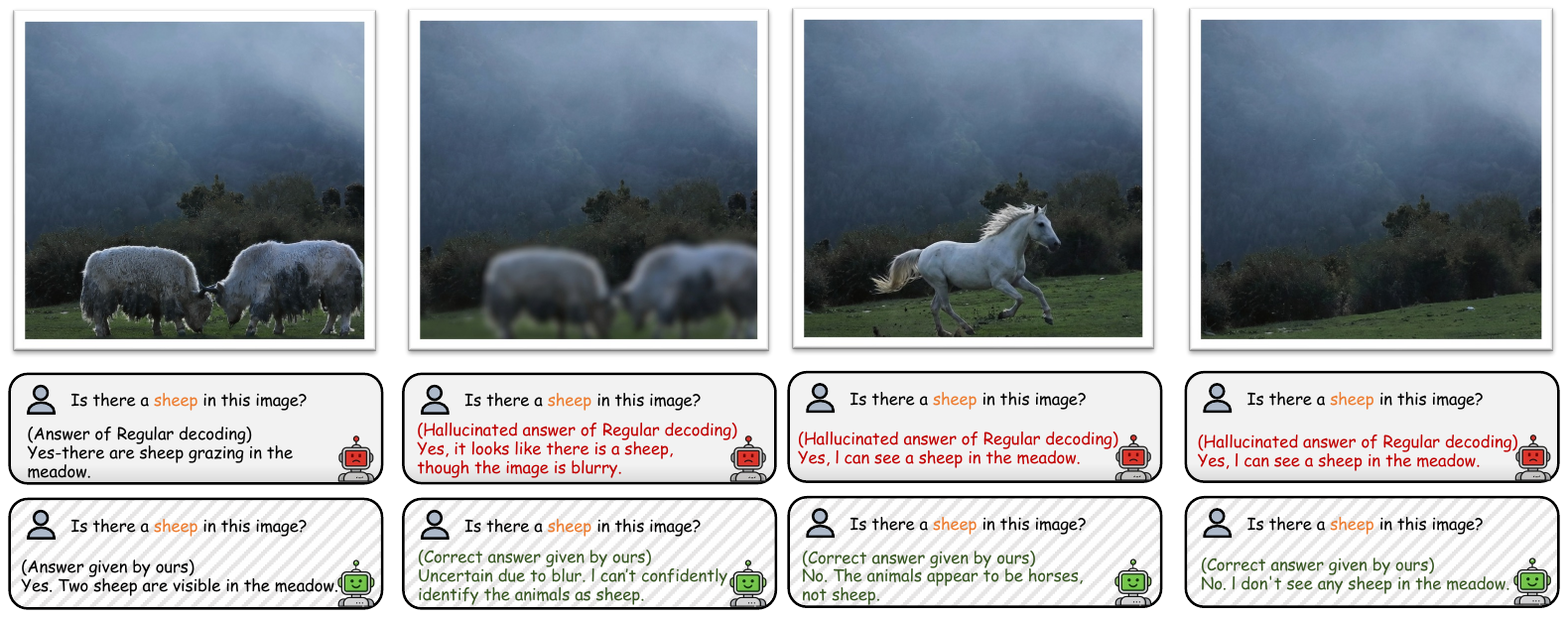}
    \caption{Object confusion corrected: horses are mistaken as sheep by regular decoding.}
    \label{fig:sub_c}
  \end{subfigure}
  \hspace{0.2\linewidth}
  \begin{subfigure}[t]{0.28\linewidth}
    \centering
    \includegraphics[width=0.96\linewidth]{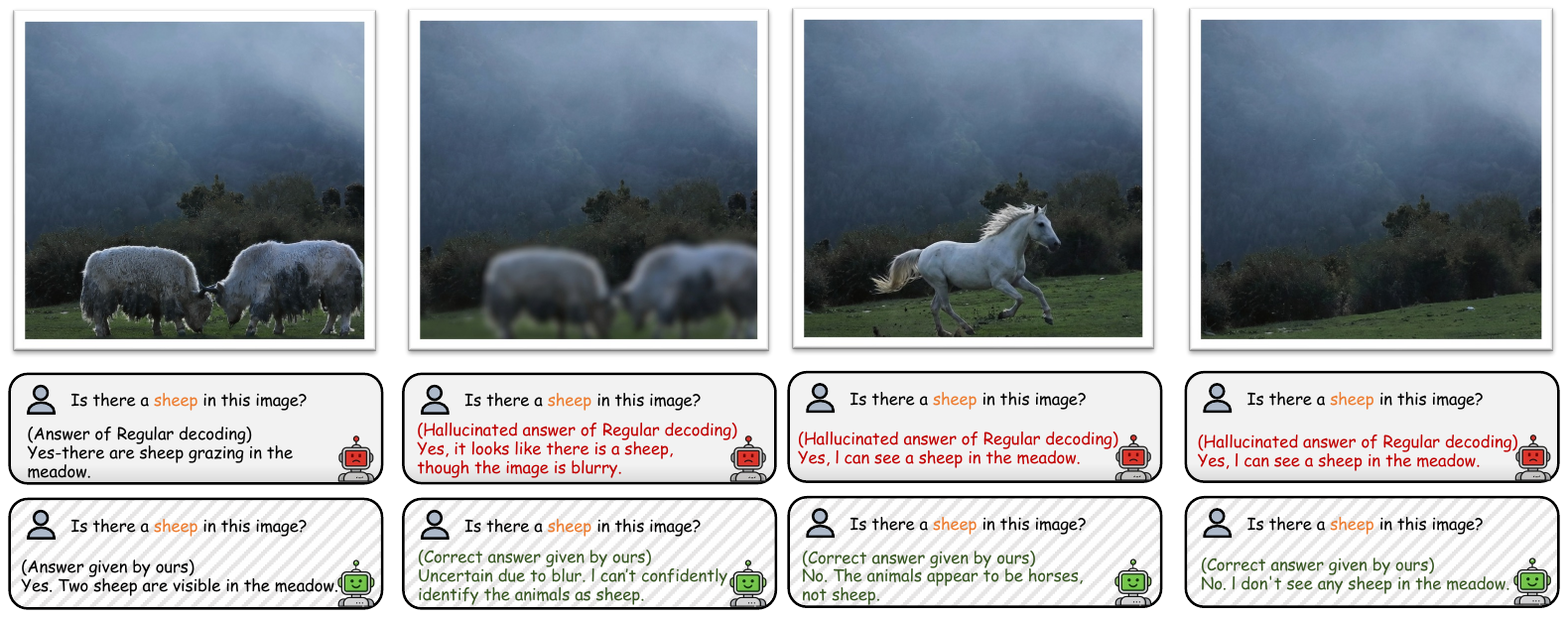}
    \caption{Cautious under weak evidence: the image is blurry; regular decoding still answers “yes”, while our method expresses uncertainty instead of making an unsupported claim.}
    \label{fig:sub_d}
  \end{subfigure}

  \caption{Case study with real and AI-generated images. Panel (a) is an author-taken photograph, while panels (b--d) are generated with Gemini 3 Pro Image \cite{nanobanana} to create controlled counterfactual or ambiguous variants for visualization. Regular decoding shows a tendency toward unsupported affirmative answers and object confusion, whereas our method suppresses over-confident ``yes'' responses and is conservative under weak evidence.}
  \label{fig:case_study}
\end{figure}

\end{document}